\documentclass[10pt,twocolumn,letterpaper]{article}

\usepackage[margin=1in]{geometry}
\usepackage[T1]{fontenc}
\usepackage[utf8]{inputenc}
\usepackage[hyphens]{url}
\usepackage{graphicx}
\usepackage{natbib}
\bibpunct{(}{)}{;}{a}{,}{,}
\usepackage{caption}
\usepackage{booktabs}
\usepackage{multirow}
\usepackage{pifont}
\usepackage{colortbl}
\usepackage{bm}
\usepackage{amssymb}
\usepackage{amsmath}
\usepackage{amsfonts}
\usepackage{amsthm}
\usepackage{makecell}
\usepackage{algorithm}
\usepackage{algorithmic}
\usepackage[table]{xcolor}
\usepackage{array}
\usepackage{adjustbox}
\usepackage{microtype}
\usepackage[hidelinks]{hyperref}

\newcolumntype{L}[1]{>{\raggedright\arraybackslash}p{#1}}
\newcommand{\cmark}{\ding{51}}
\newcommand{\xmark}{\ding{55}}
\newcommand{\method}{\textsc{COSMO}}

\newcommand{\KL}{\operatorname{KL}}
\newcommand{\softmax}{\operatorname{softmax}}
\newcommand{\simplex}{\Delta^{K-1}}
\newcommand{\ones}{\mathbf 1}
\newcommand{\uniform}{\mathbf u}
\DeclareMathOperator*{\argmax}{arg\,max}

\newtheorem{proposition}{Proposition}[section]
\newtheorem{lemma}[proposition]{Lemma}
\newtheorem{corollary}[proposition]{Corollary}
\theoremstyle{remark}

\title{COSMO: Consensus-Driven Shift Modulation for Source-Free Domain Adaptation}
\author{
Bo Li \quad Junjie Peng \quad Xiaohua Xie \quad Jianhuang Lai\\[2pt]
School of Computer Science and Engineering, Sun Yat-sen University\\
Guangzhou, China\\
\texttt{libo88@mail2.sysu.edu.cn}
}
\date{}

\begin{document}
\maketitle

\begin{abstract}
Source-free domain adaptation (SFDA) adapts a source-trained model to
an unlabeled target domain without source data, a practical setting
under privacy or storage constraints. Yet its self-generated
supervision can reinforce source bias under substantial domain shifts.
Pretrained vision--language models (VLMs) offer complementary semantic
knowledge, but the relative reliability of the source model and VLM
varies across target samples. Existing cross-model guidance does not
explicitly account for this variation and may overwrite valid
source-derived evidence under conflict, a failure we term
\emph{source-derived evidence forgetting}. We formulate VLM-guided
SFDA as a sample-wise reliability-allocation problem and propose
Consensus-Driven Shift Modulation (COSMO). COSMO replaces
expert-to-expert guidance with co-adaptation through an anchored shared
consensus. It first forms a sample-specific initial consensus that
favors the more concentrated prediction. During adaptation, COSMO
re-aggregates both branches' evolving evidence and regulates how far
the resulting consensus moves from its initial anchor based on
consensus uncertainty and training progress. This keeps the shared
supervision anchored yet adaptive. Across four benchmarks, COSMO
achieves state-of-the-art performance under matched VLM backbones.
Further analyses indicate that it better balances the retention of
valid source-derived evidence with the absorption of complementary VLM
evidence. 
\end{abstract}

\section{Introduction}
\label{sec:introduction}

\begin{figure}[t]
\centering
\includegraphics[width=\columnwidth]
{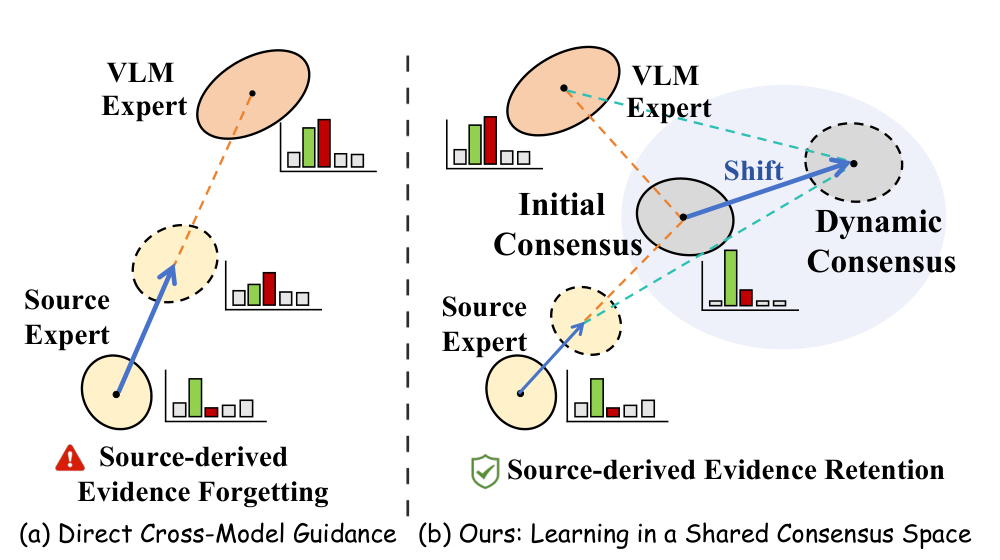}
\caption{
Conceptual comparison of VLM-guided SFDA.
Ellipses summarize target-domain predictions; bars show an
illustrative sample.
(a) Cross-model guidance may overwrite valid source-derived
evidence under expert conflict.
(b) COSMO instead learns from an evolving shared consensus,
retaining source-derived evidence while integrating complementary VLM evidence.
}
\label{fig:teaser}
\end{figure}

Source-free domain adaptation (SFDA) adapts a source-trained model
to an unlabeled target domain without access to source
data~\citep{liang2020shot,xia2021adaptive}. Conventional SFDA
recovers supervision from the source hypothesis and unlabeled target
structure through pseudo-labels, prototypes, neighborhoods, or
prediction regularization. However, under large domain shifts, these
self-generated signals can inherit source bias and progressively
reinforce erroneous decisions.

Pretrained vision-language models (VLMs), such as
CLIP~\citep{radford2021learning}, extend this information boundary
with external visual-semantic knowledge. Recent VLM-guided SFDA
methods customize VLM prompts~\citep{tang2024difo}, correct VLM
predictions~\citep{tang2025prode,peng2026grkv}, or alternate bidirectional transfer
between the task model and VLM~\citep{han2026vsfot}. Despite these
advances, supervision is still exchanged through expert-specific
predictions, with one expert guiding, correcting, or aligning the
other. Such interaction becomes fragile when the experts disagree:
the source model retains task-specific decision evidence learned from
labeled source data, whereas the VLM contributes broader semantics
acquired from image--text pretraining, so either expert may be uniquely
correct on different target samples. Existing cross-model objectives
do not explicitly allocate supervisory influence according to their
sample-wise relative reliability; instead, the dominant signal is
determined indirectly by the objective, loss weights, or update
schedule. VLM guidance can therefore correct a source-biased error,
but may also overwrite an initially correct source-derived decision, as
illustrated in Fig.~\ref{fig:teaser}(a). We call this failure
\emph{source-derived evidence forgetting}. This exposes a central
reliability-allocation problem: how can complementary evidence be
integrated without blindly privileging either expert?

We address this problem with \textbf{Consensus-Driven Shift
Modulation (COSMO)}, which mediates cross-model interaction through a
shared consensus instead of direct expert-to-expert guidance, as shown
in Fig.~\ref{fig:teaser}(b). At initialization, COSMO constructs a
sample-specific consensus from the frozen source expert and the
initially prompted VLM. For each sample, prediction entropy serves as a
label-free proxy for relative predictive concentration and assigns
greater influence to the more concentrated expert; it is not
interpreted as calibrated correctness. Formally, this consensus is an
entropy-conditioned reverse-KL barycenter whose closed-form solution is
a weighted product of experts. By reinforcing compatible class evidence
and mediating conflicts according to relative concentration, it
provides a shared initial anchor without assigning either expert a
fixed teacher role.

During adaptation, the source-initialized target branch and the
prompt-adapted VLM aggregate their current predictions into a dynamic
consensus. Before it is used for supervision, entropy-rank-guided
Consensus Shift Modulation (CSM) rescales its displacement from the
initial anchor according to each sample's rank in the current
consensus-entropy distribution, using a bounded factor that decays
toward one over training. Relatively low-entropy samples remain closer
to the initial anchor, reducing unnecessary drift, whereas
high-entropy samples retain greater plasticity along the direction
already inferred by the dynamic consensus. Both branches then learn
synchronously from the same detached, modulated consensus, and their
updated predictions are re-aggregated to form the next one. This
re-aggregation--modulation--learning loop allows newly
acquired target-domain evidence to enter subsequent supervision while
keeping the initial evidence as a persistent reference.

Our contributions are summarized as follows:
\begin{itemize}
    \item We identify and empirically diagnose
    \emph{source-derived evidence forgetting} in VLM-guided SFDA,
    and formulate the problem as sample-wise reliability allocation
    between heterogeneous experts.

    \item We propose \method{}, which replaces direct cross-model
    guidance with synchronous co-adaptation through an evolving shared
    consensus anchored to its initial state. We formulate the initial
    consensus as an entropy-conditioned reverse-KL barycenter, derive
    its closed-form weighted-PoE solution, and characterize
    decision-retention conditions under expert conflict.

    \item Experiments on four benchmarks demonstrate state-of-the-art
    performance under matched VLM backbones. Ablations and
    conflict-conditioned analyses further show that \method{} better
    retains valid source-derived decisions while absorbing
    complementary VLM evidence.
\end{itemize}

\section{Related Work}

\paragraph{Source-Free Domain Adaptation.}
Without source data, SFDA derives supervision from the source
hypothesis and unlabeled target data. Existing methods reconstruct
source structure from generative models, classifier parameters, or
estimated statistics~\citep{kurmi2021domain,tian2022vdm,
ding2022source}; refine source predictions through information
maximization, pseudo-labeling, or noise-robust
learning~\citep{liang2020shot,chen2022self,yi2023noisy}; or exploit
target-domain geometry and uncertainty~\citep{huang2021hcl,
yang2021nrc,yang2022aad,litrico2023plue,tang2024tpds}.
These signals can nevertheless remain coupled to source-model bias under large domain shifts.

\paragraph{Vision--Language Models for Domain Adaptation.}
With access to source data, VLM-based unsupervised domain adaptation
(UDA) methods learn domain-aware prompts, debias predictions, or align
visual--semantic representations across domains~\citep{
lai2023padclip,singha2023adclip,bai2024pda,du2024damp}.
In SFDA, existing methods customize VLM prompts, distill their
predictions, or iteratively refine cross-model guidance using
adaptation dynamics~\citep{tang2024difo,tang2025prode,
tang2026difoplus,han2026vsfot}. Recent approaches further integrate
heterogeneous visual--semantic evidence through prediction fusion,
neighborhood learning, multimodal alignment, or reliability-aware
curricula~\citep{zhan2024dynamic,tarashima2025vilaad,
peng2025lad,chen2026mmga,chen2026rcl,
lee2025collaborativelearningmultiplefoundation}. While these designs
strengthen external supervision, collaboration is still organized
around directed transfer, fused pseudo-labels, or curriculum-based
selection. \method{} instead lets both evolving branches learn from a
sample-conditioned shared consensus whose evolution is anchored to
their initial evidence.

\paragraph{Knowledge Distillation.}
Recent VLM distillation covers cross-modal affinity, feature, and
alignment transfer~\citep{wu2023tinyclip,yang2024clipkd,
feng2025alignkd}; teacher-augmented multimodal
pretraining~\citep{vasu2024mobileclip,sameni2024sfclip}; adaptation to
unlabeled domains, downstream tasks, and heterogeneous
students~\citep{li2024promptkd,lee2025customkd,jang2025vl2lite}; and
self- or multi-encoder distillation~\citep{wu2024clipself,
kim2025cosmos,cao2025movekd}. Despite increasingly adaptive knowledge
sources, these methods generally preserve a privileged
teacher-to-student direction. \method{} instead uses the same detached
consensus to supervise both branches, without assigning either branch
a fixed teacher role.

\begin{figure*}[t]
    \centering
    \includegraphics[width=\textwidth]{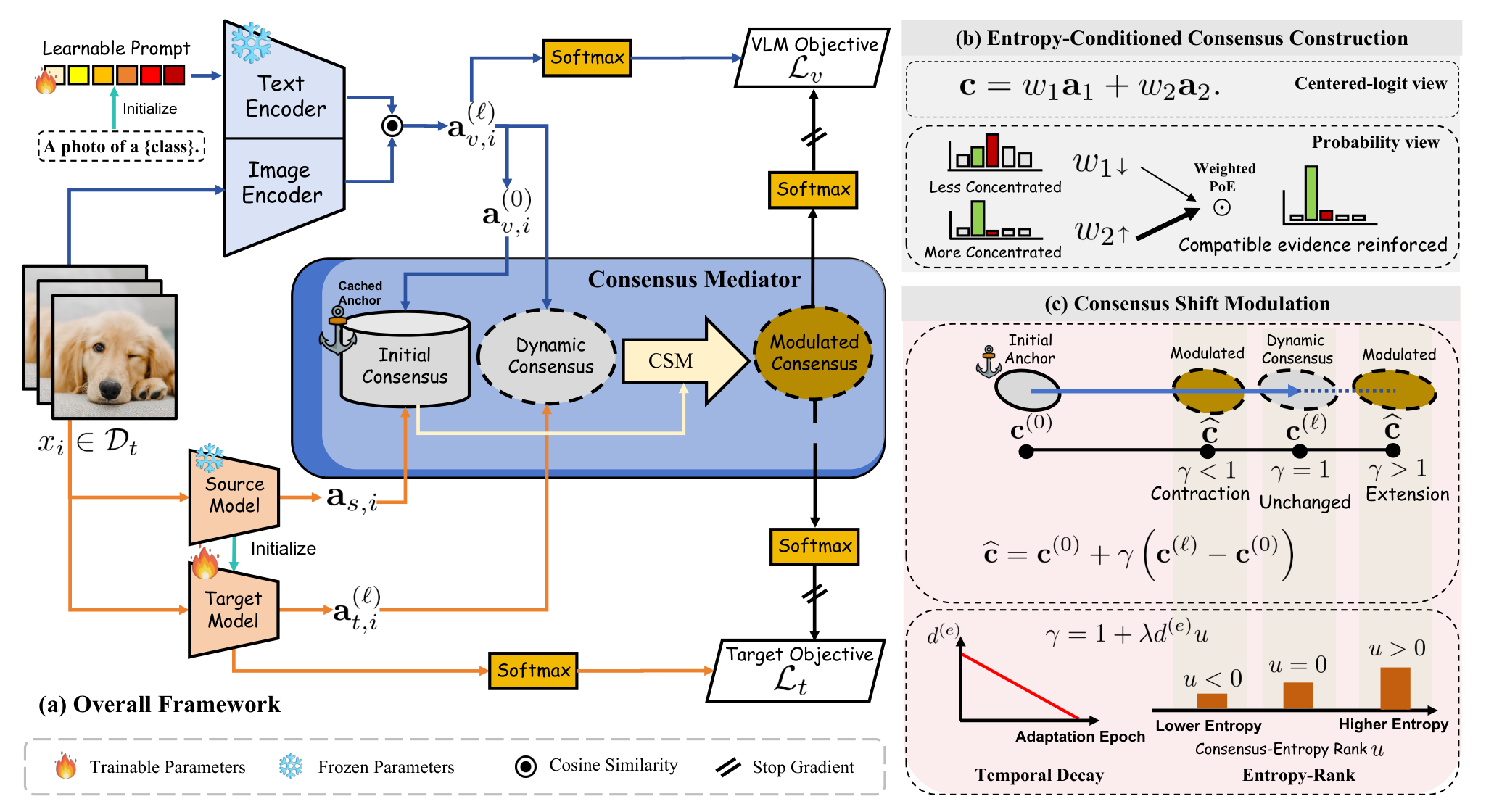}
    \caption{
    Overview of \method{}.
    (a) For each target sample, the frozen source expert and initially
    prompted VLM form a cached consensus anchor, while the current target
    and prompt-adapted VLM branches form a dynamic consensus. CSM modulates
    its anchor-relative displacement, and the detached result supervises
    both branches.
    (b) Entropy-conditioned weighting yields a weighted product of experts
    that emphasizes more concentrated predictions and reinforces
    compatible evidence.
    (c) Consensus-entropy rank contracts, preserves, or extends the
    consensus shift, while temporal decay gradually removes the modulation.
    }
    \label{fig:overview}
\end{figure*}

\section{Method}
\label{sec:method}

\subsection{Problem Setup and Method Overview}
\label{sec:setup}

We consider closed-set source-free domain adaptation with a shared
label space $\mathcal Y=\{1,\ldots,K\}$. Let
$\mathcal D_t=\{x_i\}_{i=1}^{N}$ denote a target set of $N$ unlabeled
samples. We are given a source-trained model $f_s$ without access to
its training data. We retain a frozen copy of $f_s$ as the
\emph{source expert} and initialize the target branch $f_t$ from it.
We additionally use a pretrained vision--language model (VLM) $f_v$.
Its pretrained encoders remain frozen, while its prompt context is
optimized together with $f_t$. We denote the initially prompted VLM
by $f_v^{(0)}$, and the target and VLM branches at optimization step
$\ell$ by $f_t^{(\ell)}$ and $f_v^{(\ell)}$.

For predictor $m\in\{s,t,v\}$ and sample $x_i$, let
$\mathbf z_{m,i}^{(\ell)}\in\mathbb R^K$ be its logits and
$\mathbf p_{m,i}^{(\ell)}
=\softmax(\mathbf z_{m,i}^{(\ell)})$ its prediction. The frozen source
quantities are independent of $\ell$, and we omit their step
superscript when no ambiguity arises. We represent each prediction in
the centered-logit coordinate
\begin{equation}
    \mathbf a_{m,i}^{(\ell)}
    =
    \mathbf\Pi\log\mathbf p_{m,i}^{(\ell)}
    =
    \mathbf\Pi\mathbf z_{m,i}^{(\ell)} .
\label{eq:canonical_prediction}
\end{equation}
Here,
$\mathbf\Pi=\mathbf I-\frac{1}{K}\mathbf1\mathbf1^\top$
is the centering matrix and $\mathbf1$ is the all-ones vector.
Centering removes the additive ambiguity of logits without changing
the corresponding prediction, providing a common coordinate for
consensus construction and anchor-relative displacement. Further
details are given in Appendix~A.1.

Figure~\ref{fig:overview}(a) presents the overall framework.
\method{} addresses three questions: how to allocate the initial
evidence between the source expert and VLM, how to incorporate newly
acquired target-domain evidence into subsequent supervision, and how
far the resulting consensus should move from its initial state. The
first is addressed by the entropy-conditioned consensus construction
in Fig.~\ref{fig:overview}(b), the second by the dynamic
re-aggregation loop in Fig.~\ref{fig:overview}(a), and the third by
Consensus Shift Modulation (CSM) in Fig.~\ref{fig:overview}(c).

\subsection{Entropy-Conditioned Consensus Construction}
\label{sec:initial_consensus}

\paragraph{Sample-wise expert allocation.}
As illustrated in Fig.~\ref{fig:overview}(b), consider an active expert set $\mathcal M$ and an expert
$m\in\mathcal M$. We quantify its
predictive concentration and normalize the resulting scores within
each sample:
\begin{equation}
\begin{aligned}
    r_{m,i}^{(\ell)}
    &=
    \max\!\left\{
        \log K-H\bigl(\mathbf p_{m,i}^{(\ell)}\bigr),\,
        \epsilon
    \right\},\\
    w_{m,i}^{(\ell)}
    &=
    \frac{r_{m,i}^{(\ell)}}
    {\sum_{n\in\mathcal M}r_{n,i}^{(\ell)}} ,
\end{aligned}
\label{eq:allocation_weight}
\end{equation}
where $\epsilon>0$ prevents degenerate weights. Since
$\log K-H(\mathbf p)=\KL(\mathbf p\|\mathbf u)$ for the uniform
distribution $\mathbf u$, $r_{m,i}^{(\ell)}$ measures predictive
concentration relative to uniformity. We use it only to allocate
relative influence between the active experts within a sample, not as
a calibrated estimate of correctness.

\paragraph{Consensus construction.}
Given the allocation weights, we define the shared prediction as the
sample-conditioned reverse-KL barycenter
\begin{equation}
    \mathbf q_{\mathcal M,i}^{(\ell)}
    =
    \arg\min_{\mathbf q\in\Delta^{K-1}}
    \sum_{m\in\mathcal M}
    w_{m,i}^{(\ell)}
    \KL\!\left(
        \mathbf q\|
        \mathbf p_{m,i}^{(\ell)}
    \right).
\label{eq:consensus_barycenter}
\end{equation}
Its closed-form solution can be expressed in centered-logit and
probability form as
\begin{equation}
\begin{aligned}
    \mathbf c_{\mathcal M,i}^{(\ell)}
    &=
    \sum_{m\in\mathcal M}
    w_{m,i}^{(\ell)}
    \mathbf a_{m,i}^{(\ell)},
    \quad
    \mathbf q_{\mathcal M,i}^{(\ell)}
    =
    \softmax\bigl(\mathbf c_{\mathcal M,i}^{(\ell)}\bigr),
    \\[2pt]
    q_{\mathcal M,i,k}^{(\ell)}
    &\propto
    \prod_{m\in\mathcal M}
    \bigl(p_{m,i,k}^{(\ell)}\bigr)^{w_{m,i}^{(\ell)}} .
\end{aligned}
\label{eq:consensus_closed_form}
\end{equation}
Thus, the reverse-KL barycenter corresponds to an entropy-conditioned
product of experts in probability space, pooling compatible class-wise
log evidence while resolving conflicts through sample-specific
allocation. The complete derivation is provided in Appendix~A.2.

At initialization, the active pair is the frozen source expert and the
initially prompted VLM, $\mathcal M=\{s,v\}$. We abbreviate the
resulting centered consensus and distribution as
$\mathbf c_i^{(0)}$ and $\mathbf q_i^{(0)}$, respectively, and cache
$\mathbf c_i^{(0)}$ as the fixed \emph{initial-consensus anchor}.
Under expert conflict, the initial consensus retains the source
decision when, against every competing class, the weighted source
margin exceeds the opposing weighted VLM margin; an analogous
condition holds for the VLM decision. These conditions characterize
decision retention rather than correctness. Their complete forms are
given in Appendix~A.3.

\subsection{Dynamic Consensus Co-Adaptation}
\label{sec:dynamic_consensus}

The initial consensus captures the evidence available before
adaptation. As training proceeds, the target branch acquires
target-domain structure and the VLM prompts become task-specific, so
their predictions and relative contributions should evolve.

At each optimization step $\ell$, we reuse
Eqs.~\eqref{eq:allocation_weight}--\eqref{eq:consensus_closed_form}
with the active pair $\mathcal M=\{t,v\}$. We denote the resulting
centered-logit state and prediction by
$\mathbf c_i^{(\ell)}$ and $\mathbf q_i^{(\ell)}$. They are computed
once from the two pre-update branch predictions. The frozen source
expert does not directly enter this dynamic pair; its initial evidence
persists through the anchor $\mathbf c_i^{(0)}$.

CSM transforms the unmodulated dynamic consensus before the resulting
detached snapshot supervises both branches. Their updated predictions
are re-aggregated at the next step, forming a
re-aggregation--modulation--learning loop that incorporates newly
acquired evidence without direct cross-model chasing.

\subsection{Consensus Shift Modulation}
\label{sec:shift_modulation}

Dynamic re-aggregation incorporates newly acquired evidence, but
accepting each sample's full anchor-relative displacement may
unnecessarily overwrite pre-adaptation evidence recorded by the
initial consensus. We therefore introduce entropy-rank-guided
Consensus Shift Modulation (CSM), illustrated in
Fig.~\ref{fig:overview}(c), to preserve the direction inferred by
dynamic re-aggregation while regulating its magnitude according to
relative consensus uncertainty.

At the beginning of epoch
$e\in\{0,\ldots,E-1\}$, we evaluate the unmodulated dynamic consensus
over the target set and denote the resulting predictions by
$\{\bar{\mathbf q}_i^{(e)}\}_{i=1}^{N}$. We compute
\begin{equation}
\begin{alignedat}{2}
    h_i^{(e)}
    &=
    \frac{H(\bar{\mathbf q}_i^{(e)})}{\log K},
    &\qquad
    u_i^{(e)}
    &=
    2\frac{
        \operatorname{rank}^{(e)}(h_i^{(e)})
    }{N-1}-1,\\
    d^{(e)}
    &=
    1-\frac{e}{E-1},
    &
    \gamma_i^{(e)}
    &=
    1+\lambda d^{(e)}u_i^{(e)}.
\end{alignedat}
\label{eq:entropy_rank_modulation}
\end{equation}
where $\operatorname{rank}^{(e)}(\cdot)\in[0,N-1]$ is the
zero-based average rank among target samples, ordered from lower to
higher consensus entropy. Hence, $u_i^{(e)}\in[-1,1]$ is a centered
uncertainty rank; $E>1$ and $\lambda$ denote the number of epochs and
modulation strength, respectively.

For a step $\ell$ in epoch $e$, CSM rescales the anchor-relative
displacement in centered-logit space:
\begin{equation}
\begin{aligned}
    \Delta\mathbf c_i^{(\ell)}
    &=
    \mathbf c_i^{(\ell)}-\mathbf c_i^{(0)},\\
    \widehat{\mathbf c}_i^{(\ell)}
    &=
    \mathbf c_i^{(0)}
    +\gamma_i^{(e)}
     \Delta\mathbf c_i^{(\ell)},\\
    \widehat{\mathbf q}_i^{(\ell)}
    &=
    \softmax\!\left(
        \widehat{\mathbf c}_i^{(\ell)}
    \right).
\end{aligned}
\label{eq:modulated_consensus}
\end{equation}
Samples with lower consensus-entropy ranks
($u_i^{(e)}<0$) have $\gamma_i^{(e)}<1$ and contract the
anchor-relative displacement, whereas higher-ranked samples
($u_i^{(e)}>0$) have $\gamma_i^{(e)}>1$ and extend it along the
direction inferred by dynamic re-aggregation. When
$u_i^{(e)}=0$ or $d^{(e)}=0$, CSM leaves the dynamic consensus
unchanged. Since $u_i^{(e)}\in[-1,1]$, $d^{(e)}\in[0,1]$, and
$0\leq\lambda<1$, we have
$0<1-\lambda d^{(e)}\leq\gamma_i^{(e)}
\leq1+\lambda d^{(e)}<2$.
Thus, CSM changes only the shift magnitude without canceling or
reversing its direction; the strict bound on $\lambda$ excludes
complete suppression. As $d^{(e)}$ decays to zero, CSM reduces to the
identity transformation. Formal properties are given in
Appendix~A.4.

Finally, the two entropy cues operate at distinct scopes:
$H(\mathbf p_{m,i}^{(\ell)})$ in
Eq.~\eqref{eq:allocation_weight} is compared across experts within
each sample to allocate their relative influence, whereas
$h_i^{(e)}$ is ranked across target samples to regulate the magnitude
of consensus adaptation.

\begin{table*}[t]
\centering

\begingroup
\small
\renewcommand{\arraystretch}{1.08}
\setlength{\tabcolsep}{2.5pt}

\begin{adjustbox}{max width=\linewidth}
\begin{tabular}{@{}l | c | c | c | *{13}{c} | c@{}}
\toprule
\multirow{2}{*}{Method}
& \multirow{2}{*}{Venue}
& \multirow{2}{*}{VLM}
& \multirow{2}{*}{SF}
& \multicolumn{13}{c|}{Office-Home}
& \multicolumn{1}{c}{VisDA-C} \\
\cline{5-17}
\cline{18-18}
& & & 
& A$\to$C
& A$\to$P
& A$\to$R
& C$\to$A
& C$\to$P
& C$\to$R
& P$\to$A
& P$\to$C
& P$\to$R
& R$\to$A
& R$\to$C
& R$\to$P
& Avg.
& S$\to$R \\
\midrule

Source
& --
& --
& --
& 43.6 & 67.0 & 73.8
& 49.9 & 60.2 & 62.8
& 51.6 & 40.8 & 72.5
& 64.4 & 46.4 & 78.2
& 59.3
& 49.0 \\

CLIP
& ICML'21
& B/32
& --
& 64.8 & 86.5 & 87.2
& 77.9 & 86.5 & 87.2
& 77.9 & 64.8 & 87.2
& 77.9 & 64.8 & 86.5
& 79.1
& 87.3 \\

\midrule

SHOT
& ICML'20
& --
& \cmark
& 56.7 & 77.9 & 80.6
& 68.0 & 78.0 & 79.4
& 67.9 & 54.5 & 82.3
& 74.2 & 58.6 & 84.5
& 71.9
& 82.7 \\

NRC
& NeurIPS'21
& --
& \cmark
& 57.7 & 80.3 & 82.0
& 68.1 & 79.8 & 78.6
& 65.3 & 56.4 & 83.0
& 71.0 & 58.6 & 85.6
& 72.2
& 85.9 \\

AaD
& NeurIPS'22
& --
& \cmark
& 59.3 & 79.3 & 82.1
& 68.9 & 79.8 & 79.5
& 67.2 & 57.4 & 83.1
& 72.1 & 58.5 & 85.4
& 72.7
& 88.0 \\

TPDS
& IJCV'24
& --
& \cmark
& 59.3 & 80.3 & 82.1
& 70.6 & 79.4 & 80.9
& 69.8 & 56.8 & 82.1
& 74.5 & 61.2 & 85.3
& 73.5
& 87.6 \\

\midrule

PADCLIP
& ICCV'23
& \multirow{5}{*}{B/16}
& \xmark
& \underline{76.4}
& 90.6
& 90.8
& \underline{86.7}
& 92.3
& 92.0
& 86.0
& 74.5
& 91.5
& \underline{86.9}
& \textbf{79.1}
& 93.1
& 86.7
& \underline{90.9} \\

ADCLIP
& ICCVW'23
&
& \xmark
& 70.9
& 92.5
& \underline{92.1}
& 85.4
& 92.4
& \underline{92.5}
& \underline{86.7}
& 74.3
& \underline{93.0}
& \underline{86.9}
& 72.6
& 93.8
& 86.1
& 90.7 \\

PDA
& AAAI'24
&
& \xmark
& 73.5
& 91.4
& 91.3
& 86.0
& 91.6
& 91.5
& 86.0
& 73.5
& 91.7
& 86.4
& 73.0
& 92.4
& 85.7
& 89.7 \\

DAMP
& CVPR'24
&
& \xmark
& 75.7
& \underline{94.2}
& 92.0
& 86.3
& \underline{94.2}
& 91.9
& 86.2
& \underline{76.3}
& 92.4
& 86.1
& 75.6
& \underline{94.0}
& \underline{87.1}
& \underline{90.9} \\

\rowcolor[gray]{0.92}
\textbf{COSMO}
& --
&
& \cmark
& \textbf{78.8}
& \textbf{94.6}
& \textbf{93.2}
& \textbf{88.0}
& \textbf{94.3}
& \textbf{92.8}
& \textbf{87.6}
& \textbf{78.0}
& \textbf{93.2}
& \textbf{88.5}
& \underline{77.8}
& \textbf{94.4}
& \textbf{88.4}
& \textbf{92.0} \\

\midrule

DIFO
& CVPR'24
& \multirow{4}{*}{B/32}
& \cmark
& 70.6
& 90.6
& 88.8
& 82.5
& 90.6
& 88.8
& 80.9
& 70.1
& 88.9
& 83.4
& 70.5
& 91.2
& 83.1
& 90.3 \\

ProDe
& ICLR'25
&
& \cmark
& 72.7
& \underline{92.3}
& 90.5
& 82.5
& 91.5
& 90.7
& 82.5
& 72.5
& 90.8
& 83.0
& 72.6
& \underline{92.2}
& 84.5
& \underline{91.0} \\

VSFOT
& CVPR'26
&
& \cmark
& \underline{75.0}
& 91.7
& \underline{91.4}
& \underline{83.0}
& \underline{91.8}
& \underline{91.4}
& \underline{83.5}
& \underline{75.0}
& \underline{91.4}
& \underline{83.7}
& \underline{74.3}
& 91.9
& \underline{85.3}
& 90.6 \\

\rowcolor[gray]{0.92}
\textbf{COSMO}
& --
&
& \cmark
& \textbf{76.4}
& \textbf{92.6}
& \textbf{92.0}
& \textbf{85.7}
& \textbf{92.4}
& \textbf{91.9}
& \textbf{84.3}
& \textbf{76.1}
& \textbf{91.7}
& \textbf{85.6}
& \textbf{75.5}
& \textbf{92.7}
& \textbf{86.4}
& \textbf{91.4} \\

\bottomrule
\end{tabular}
\end{adjustbox}
\endgroup

\caption{
Closed-set adaptation results (\%) on Office-Home and VisDA-C.
 ``VLM'' denotes the image-encoder backbone of the vision-language
model, and ``SF'' indicates source-free adaptation.
VLM-guided methods are compared only within matched-backbone groups;
the best and second-best results in each group are shown in \textbf{bold} and \underline{underlined}, respectively.
}
\label{tab:closed_oh_visda}
\end{table*}

\subsection{Objective and Optimization}
\label{sec:optimization}

The modulated consensus is computed once from the pre-update branches
and detached before either branch is optimized. For a mini-batch
$\mathcal B$, let
$\widetilde{\mathbf q}_i
=\operatorname{sg}(\widehat{\mathbf q}_i)$,
where $\operatorname{sg}(\cdot)$ denotes stop-gradient. The branch
objectives are
\begin{equation}
\begin{aligned}
    \mathcal L_v
    &=
    \mathcal L_{\mathrm{IIC}}
    \bigl(\mathbf p_v,\widetilde{\mathbf q}\bigr),\\
    \mathcal L_t
    &=
    \alpha\,
    \mathcal L_{\mathrm{IIC}}
    \bigl(\mathbf p_t,\widetilde{\mathbf q}\bigr)
    +\beta\mathcal L_{\mathrm{CE}}
    -\delta H(\bar{\mathbf p}_t),\\
    \mathcal L_{\method}
    &=
    \mathcal L_t+\mathcal L_v ,
\end{aligned}
\label{eq:training_objective}
\end{equation}
where $\alpha,\beta,\delta\geq0$ weight consensus alignment,
hard-label classification, and prediction diversity, respectively.
Following IIC~\citep{ji2019invariant},
$\mathcal L_{\mathrm{IIC}}$ maximizes batch-level mutual information
between a branch prediction and the shared consensus. For the target
branch, $\mathcal L_{\mathrm{CE}}$ uses the hard consensus decision
$\widehat y_i=\arg\max_k\widehat q_{i,k}$. The batch-mean prediction is
$\bar{\mathbf p}_t
=|\mathcal B|^{-1}\sum_{i\in\mathcal B}\mathbf p_{t,i}$;
minimizing $-H(\bar{\mathbf p}_t)$ encourages prediction diversity and
prevents category collapse.

Both branch losses are formed from the same pre-update consensus
snapshot, although their objectives need not be identical. The
additional target-side terms consolidate the deployed classifier,
whereas the VLM branch adapts only its prompts through soft consensus
alignment. At each step, the target model and VLM prompts are updated
synchronously, while the source expert and pretrained VLM encoders
remain frozen. Only the adapted target branch is retained for
inference. Complete loss definitions and the training algorithm are
provided in Appendices~B.1--B.3.

\section{Experiments}

\subsection{Experimental Setup}
\label{sec:experimental_setup}

\paragraph{Datasets and Evaluation.}
We evaluate \method{} on four standard closed-set SFDA benchmarks:
Office-31~\citep{saenko2010adapting},
Office-Home~\citep{venkateswara2017deep},
VisDA-C~\citep{peng2018visda}, and
DomainNet-126~\citep{saito2019semi}.
We report target-domain top-1 accuracy on Office-31, Office-Home, and
DomainNet-126, and mean per-class accuracy on VisDA-C. Dataset-level
scores average all transfer tasks equally. Unless stated otherwise,
\method{} results are averaged over three independent runs; target
labels are used only for evaluation. Dataset statistics and protocols
are detailed in Appendices~C.1--C.3.

\paragraph{Baselines and Implementation.}
We compare against the source model and zero-shot
CLIP~\citep{radford2021learning}; representative non-VLM SFDA
methods~\citep{liang2020shot,yang2021nrc,yang2022aad,
chen2022contrastive,tang2024tpds}; source-available VLM-guided UDA
methods~\citep{ge2023domain,lai2023padclip,singha2023adclip,
bai2024pda,du2024damp}; and source-free VLM-guided SFDA
methods~\citep{tang2024difo,tang2025prode,han2026vsfot}.
The target branch uses ResNet-50, except ResNet-101 on VisDA-C.
We use CLIP ViT-B/16 or ViT-B/32, freezing its encoders and logit scale
and adapting only the prompt context. VLM-guided comparisons use
matched VLM backbones. We set $\lambda=0.5$ in all main experiments unless otherwise stated. Appendices~D--E detail implementation and
comparison protocols.

\subsection{Comparison with State-of-the-Art Methods}
\label{sec:main_results}

\paragraph{Office-Home and VisDA-C.}
Table~\ref{tab:closed_oh_visda} compares closed-set adaptation methods,
with VLM-guided results grouped by backbone. Using ViT-B/32,
\method{} achieves 86.4\% on Office-Home and 91.4\% on VisDA-C,
outperforming VSFOT and ProDe by 1.1 and 0.4 points, respectively.
It also ranks first on all 12 Office-Home transfers. With ViT-B/16,
\method{} reaches 88.4\% and 92.0\%, exceeding the strongest
source-available, matched-backbone baselines by 1.3 and 1.1 points
despite operating without source data. The gains therefore persist
across both CLIP backbones and benchmarks.

\paragraph{DomainNet-126 and Office-31.}
On DomainNet-126, Table~\ref{tab:closed_domainnet} shows that
\method{} with ViT-B/32 achieves 85.1\%, exceeding ProDe by 1.2
points and ranking first on all 12 transfers in the matched source-free
group. Its ViT-B/16 variant further reaches 88.1\%, outperforming the
strongest source-available, matched-backbone baseline by 0.5 points.
On the smaller, saturated Office-31 benchmark, \method{} obtains
92.7\% with ViT-B/32, matching the best source-free result, and
93.0\% with ViT-B/16. Complete Office-31 results are reported in
Appendix~F.1.

\paragraph{Multiple Foundation Models.}
Under CoMA's matched CLIP and InstructBLIP
configuration~\citep{lee2025collaborativelearningmultiplefoundation},
single-stage \method{} achieves 92.0\% on Office-Home, versus CoMA's
reported 90.7\% with multi-stage distillation. Full results are
provided in Appendix~F.5.

\begin{table*}[t]
\centering

\begingroup
\small
\renewcommand{\arraystretch}{1.08}
\setlength{\tabcolsep}{4pt}

\begin{adjustbox}{max width=\linewidth}
\begin{tabular}{@{}l | c | c | c | *{13}{c}@{}}
\toprule
\multirow{2}{*}{Method}
& \multirow{2}{*}{Venue}
& \multirow{2}{*}{VLM}
& \multirow{2}{*}{SF}
& \multicolumn{13}{c}{DomainNet-126} \\
\cline{5-17}
& & & 
& C$\to$P
& C$\to$R
& C$\to$S
& P$\to$C
& P$\to$R
& P$\to$S
& R$\to$C
& R$\to$P
& R$\to$S
& S$\to$C
& S$\to$P
& S$\to$R
& Avg. \\
\midrule

Source
& --
& --
& --
& 44.6 & 59.8 & 47.5
& 53.3 & 75.3 & 46.2
& 55.3 & 62.7 & 46.4
& 55.1 & 50.7 & 59.5
& 54.7 \\

CLIP
& ICML'21
& B/32
& --
& 77.6 & 89.9 & 74.5
& 78.9 & 89.9 & 74.5
& 78.9 & 77.6 & 74.5
& 78.9 & 77.6 & 89.9
& 80.2 \\

\midrule

SHOT
& ICML'20
& --
& \cmark
& 63.5 & 78.2 & 59.5
& 67.9 & 81.3 & 61.7
& 67.7 & 67.6 & 57.8
& 70.2 & 64.0 & 78.0
& 68.1 \\

NRC
& NeurIPS'21
& --
& \cmark
& 62.6 & 77.1 & 58.3
& 62.9 & 81.3 & 60.7
& 64.7 & 69.4 & 58.7
& 69.4 & 65.8 & 78.7
& 67.5 \\

AdaCon
& CVPR'22
& --
& \cmark
& 60.8 & 74.8 & 55.9
& 62.2 & 78.3 & 58.2
& 63.1 & 68.1 & 55.6
& 67.1 & 66.0 & 75.4
& 65.4 \\

TPDS
& IJCV'24
& --
& \cmark
& 62.9 & 77.1 & 59.8
& 65.6 & 79.0 & 61.5
& 66.4 & 67.0 & 58.2
& 68.6 & 64.3 & 75.3
& 67.1 \\

\midrule

DAPL
& TNNLS'23
& \multirow{4}{*}{B/16}
& \xmark
& 83.3 & 92.4 & 81.1
& 86.4 & 92.1 & 81.0
& 86.7 & 83.3 & 80.8
& 86.8 & 83.5 & 91.9
& 85.8 \\

ADCLIP
& ICCVW'23
&
& \xmark
& 84.3 & \underline{93.7} & 82.4
& \underline{87.5} & \underline{93.5} & 82.4
& \underline{87.3} & 84.5 & 81.6
& \underline{87.9} & 84.8 & 93.0
& 86.9 \\

DAMP
& CVPR'24
&
& \xmark
& \underline{86.4} & 93.3 & \underline{83.5}
& 87.2 & 93.4 & \textbf{84.1}
& 87.2 & \underline{86.5} & \underline{82.5}
& 87.3 & \textbf{86.6} & \underline{93.4}
& \underline{87.6} \\

\rowcolor[gray]{0.92}
\textbf{COSMO}
& --
&
& \cmark
& \textbf{86.5} & \textbf{94.0} & \textbf{83.8}
& \textbf{88.0} & \textbf{93.8} & \textbf{84.1}
& \textbf{87.7} & \textbf{86.8} & \textbf{83.8}
& \textbf{88.4} & \textbf{86.6} & \textbf{94.0}
& \textbf{88.1} \\

\midrule

DIFO
& CVPR'24
& \multirow{4}{*}{B/32}
& \cmark
& 76.6 & 87.2 & 74.9
& 80.0 & 87.4 & 75.6
& 80.8 & 77.3 & 75.5
& 80.5 & 76.7 & 87.3
& 80.0 \\

ProDe
& ICLR'25
&
& \cmark
& \underline{81.4} & \underline{92.2} & \underline{77.8}
& \underline{83.3} & \underline{91.9} & \underline{78.6}
& \underline{84.9} & \underline{81.3} & \underline{78.9}
& \underline{83.8} & \underline{81.3} & \underline{91.8}
& \underline{83.9} \\

VSFOT
& CVPR'26
&
& \cmark
& 79.5 & 90.4 & 77.2
& 83.0 & 90.5 & 77.5
& 83.3 & 79.8 & 76.9
& 83.5 & 79.6 & 90.4
& 82.6 \\

\rowcolor[gray]{0.92}
\textbf{COSMO}
& --
&
& \cmark
& \textbf{82.8} & \textbf{92.7} & \textbf{79.7}
& \textbf{84.8} & \textbf{92.7} & \textbf{80.2}
& \textbf{85.4} & \textbf{82.8} & \textbf{79.8}
& \textbf{84.7} & \textbf{82.9} & \textbf{92.7}
& \textbf{85.1} \\

\bottomrule
\end{tabular}
\end{adjustbox}
\endgroup

\caption{
Closed-set adaptation results (\%) on DomainNet-126.
The notation and matched-backbone comparison conventions follow
Table~\ref{tab:closed_oh_visda}.
}
\label{tab:closed_domainnet}
\end{table*}

\begin{table}[t]
\centering
\footnotesize
\renewcommand{\arraystretch}{1.10}
\setlength{\tabcolsep}{2.5pt}

\begin{adjustbox}{max width=\linewidth}
\begin{tabular}{@{}lccccc@{}}
\toprule
\multirow{2}{*}{Setting}
& \multicolumn{2}{c}{Re-aggregation}
& \multirow{2}{*}{CSM}
& \multirow{2}{*}{\makecell{Office-\\Home}}
& \multirow{2}{*}{\makecell{DomainNet-\\126}} \\
\cmidrule(lr){2-3}
& Target & VLM & & & \\
\midrule

VLM $\rightarrow$ Target
& \multicolumn{2}{c}{--}
& -- & 81.2 & 80.3 \\

VLM $\leftrightarrow$ Target
& \multicolumn{2}{c}{--}
& -- & 84.8 & 82.0 \\

\midrule

\multirow{4}{*}{\makecell[l]{Shared\\consensus}}
& \xmark & \xmark & \xmark
& 82.6 & 80.8 \\

& \cmark & \xmark & \xmark
& 85.4 & 83.4 \\

& \xmark & \cmark & \xmark
& 85.1 & 82.9 \\

& \cmark & \cmark & \xmark
& 85.8 & 84.6 \\

\midrule

\rowcolor[gray]{0.92}
\textbf{\method{}}
& \cmark & \cmark & \cmark
& \textbf{86.4} & \textbf{85.1} \\

\bottomrule
\end{tabular}
\end{adjustbox}

\caption{
Ablation of supervision topology, dynamic consensus re-aggregation,
and CSM using CLIP ViT-B/32 (12-shift average accuracies, \%).
For shared-consensus variants, checkmarks and crosses under Target/VLM
denote current and initial branch predictions, respectively.
}
\label{tab:component_ablation}
\end{table}

\subsection{Ablation Study}
\label{sec:ablation}

\paragraph{Component Contributions.}
Table~\ref{tab:component_ablation} separates supervision topology,
dynamic consensus re-aggregation, and CSM. Reciprocal cross-model
transfer improves one-way VLM distillation by 3.6/1.7 points on
Office-Home/DomainNet-126. A fixed shared consensus obtains
82.6/80.8\%, whereas re-aggregating it with the current target or VLM
prediction raises accuracy to 85.4/83.4\% and 85.1/82.9\%,
respectively. Using both evolving branches performs best without CSM
at 85.8/84.6\%, exceeding reciprocal transfer by 1.0/2.6 points.
This indicates that the gain arises from joint dynamic re-aggregation,
not merely from replacing directed transfer with a fixed shared target.
CSM further adds 0.6/0.5 points, yielding the full results of
86.4/85.1\%. Additional component configurations and objective
ablations are reported in Appendices~G.1 and G.3,
respectively.

\begin{table}[t]
\centering
\small
\renewcommand{\arraystretch}{1.10}
\setlength{\tabcolsep}{2.6pt}

\begin{adjustbox}{max width=\linewidth}
\begin{tabular}{@{}llcccc@{}}
\toprule
\multirow{2}{*}{Fusion}
& \multirow{2}{*}{Weighting}
& \multicolumn{2}{c}{Office-Home}
& \multicolumn{2}{c}{VisDA-C} \\
\cmidrule(lr){3-4}
\cmidrule(lr){5-6}
& & Init. & Adapted
  & Init. & Adapted \\
\midrule

\multirow{2}{*}{MoE}
& Uniform
& 80.7 & 85.5
& 84.1 & 89.8 \\
& Expert entropy
& 81.1 & 85.9
& 84.4 & 90.4 \\

\midrule

\multirow{5}{*}{PoE}
& Uniform
& 81.2 & 85.7
& 83.9 & 90.9 \\
& MSP
& \textbf{81.6} & 86.2
& 84.3 & 91.3 \\
& Margin
& 81.0 & 86.1
& 84.0 & \textbf{91.4} \\
& Expert entropy (norm.)
& 77.5 & 84.6
& 80.4 & 90.4 \\
& \textbf{Expert entropy}
& \textbf{81.6} & \textbf{86.4}
& \textbf{84.5} & \textbf{91.4} \\

\bottomrule
\end{tabular}
\end{adjustbox}

\caption{
Ablation of consensus construction using CLIP ViT-B/32.
``Init.''/``Adapted'' denote initial-consensus/adapted-target
accuracies (\%); Office-Home is averaged over 12 shifts.
MoE/PoE denote weighted arithmetic/geometric pooling, MSP denotes
maximum softmax probability, and ``norm.'' applies unsupervised global
logit-scale normalization before expert-entropy computation.
Best values are bold.
}
\label{tab:consensus_construction}
\end{table}

\paragraph{Consensus Construction.}
Table~\ref{tab:consensus_construction} separates pooling geometry from
the allocation cue. Under MoE, expert-entropy weighting improves
adapted accuracy over uniform weighting by 0.4/0.6 points on
Office-Home/VisDA-C. PoE further improves adapted accuracy under both
uniform weighting (0.2/1.1 points) and expert-entropy weighting
(0.5/1.0 points). Within PoE, expert entropy achieves the best or
tied-best result in all four evaluations, reaching 81.6/86.4\% on
Office-Home and 84.5/91.4\% on VisDA-C for initial/adapted accuracy.
MSP and margin remain competitive but are not consistently optimal,
while logit-scale normalization degrades every result, particularly
at initialization. We therefore use expert-entropy-weighted PoE
throughout.

\subsection{Mechanism Analysis}
\label{sec:mechanism}

\begin{figure}[t]
\centering
\includegraphics[width=\columnwidth]{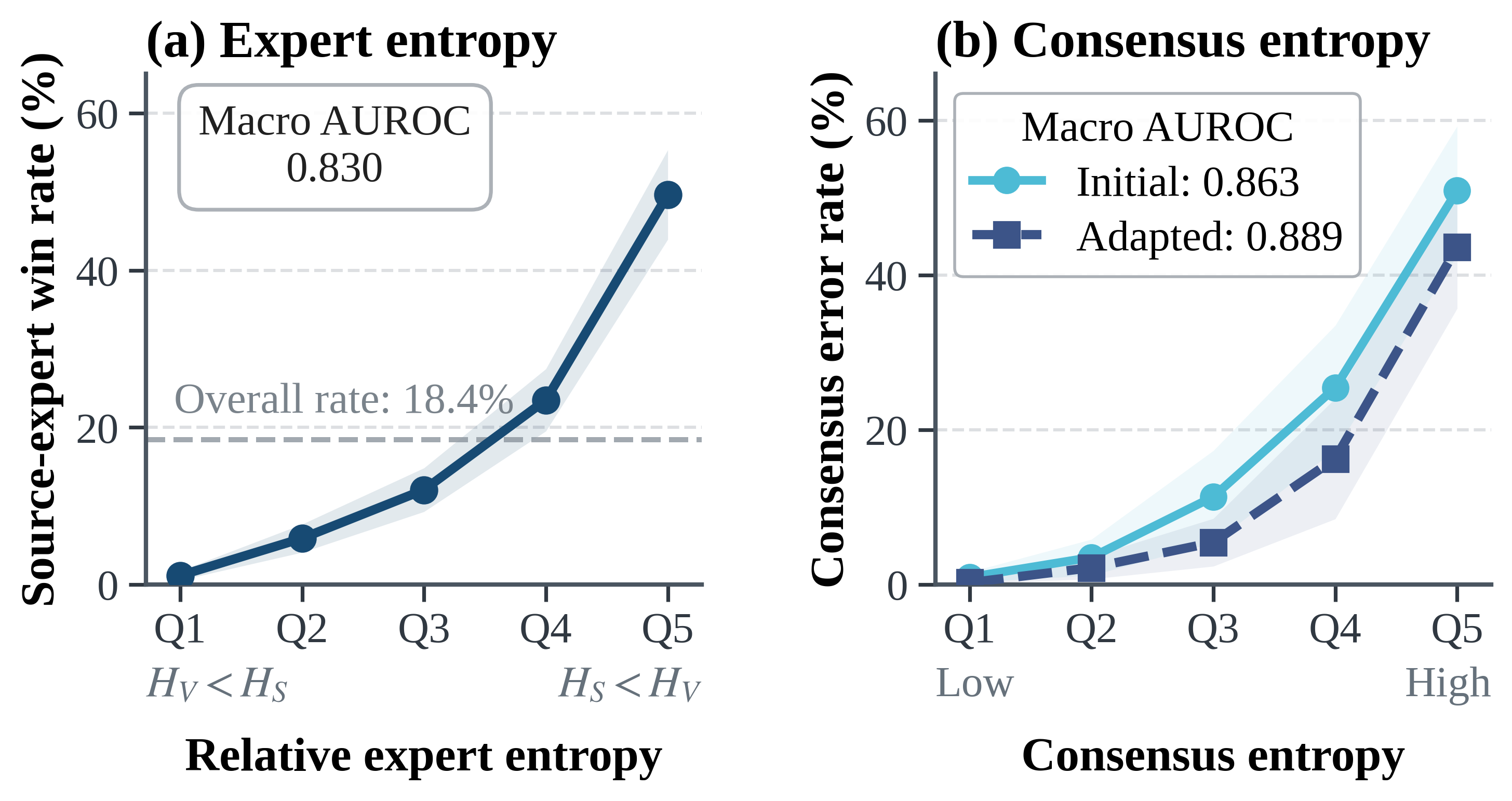}
\caption{
Informativeness of the two entropy cues on Office-Home.
(a) Source-expert win rate among samples where exactly one initial
expert is correct, binned by
$\Delta H=H_V-H_S$; the dashed line denotes the overall win rate.
(b) Consensus error rate across consensus-entropy quintiles before and
after adaptation.
Curves show 12-shift means with 95\% confidence intervals; boxed
values report macro AUROC. Labels are used only for post-hoc analysis.
}
\label{fig:entropy_informativeness}
\end{figure}

\paragraph{Informativeness of Entropy Cues.}
Figure~\ref{fig:entropy_informativeness} evaluates the two entropy
orderings used by \method{}. When only one initial expert is correct,
the source-expert win rate rises monotonically as its entropy decreases
relative to the VLM, yielding a macro AUROC of 0.830. Consensus error
likewise increases across consensus-entropy quintiles, with macro
AUROCs of 0.863 before adaptation and 0.889 afterward. Thus, relative
expert entropy guides within-sample allocation, while consensus entropy
provides the cross-sample ordering used by CSM.

\begin{figure}[!t]
\centering
\includegraphics[width=0.9\columnwidth]
{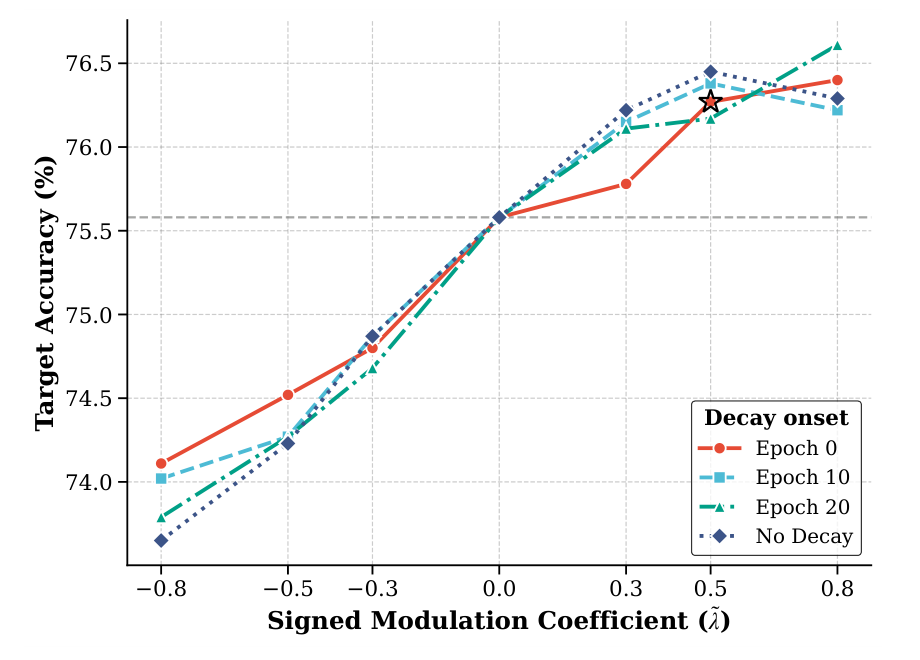}
\caption{
Directional validation of CSM on Office-Home A$\to$C.
Positive $\tilde{\lambda}$ follows the proposed entropy-rank
assignment, negative values reverse it, and zero disables modulation.
Curves compare different decay schedules; the star marks the default
setting.
}
\label{fig:csm_direction}
\end{figure}

\paragraph{Directional Validation of CSM.}
Figure~\ref{fig:csm_direction} tests the direction and temporal schedule
of CSM using a signed diagnostic coefficient.
Across decay schedules, all tested positive settings outperform the
unmodulated baseline, whereas reversing the entropy-rank assignment
consistently reduces accuracy. The gain therefore depends on
contracting shifts for relatively low-entropy samples and extending
them for high-entropy samples, rather than on generic displacement
rescaling. Performance remains stable over the tested positive range
and decay schedules; we use $\lambda=0.5$ with decay starting at
epoch~$0$.

\paragraph{Conflict-Conditioned Adaptation Outcomes.}
Figure~\ref{fig:conflict_outcomes} evaluates retention and absorption
under initial expert conflict.
Direct VLM distillation exhibits a pronounced imbalance, retaining
only $42.8\%$ of source-only correct decisions while absorbing
$95.2\%$ of VLM-only correct decisions. \method{} improves both
criteria over DIFO and ProDe, reaching $78.1\%$ source-only retention
and $91.1\%$ VLM-only absorption. Without CSM, these values decrease
to $76.0\%$ and $90.0\%$, respectively. The results indicate that
shared-consensus adaptation improves the retention--absorption balance,
with CSM providing a further gain on both conflict subsets.

\begin{figure}[!t]
\centering
\includegraphics[width=0.9\columnwidth]
{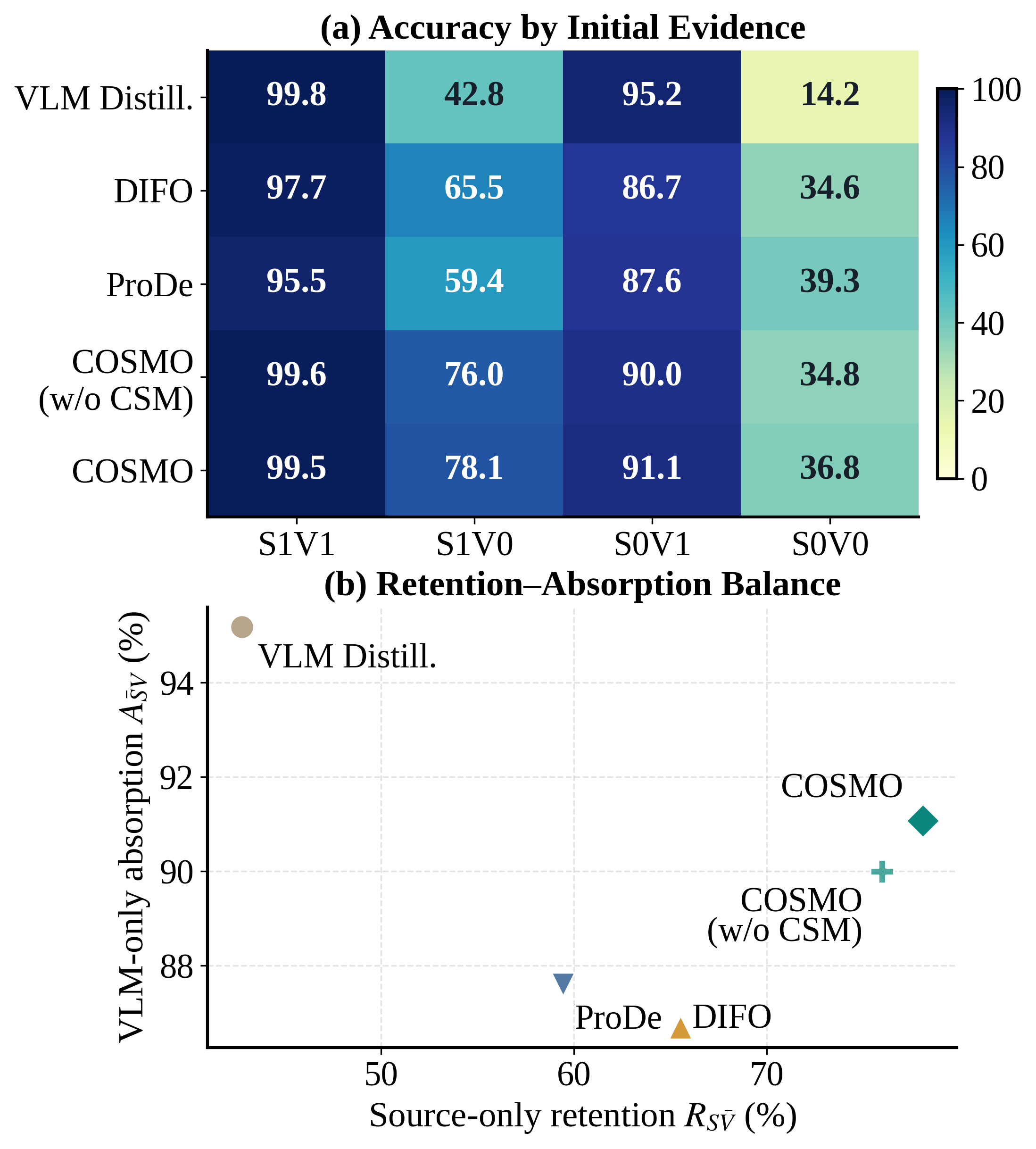}
\caption{
Conflict-conditioned outcomes on Office-Home.
Samples are grouped by the initial correctness of the source expert
($S$) and VLM ($V$).
(a) Accuracy on the four subsets.
(b) Source-only retention $R_{S\bar V}$ versus VLM-only absorption
$A_{\bar S V}$.
Results are averaged over 12 shifts; labels are used only for
post-hoc analysis.
}
\label{fig:conflict_outcomes}
\end{figure}

\section{Conclusion}

We studied VLM-guided SFDA as a sample-wise reliability-allocation
problem and identified source-derived evidence forgetting under expert
conflict. We proposed \method{}, which replaces direct cross-model
guidance with co-adaptation through an anchored shared consensus. An
entropy-conditioned initial consensus allocates expert influence,
dynamic re-aggregation incorporates evolving target evidence, and CSM
regulates departure from the initial anchor. Across four benchmarks,
\method{} achieves state-of-the-art performance under matched VLM
backbones. Ablations and mechanism analyses support the proposed
reliability cues and design choices, while conflict-conditioned results
show a better balance between retaining valid source-derived decisions
and absorbing complementary VLM evidence. These results demonstrate
anchored yet adaptive consensus supervision as an effective alternative
to direct cross-model transfer in VLM-guided SFDA.

\clearpage
\section*{Appendix Overview}

These appendices complement the main text with complete
mathematical, experimental, and reproducibility details. Section~A
provides the centered-logit derivations, the entropy-conditioned
reverse-KL barycenter, decision-retention conditions, and formal
properties of Consensus Shift Modulation (CSM). Section~B specifies the
complete objective and optimization algorithm, while Secs.~C--E describe
the datasets, evaluation, implementation, and comparison protocols.
Section~F reports complete benchmark results and extensions; Sec.~G
provides additional component, consensus-construction, and objective
ablations; Sec.~H presents extended representation and sensitivity
analyses; and Sec.~I evaluates computational efficiency.

\appendix

\renewcommand{\theequation}{S\arabic{equation}}
\renewcommand{\thefigure}{S\arabic{figure}}
\renewcommand{\thetable}{S\arabic{table}}
\setcounter{equation}{0}
\setcounter{figure}{0}
\setcounter{table}{0}

\section{Additional Mathematical Derivations and Properties}
\label{app:mathematical_properties}

\subsection{Centered-Logit Representation}
\label{app:centered_logits}

We first make explicit the representation used in Eq.~(1) of the main
paper. Let $K\geq 2$, let $\ones\in\mathbb R^K$ be the all-ones vector,
and define
\begin{equation}
    \mathbf\Pi
    =
    \mathbf I-\frac{1}{K}\ones\ones^\top .
    \label{eq:centering_matrix}
\end{equation}
The matrix $\mathbf\Pi$ is the orthogonal projector onto
\(\ones^\perp=\{\mathbf a\in\mathbb R^K:\ones^\top\mathbf a=0\}\):
\begin{equation}
    \mathbf\Pi^\top=\mathbf\Pi,\qquad
    \mathbf\Pi^2=\mathbf\Pi,\qquad
    \mathbf\Pi\ones=\mathbf 0 .
    \label{eq:projection_properties}
\end{equation}

\begin{proposition}[Softmax-invariant centered coordinates]
\label{prop:centered_coordinates}
Let $\mathbf z\in\mathbb R^K$ be finite logits and
\(\mathbf p=\operatorname{softmax}(\mathbf z)\). Then
\begin{equation}
    \mathbf a
    :=
    \mathbf\Pi\log\mathbf p
    =
    \mathbf\Pi\mathbf z,
    \qquad
    \operatorname{softmax}(\mathbf a)=\mathbf p .
    \label{eq:centered_logit_identity}
\end{equation}
Moreover, $\mathbf a$ is the unique zero-mean logit vector whose softmax is
$\mathbf p$.
\end{proposition}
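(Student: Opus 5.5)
The plan is to write the log-probabilities explicitly in terms of the logits, exploit the fact that $\mathbf\Pi$ annihilates constant vectors, and then prove uniqueness via the shift-invariance of softmax. All three claims follow from elementary properties of $\mathbf\Pi$ and softmax.

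\emph{Step 1: the identity $\mathbf\Pi\log\mathbf p=\mathbf\Pi\mathbf z$.} Since $\mathbf z$ is finite, every $p_k>0$, so $\log\mathbf p$ is well defined. Moreover,
\[
\log\mathbf p=\mathbf z-\bigl(\log\textstyle\sum_j e^{z_j}\bigr)\ones .
\]
Applying $\mathbf\Pi$ and using $\mathbf\Pi\ones=\mathbf 0$ from Eq.~\eqref{eq:projection_properties} removes the log-partition term and gives $\mathbf\Pi\log\mathbf p=\mathbf\Pi\mathbf z$.

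\emph{Step 2: $\operatorname{softmax}(\mathbf a)=\mathbf p$.} Write $\mathbf a=\mathbf\Pi\mathbf z=\mathbf z-\bar z\ones$ with $\bar z=\frac1K\ones^\top\mathbf z$. Softmax is invariant under adding a multiple of $\ones$, because the factor $e^{-\bar z}$ cancels between numerator and denominator. Hence $\operatorname{softmax}(\mathbf a)=\operatorname{softmax}(\mathbf z)=\mathbf p$. Also $\ones^\top\mathbf a=0$, since $\ones^\top\mathbf\Pi=(\mathbf\Pi\ones)^\top=\mathbf 0^\top$.

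\emph{Step 3: uniqueness.} Suppose $\mathbf b$ satisfies $\ones^\top\mathbf b=0$ and $\operatorname{softmax}(\mathbf b)=\mathbf p$. Taking logs gives $\mathbf b-(\log\sum_j e^{b_j})\ones=\log\mathbf p$, so $\mathbf b=\log\mathbf p+c\ones$ for some scalar $c$. Applying $\mathbf\Pi$, and using that $\mathbf\Pi\mathbf b=\mathbf b$ for zero-mean $\mathbf b$ (as $\mathbf\Pi$ is the orthogonal projector onto $\ones^\perp$), yields
\[
\mathbf b=\mathbf\Pi\log\mathbf p+c\,\mathbf\Pi\ones=\mathbf a .
\]

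There is no real obstacle here. The only point requiring care is the one step that turns on interpretation: noting that finiteness of $\mathbf z$ ensures $\mathbf p$ has strictly positive entries, so the logarithm is legitimate. The remaining steps are routine.
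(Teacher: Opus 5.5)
Your proof is correct and follows essentially the same route as the paper: the log-softmax identity combined with $\mathbf\Pi\ones=\mathbf 0$, shift-invariance of softmax, and uniqueness via $\mathbf b=\log\mathbf p+c\ones$ followed by applying $\mathbf\Pi$. Your explicit checks that $\mathbf p$ has strictly positive entries and that $\ones^\top\mathbf a=0$ state two points the paper leaves implicit.
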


\begin{proof}
Writing \(Z(\mathbf z)=\sum_{j=1}^K\exp(z_j)\), we have
\begin{equation}
    \log\mathbf p
    =
    \mathbf z-\log Z(\mathbf z)\ones .
    \label{eq:log_softmax_vector}
\end{equation}
Multiplication by $\mathbf\Pi$ removes the class-independent term, which
proves \(\mathbf\Pi\log\mathbf p=\mathbf\Pi\mathbf z\).
Furthermore,
\[
    \mathbf\Pi\mathbf z
    =
    \mathbf z-\bar z\ones,
    \qquad
    \bar z=\frac{1}{K}\ones^\top\mathbf z .
\]
Softmax is invariant to addition of a common scalar, hence
\(\operatorname{softmax}(\mathbf\Pi\mathbf z)
=\operatorname{softmax}(\mathbf z)=\mathbf p\).

For uniqueness, suppose that $\mathbf b\in\ones^\perp$ and
\(\operatorname{softmax}(\mathbf b)=\mathbf p\). Equality of two softmax
outputs implies \(\mathbf b=\log\mathbf p+c\ones\) for some
\(c\in\mathbb R\). Applying $\mathbf\Pi$ and using
\(\mathbf\Pi\mathbf b=\mathbf b\) gives
\(\mathbf b=\mathbf\Pi\log\mathbf p=\mathbf a\).
\end{proof}

The representation preserves all pairwise log odds:
\begin{equation}
    a_j-a_k
    =
    z_j-z_k
    =
    \log\frac{p_j}{p_k}
    \quad\text{for every }j,k.
    \label{eq:pairwise_log_odds}
\end{equation}
It therefore preserves both the distribution and its top-1 decision while
removing the arbitrary additive logit offset. In particular, for any
\(c\in\mathbb R\),
\(\operatorname{softmax}(\mathbf z+c\ones)
=\operatorname{softmax}(\mathbf z)\), and every member of this equivalence
class maps to the same $\mathbf a$.

\paragraph{What centering does not remove.}
Centering is a coordinate choice, not confidence calibration. For
\(\tau>0\),
\(\operatorname{softmax}(\tau\mathbf a)\) generally differs from
\(\operatorname{softmax}(\mathbf a)\); thus, multiplicative logit-scale
differences between heterogeneous experts remain. Correspondingly, the
entropy-conditioned allocation in the main paper uses the raw predictive
concentration of each expert and should not be interpreted as calibrated
correctness.

\paragraph{Concentration relative to uniformity.}
Let \(\uniform=(1/K)\ones\). For any strictly positive categorical
distribution $\mathbf p$,
\begin{equation}
    \log K-H(\mathbf p)
    =
    \sum_{k=1}^K p_k\log\frac{p_k}{1/K}
    =
    \operatorname{KL}(\mathbf p\Vert\uniform)\geq 0 .
    \label{eq:concentration_as_kl}
\end{equation}
Equality holds only for the uniform distribution. This identity explains
why Eq.~(2) of the main paper measures departure from uniformity. It does
not imply that a more concentrated prediction is necessarily more
accurate under domain shift.

\subsection{Derivation of the Entropy-Conditioned Reverse-KL Barycenter}
\label{app:reverse_kl_barycenter}

Fix a target sample $x_i$ and an optimization step $\ell$. Let
\(\mathcal M\) be the active expert set and, for each \(m\in\mathcal M\),
let
\[
    \mathbf p_{m,i}^{(\ell)}
    =
    \operatorname{softmax}\!\left(\mathbf z_{m,i}^{(\ell)}\right)
    \in\operatorname{int}(\simplex).
\]
Finite softmax logits make every class probability strictly positive. As
in Eq.~(2) of the main paper, define
\begin{equation}
\begin{split}
    r_{m,i}^{(\ell)}
    &=
    \max\!\left\{
        \log K-H\!\left(\mathbf p_{m,i}^{(\ell)}\right),
        \epsilon
    \right\},\\
    w_{m,i}^{(\ell)}
    &=
    \frac{r_{m,i}^{(\ell)}}
    {\sum_{n\in\mathcal M}r_{n,i}^{(\ell)}},
    \qquad \epsilon>0 .
    \label{eq:entropy_conditioned_weights}
\end{split}
\end{equation}
Consequently,
\begin{equation}
    w_{m,i}^{(\ell)}>0,
    \qquad
    \sum_{m\in\mathcal M}w_{m,i}^{(\ell)}=1.
    \label{eq:weight_simplex}
\end{equation}
The weights depend on the current expert predictions, but they are fixed
with respect to the barycenter variable $\mathbf q$ in the following
optimization.

To simplify notation within the derivation, omit $i$ and $\ell$, and write
\(\mathbf p_m=\mathbf p_{m,i}^{(\ell)}\) and
\(w_m=w_{m,i}^{(\ell)}\). The shared prediction in Eq.~(3) of the main
paper is
\begin{equation}
    \mathbf q_{\mathcal M}^{\star}
    =
    \arg\min_{\mathbf q\in\simplex}
    \sum_{m\in\mathcal M}
    w_m\operatorname{KL}(\mathbf q\Vert\mathbf p_m).
    \label{eq:reverse_kl_problem}
\end{equation}

\begin{proposition}[Closed-form reverse-KL barycenter]
\label{prop:reverse_kl_solution}
The problem in Eq.~\eqref{eq:reverse_kl_problem} has the unique solution
\begin{equation}
    q_{\mathcal M,k}^{\star}
    =
    \frac{\displaystyle\prod_{m\in\mathcal M}p_{m,k}^{\,w_m}}
         {\displaystyle\sum_{j=1}^K
          \prod_{m\in\mathcal M}p_{m,j}^{\,w_m}},
    \qquad k=1,\ldots,K .
    \label{eq:weighted_poe_solution}
\end{equation}
\end{proposition}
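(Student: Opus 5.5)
The plan is to rewrite the weighted objective as a single KL divergence to the normalized weighted geometric mean. The weights are fixed with respect to $\mathbf q$ and satisfy Eq.~\eqref{eq:weight_simplex}, so the cross terms combine cleanly. Define the unnormalized product $g_k=\prod_{m\in\mathcal M}p_{m,k}^{\,w_m}$, its normalizer $Z=\sum_{j=1}^K g_j$, and $\mathbf q^\star=\mathbf g/Z$. Every $p_{m,k}>0$, because the logits are finite, so every $g_k>0$. Hence $Z\in(0,\infty)$ and $\mathbf q^\star\in\operatorname{int}(\simplex)$ is well defined.

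The key step is an algebraic identity valid for every $\mathbf q\in\simplex$, using the convention $0\log 0=0$. Expanding,
\[
\sum_{m}w_m\KL(\mathbf q\Vert\mathbf p_m)=\sum_k q_k\log q_k-\sum_k q_k\sum_m w_m\log p_{m,k}.
\]
The sum over $m$ in the second term equals $\log g_k=\log q^\star_k+\log Z$. Since $\sum_k q_k=1$, this yields
\[
\sum_{m}w_m\KL(\mathbf q\Vert\mathbf p_m)=\KL(\mathbf q\Vert\mathbf q^\star)-\log Z .
\]
The constant $\log Z$ is independent of $\mathbf q$. Here $\sum_m w_m=1$ is not needed for the identity itself, only that the weights are constant in $\mathbf q$. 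However, positivity of the weights ensures the objective is finite on the whole simplex.

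Gibbs' inequality then says $\KL(\mathbf q\Vert\mathbf q^\star)\ge 0$, with equality if and only if $\mathbf q=\mathbf q^\star$. This holds because $\mathbf q^\star$ has full support, so $\KL(\cdot\Vert\mathbf q^\star)$ is finite on the simplex. Consequently $\mathbf q^\star$ attains the minimum value $-\log Z$ and is the unique minimizer, which is exactly Eq.~\eqref{eq:weighted_poe_solution}. As a remark, taking logs gives $\log\mathbf q^\star=\sum_m w_m\log\mathbf p_m-\log Z\,\ones$. Applying $\mathbf\Pi$ and Proposition~\ref{prop:centered_coordinates} recovers the centered-logit form $\mathbf c=\sum_m w_m\mathbf a_m$ with $\mathbf q^\star=\softmax(\mathbf c)$. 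This step uses $\sum_m w_m=1$ only to identify $\mathbf c$ with the weighted average of the $\mathbf a_m$.

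The main obstacle is the boundary of the simplex, where some $q_k=0$. A Lagrangian/KKT argument would need a separate check that no boundary point is optimal. The KL-rewriting route sidesteps this, since the identity holds on the closed simplex under the $0\log0$ convention. The only care needed is to state that convention and to verify that $\mathbf q^\star$ has full support.
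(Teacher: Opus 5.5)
Your proof is correct and is essentially the paper's argument. The paper first obtains the product form from Lagrangian stationarity, but its rigorous global-uniqueness step (including the simplex boundary) is exactly your decomposition $\sum_m w_m\KL(\mathbf q\Vert\mathbf p_m)=\KL(\mathbf q\Vert\mathbf q^\star)-\log G$ on the closed simplex, combined with Gibbs' inequality.

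However, your side remarks about which hypotheses are used are backwards:
\begin{itemize}
\item The normalization $\sum_m w_m=1$ \emph{is} needed for the identity. The entropy term of your expansion is really $\bigl(\sum_m w_m\bigr)\sum_k q_k\log q_k$. For positive weights with $W=\sum_m w_m\neq 1$, the minimizer would be proportional to $g_k^{1/W}$ rather than $g_k$, so the stated formula would fail.
\item Positivity of the weights is not what makes the objective finite; the full support of each $\mathbf p_m$ does that.
\item The centered-logit identification $\mathbf\Pi\log\mathbf q^\star=\sum_m w_m\mathbf a_m$ requires no normalization, because $\mathbf\Pi\ones=\mathbf 0$ removes the $\log Z$ term regardless.
\end{itemize}
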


\begin{proof}
Introduce a Lagrange multiplier $\eta$ for
\(\sum_kq_k=1\). On the simplex interior, the Lagrangian is
\begin{equation}
\begin{aligned}
    \mathcal L(\mathbf q,\eta)
    &=
    \sum_{m\in\mathcal M}w_m
    \sum_{k=1}^K q_k\log\frac{q_k}{p_{m,k}} \\
    &\quad
    +\eta\left(\sum_{k=1}^Kq_k-1\right).
\end{aligned}
\label{eq:barycenter_lagrangian}
\end{equation}
Using \(\sum_mw_m=1\), stationarity with respect to $q_k$ gives
\begin{equation}
    \frac{\partial\mathcal L}{\partial q_k}
    =
    \log q_k+1-\sum_{m\in\mathcal M}w_m\log p_{m,k}+\eta
    =
    0 .
    \label{eq:barycenter_stationarity}
\end{equation}
Therefore,
\begin{equation}
    q_k
    =
    C\exp\!\left(
        \sum_{m\in\mathcal M}w_m\log p_{m,k}
    \right)
    =
    C\prod_{m\in\mathcal M}p_{m,k}^{\,w_m},
    \label{eq:unnormalized_poe}
\end{equation}
where the class-independent constant $C>0$ is determined by
normalization. This yields Eq.~\eqref{eq:weighted_poe_solution}.

For completeness, define
\[
    g_k=\prod_{m\in\mathcal M}p_{m,k}^{\,w_m},
    \qquad
    G=\sum_{j=1}^K g_j,
    \qquad
    q_k^\star=g_k/G .
\]
Using the standard convention \(0\log 0=0\), equivalently
\(0\log(0/a)=0\) for \(a>0\), we have, for every
\(\mathbf q\in\simplex\),
\begin{equation}
\begin{split}
    \sum_{m\in\mathcal M}w_m
    \operatorname{KL}(\mathbf q\Vert\mathbf p_m)
    &=
    \sum_{k=1}^Kq_k\log\frac{q_k}{g_k}\\
    &=
    \operatorname{KL}(\mathbf q\Vert\mathbf q^\star)-\log G .
    \label{eq:barycenter_kl_decomposition}
\end{split}
\end{equation}
Because all $g_k$ are positive, $\mathbf q^\star$ lies in the simplex
interior. The nonnegativity and equality condition of KL divergence show
that it is the unique global minimizer, including over the simplex
boundary.
\end{proof}

\begin{corollary}[Centered-logit and weighted-PoE equivalence]
\label{cor:centered_logit_poe}
Let
\begin{equation}
    \mathbf a_m
    =
    \mathbf\Pi\log\mathbf p_m
    =
    \mathbf\Pi\mathbf z_m .
    \label{eq:expert_centered_coordinates}
\end{equation}
Then the barycenter can equivalently be written as
\begin{equation}
\begin{split}
    \mathbf c_{\mathcal M}
    &=
    \sum_{m\in\mathcal M}w_m\mathbf a_m,\\
    \mathbf q_{\mathcal M}^{\star}
    &=
    \operatorname{softmax}(\mathbf c_{\mathcal M})
    =
    \operatorname{Normalize}\!\left(
        \prod_{m\in\mathcal M}\mathbf p_m^{\,w_m}
    \right),
    \label{eq:three_barycenter_views}
\end{split}
\end{equation}
where the product and exponentiation are component-wise and
\(\operatorname{Normalize}(\mathbf g)=\mathbf g/(\ones^\top\mathbf g)\).
\end{corollary}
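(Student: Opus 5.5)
The plan is to start from Proposition~\ref{prop:reverse_kl_solution}, which already gives the barycenter as the normalized componentwise product $\prod_m\mathbf p_m^{\,w_m}$. The second equality in Eq.~\eqref{eq:three_barycenter_views} is then just a restatement of Eq.~\eqref{eq:weighted_poe_solution} in vector notation. So the real content is the first equality: showing that $\operatorname{softmax}(\mathbf c_{\mathcal M})$ coincides with that normalized product.

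To prove it, I will take logarithms of the unnormalized product, $\log g_k=\sum_m w_m\log p_{m,k}$, i.e.\ $\log\mathbf g=\sum_m w_m\log\mathbf p_m$. By Eq.~\eqref{eq:log_softmax_vector}, each $\log\mathbf p_m=\mathbf z_m-\log Z(\mathbf z_m)\ones$. Applying $\mathbf\Pi$ and using linearity gives
\[
\mathbf\Pi\log\mathbf g=\sum_m w_m\mathbf\Pi\log\mathbf p_m=\sum_m w_m\mathbf a_m=\mathbf c_{\mathcal M}.
\]
Since $\mathbf\Pi\log\mathbf g=\log\mathbf g-\bar\ell\,\ones$ for the scalar mean $\bar\ell$, the vector $\mathbf c_{\mathcal M}$ differs from $\log\mathbf g$ only by a multiple of $\ones$. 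Softmax is shift-invariant, so $\operatorname{softmax}(\mathbf c_{\mathcal M})=\operatorname{softmax}(\log\mathbf g)=\mathbf g/(\ones^\top\mathbf g)=\mathbf q^\star_{\mathcal M}$.

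As a side remark, I can also note that $\mathbf c_{\mathcal M}\in\ones^\perp$, because it is a combination of centered vectors. By the uniqueness part of Proposition~\ref{prop:centered_coordinates}, it is therefore exactly the centered coordinate of $\mathbf q^\star_{\mathcal M}$. This justifies caching $\mathbf c$ as the anchor.

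There is no real obstacle here. The only point needing care is that every $p_{m,k}>0$, so all the logarithms are finite. This holds because the logits are finite, as established before Proposition~\ref{prop:reverse_kl_solution}. Note also that $\sum_m w_m=1$ is not needed for this identity; it only matters for the barycenter derivation already done.
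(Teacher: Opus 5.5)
Your proof is correct and takes essentially the same route as the paper. Both show that $\mathbf c_{\mathcal M}$ differs from $\sum_m w_m\log\mathbf p_m$ only by a multiple of $\ones$, and then combine softmax shift-invariance with the weighted-PoE closed form; you get the shift by applying $\mathbf\Pi$ to $\log\mathbf g$, while the paper writes each $\log\mathbf p_m=\mathbf a_m+b_m\ones$.
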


\begin{proof}
By Proposition~\ref{prop:centered_coordinates},
\(\log\mathbf p_m=\mathbf a_m+b_m\ones\) for some scalar $b_m$. Hence
\[
    \sum_{m\in\mathcal M}w_m\log\mathbf p_m
    =
    \sum_{m\in\mathcal M}w_m\mathbf a_m
    +
    \left(\sum_{m\in\mathcal M}w_mb_m\right)\ones .
\]
The final term is class independent and disappears under softmax.
Combining this fact with Eq.~\eqref{eq:weighted_poe_solution} proves all
three forms.
\end{proof}

Thus, ``entropy-conditioned'' specifies how the sample-wise weights are
chosen, while ``reverse-KL barycenter,'' ``weighted product of experts,''
and ``weighted mean in centered-logit coordinates'' are three equivalent
descriptions of the resulting consensus.

\paragraph{Relation to probability averaging.}
The KL direction is essential to this form. Reversing the arguments gives
the forward-KL problem
\begin{equation}
    \arg\min_{\mathbf q\in\simplex}
    \sum_{m\in\mathcal M}
    w_m\operatorname{KL}(\mathbf p_m\Vert\mathbf q)
    =
    \sum_{m\in\mathcal M}w_m\mathbf p_m,
    \label{eq:forward_kl_mixture}
\end{equation}
which is the arithmetic mixture of expert probabilities. By contrast,
Eq.~\eqref{eq:weighted_poe_solution} pools class-wise log evidence. This
derivation establishes the geometry of the construction; it does not by
itself assert that one fusion rule must be more accurate.

\subsection{Decision-Retention Conditions under Expert Conflict}
\label{app:decision_retention}

We now specialize the initial construction to the active pair
\(\mathcal M=\{s,v\}\). For a fixed sample $x_i$, abbreviate
\begin{equation}
\begin{gathered}
    \mathbf p_s=\mathbf p_{s,i},\qquad
    \mathbf p_v=\mathbf p_{v,i}^{(0)},\\
    w_s=w_{s,i}^{(0)},\qquad
    w_v=w_{v,i}^{(0)},\qquad
    \mathbf q^{(0)}=\mathbf q_{\{s,v\},i}^{(0)} .
    \label{eq:initial_abbreviations}
\end{gathered}
\end{equation}
The weights are strictly positive and satisfy \(w_s+w_v=1\). Assume
first that each expert has a unique top-1 decision,
\begin{equation}
    y_s=\argmax_k p_{s,k},
    \qquad
    y_v=\argmax_k p_{v,k}.
    \label{eq:expert_top1}
\end{equation}
We first consider the generic no-tie case in which each expert has a
unique top-1 decision. Ties are discussed at the end of this subsection.

\begin{lemma}[Pairwise consensus margin]
\label{lem:pairwise_consensus_margin}
For any two classes $j$ and $k$, the initial consensus satisfies
\begin{equation}
    \log\frac{q_j^{(0)}}{q_k^{(0)}}
    =
    w_s\log\frac{p_{s,j}}{p_{s,k}}
    +
    w_v\log\frac{p_{v,j}}{p_{v,k}} .
    \label{eq:consensus_pairwise_margin}
\end{equation}
\end{lemma}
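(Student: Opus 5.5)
The identity follows by reading off pairwise log odds from the weighted-PoE form of the initial consensus. With the active pair $\mathcal M=\{s,v\}$ and weights $w_s,w_v$ held fixed (they depend only on the expert predictions, not on $\mathbf q$), Proposition~\ref{prop:reverse_kl_solution} gives
\[
q_k^{(0)}=\frac{p_{s,k}^{\,w_s}\,p_{v,k}^{\,w_v}}{G},\qquad G=\sum_{j=1}^K p_{s,j}^{\,w_s}\,p_{v,j}^{\,w_v}>0 .
\]
All factors are strictly positive because finite logits give $\mathbf p_s,\mathbf p_v\in\operatorname{int}(\simplex)$, so $q_k^{(0)}>0$ for every $k$ and logarithms of the ratios are well defined.

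Forming the ratio $q_j^{(0)}/q_k^{(0)}$ cancels the class-independent normalizer $G$. This leaves $(p_{s,j}/p_{s,k})^{w_s}(p_{v,j}/p_{v,k})^{w_v}$, and taking logarithms gives Eq.~\eqref{eq:consensus_pairwise_margin} directly.

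Alternatively, Corollary~\ref{cor:centered_logit_poe} gives $\mathbf q^{(0)}=\operatorname{softmax}(\mathbf c)$ with $\mathbf c=w_s\mathbf a_s+w_v\mathbf a_v$. By Eq.~\eqref{eq:pairwise_log_odds} applied to $\mathbf c$, we get $\log(q_j^{(0)}/q_k^{(0)})=c_j-c_k$. Linearity gives $c_j-c_k=w_s(a_{s,j}-a_{s,k})+w_v(a_{v,j}-a_{v,k})$, and applying Eq.~\eqref{eq:pairwise_log_odds} once more to each expert recovers the stated form.

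No step is a genuine obstacle. The only point needing care is that the weights are treated as constants and that strict positivity justifies every logarithm; both were established earlier in Eq.~\eqref{eq:weight_simplex} and the interior-simplex remark. The case $j=k$ is trivial, since both sides are zero.
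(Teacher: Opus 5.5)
Your proof is correct and is essentially the paper's: take the ratio of the $j$th and $k$th entries of the weighted-PoE solution in Proposition~\ref{prop:reverse_kl_solution}, let the shared normalizer cancel, and take logarithms. Your centered-logit alternative via Corollary~\ref{cor:centered_logit_poe} is also valid, but it restates the same cancellation in different coordinates.
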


\begin{proof}
Take the logarithm of the ratio between the $j$th and $k$th entries in
Eq.~\eqref{eq:weighted_poe_solution}. The shared normalization constant
cancels.
\end{proof}

\paragraph{Agreement case.}
If \(y_s=y_v=y\), then both terms on the right-hand side of
Eq.~\eqref{eq:consensus_pairwise_margin} are strictly positive for every
\(k\neq y\). Therefore,
\begin{equation}
    y_s=y_v=y
    \quad\Longrightarrow\quad
    \argmax_k q_k^{(0)}=y .
    \label{eq:agreement_retention}
\end{equation}
Hence, conflict is the only case in which allocation must mediate
opposing top-1 evidence.

\begin{proposition}[Exact source-decision retention condition]
\label{prop:exact_source_retention}
Suppose \(y_s\neq y_v\). For each \(k\neq y_s\), define the positive
source margin and the VLM opposition against the source decision as
\begin{equation}
\begin{split}
    m_s(k)
    &=
    \log\frac{p_{s,y_s}}{p_{s,k}}>0,\\
    b_{v\rightarrow s}(k)
    &=
    \left[
        \log\frac{p_{v,k}}{p_{v,y_s}}
    \right]_+ ,
    \qquad [a]_+=\max\{a,0\}.
    \label{eq:source_margin_and_vlm_opposition}
\end{split}
\end{equation}
The initial consensus uniquely retains the source decision if and only if
\begin{equation}
    w_s m_s(k)
    >
    w_v b_{v\rightarrow s}(k)
    \qquad\text{for every }k\neq y_s .
    \label{eq:exact_source_retention}
\end{equation}
\end{proposition}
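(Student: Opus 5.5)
The plan is to reduce everything to Lemma~\ref{lem:pairwise_consensus_margin}. "The initial consensus uniquely retains the source decision" means $\argmax_k q_k^{(0)}=\{y_s\}$. Equivalently, $\log(q^{(0)}_{y_s}/q^{(0)}_k)>0$ for every $k\neq y_s$. Taking $j=y_s$ in Eq.~\eqref{eq:consensus_pairwise_margin}, and noting that $\log(p_{v,y_s}/p_{v,k})=-\log(p_{v,k}/p_{v,y_s})$, the retention condition is equivalent to
\[
    w_s m_s(k) > w_v \log\frac{p_{v,k}}{p_{v,y_s}}
    \qquad\text{for every }k\neq y_s .
\]
The only remaining task is to replace the signed VLM log-ratio by its positive part $b_{v\rightarrow s}(k)$. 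The argument is a per-class case split, which I would prove as two implications.

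($\Leftarrow$) Assume Eq.~\eqref{eq:exact_source_retention} holds. Since $[a]_+\geq a$ and $w_v>0$ by Eq.~\eqref{eq:weight_simplex}, we get $w_s m_s(k)>w_v b_{v\rightarrow s}(k)\geq w_v\log(p_{v,k}/p_{v,y_s})$. The displayed inequality therefore holds for every $k\neq y_s$, which gives unique retention.

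($\Rightarrow$) Assume unique retention, and fix $k\neq y_s$. There are two cases.
\begin{itemize}
    \item If $\log(p_{v,k}/p_{v,y_s})\geq 0$, the positive part equals the log-ratio itself, and the displayed inequality is exactly Eq.~\eqref{eq:exact_source_retention} for this $k$.
    \item If the log-ratio is negative, then $b_{v\rightarrow s}(k)=0$. The required inequality becomes $w_s m_s(k)>0$. This holds because $w_s>0$ and $m_s(k)>0$, the latter by uniqueness of the source top-1 decision (the no-tie assumption in Eq.~\eqref{eq:expert_top1}).
\end{itemize}

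The main point needing care is this second case. Here the positive part could in principle make the condition weaker or stronger than the signed one. It is harmless only because the source margin is strictly positive, so the asymmetric $[\cdot]_+$ formulation is exact rather than merely sufficient. I would make explicit that $m_s(k)>0$ uses the no-tie assumption, that $w_s,w_v>0$ comes from the $\epsilon$-floor in Eq.~\eqref{eq:entropy_conditioned_weights}, and that all log-ratios are finite because every expert probability lies in the simplex interior. I would also remark that the conflict hypothesis $y_s\neq y_v$ is not used in the logic. It only guarantees that $b_{v\rightarrow s}(y_v)>0$, so the condition is nontrivial for at least one competitor.
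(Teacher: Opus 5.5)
Your proposal is correct and follows essentially the same route as the paper: apply Lemma~\ref{lem:pairwise_consensus_margin} with $j=y_s$, then split on the sign of $\log(p_{v,k}/p_{v,y_s})$, using $w_s>0$ and $m_s(k)>0$ to handle the case where the VLM does not oppose $y_s$. One small wording fix: since $[a]_+\geq a$, the positive-part condition can a priori only be \emph{stronger} than the signed one, never weaker, and strict positivity of $m_s(k)$ is exactly what rules out that strengthening.
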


\begin{proof}
By Lemma~\ref{lem:pairwise_consensus_margin},
\begin{equation}
\begin{split}
    \log\frac{q_{y_s}^{(0)}}{q_k^{(0)}}
    &=
    w_sm_s(k)
    -
    w_v\log\frac{p_{v,k}}{p_{v,y_s}} .
    \label{eq:source_pairwise_consensus_margin}
\end{split}
\end{equation}
If \(p_{v,k}\leq p_{v,y_s}\), the second term in
Eq.~\eqref{eq:source_pairwise_consensus_margin} is nonnegative, so the
consensus strictly favors \(y_s\) over $k$ because \(m_s(k)>0\). If
\(p_{v,k}>p_{v,y_s}\), positivity of the consensus margin is exactly
\(w_sm_s(k)>w_vb_{v\rightarrow s}(k)\). Combining the two cases, the
consensus favors \(y_s\) over every competing class if and only if
Eq.~\eqref{eq:exact_source_retention} holds. This is equivalent to
\(y_s\) being the unique consensus maximizer.
\end{proof}

Because \(w_s/w_v=r_s/r_v\), the same condition can be written directly
in terms of the entropy-conditioned concentration scores:
\begin{equation}
    r_s m_s(k)
    >
    r_v b_{v\rightarrow s}(k)
    \qquad\text{for every }k\neq y_s ,
    \label{eq:source_retention_concentration_form}
\end{equation}
where
\[
    \begin{aligned}
        r_s&=\max\{\log K-H(\mathbf p_s),\epsilon\},\\
        r_v&=\max\{\log K-H(\mathbf p_v),\epsilon\}.
    \end{aligned}
\]
Equivalently, define
\begin{equation}
    \rho_{v\rightarrow s}
    =
    \max_{k\neq y_s}
    \frac{b_{v\rightarrow s}(k)}{m_s(k)} .
    \label{eq:source_critical_ratio}
\end{equation}
Then the exact condition is
\begin{equation}
    \frac{w_s}{w_v}
    =
    \frac{r_s}{r_v}
    >
    \rho_{v\rightarrow s}.
    \label{eq:source_ratio_condition}
\end{equation}
The maximum over all competitors is necessary: in a multiclass problem,
the weighted product may select a third class that is the top-1 decision
of neither expert. Comparing only \(y_s\) with \(y_v\) therefore does not
establish source-decision retention.

\begin{corollary}[Compact sufficient source condition]
\label{cor:compact_source_condition}
Define
\begin{equation}
\begin{split}
    M_s
    &=
    \min_{k\neq y_s}m_s(k),\\
    B_{v\rightarrow s}
    &=
    \max_{k\neq y_s}b_{v\rightarrow s}(k).
    \label{eq:compact_source_quantities}
\end{split}
\end{equation}
Then
\begin{equation}
    w_sM_s>w_vB_{v\rightarrow s}
    \quad\Longrightarrow\quad
    \argmax_kq_k^{(0)}=y_s .
    \label{eq:compact_source_sufficient_condition}
\end{equation}
\end{corollary}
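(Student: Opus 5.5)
The plan is to reduce the statement to Proposition~\ref{prop:exact_source_retention}. That proposition shows that the initial consensus uniquely selects $y_s$ exactly when $w_s m_s(k) > w_v b_{v\rightarrow s}(k)$ holds for every competitor $k\neq y_s$. So it suffices to check that the single compact inequality $w_sM_s>w_vB_{v\rightarrow s}$ implies this whole family of per-class inequalities. Because the corollary claims only sufficiency, we never need the converse.

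Fix an arbitrary $k\neq y_s$ and chain three bounds. First, $M_s$ is the minimum of $m_s(\cdot)$ over competitors, and $w_s>0$ by Eq.~\eqref{eq:weight_simplex}; hence $w_s m_s(k)\geq w_sM_s$. Second, the hypothesis gives $w_sM_s>w_vB_{v\rightarrow s}$. Third, $B_{v\rightarrow s}$ is the maximum of $b_{v\rightarrow s}(\cdot)$, and $w_v>0$; hence $w_vB_{v\rightarrow s}\geq w_v b_{v\rightarrow s}(k)$. Together these yield $w_s m_s(k)>w_v b_{v\rightarrow s}(k)$. Since $k$ was arbitrary, Eq.~\eqref{eq:exact_source_retention} holds, and Proposition~\ref{prop:exact_source_retention} gives $\argmax_k q^{(0)}_k=y_s$ as the unique maximizer.

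The only point needing care is that the minimum and maximum are well defined and attained. This holds because $K\geq 2$, so the competitor set is finite and nonempty. Also $M_s>0$, since the no-tie assumption makes every $m_s(k)$ strictly positive. I would also note that Proposition~\ref{prop:exact_source_retention} is stated under $y_s\neq y_v$, whereas the corollary does not assume conflict. If $y_s=y_v$, I would instead invoke the agreement case, Eq.~\eqref{eq:agreement_retention}, which gives the conclusion directly. In that case $B_{v\rightarrow s}=0$, so the hypothesis is automatically satisfied.

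No step is a real obstacle; this case split is the only subtlety. The argument is a routine monotonicity (min/max) relaxation of the exact condition.
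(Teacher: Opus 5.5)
Your proposal is correct and follows the paper's proof: for each competitor $k\neq y_s$ you chain $w_s m_s(k)\geq w_s M_s > w_v B_{v\rightarrow s}\geq w_v b_{v\rightarrow s}(k)$ and then invoke Proposition~\ref{prop:exact_source_retention}. Your extra treatment of the agreement case $y_s=y_v$ (where $B_{v\rightarrow s}=0$) and your remarks on well-definedness are harmless refinements that the paper leaves implicit.
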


\begin{proof}
For every \(k\neq y_s\),
\[
    w_sm_s(k)
    \geq w_sM_s
    >
    w_vB_{v\rightarrow s}
    \geq w_vb_{v\rightarrow s}(k).
\]
Proposition~\ref{prop:exact_source_retention} completes the proof.
\end{proof}

The compact inequality is sufficient but not necessary because the
smallest source margin and largest VLM opposition may occur at different
classes. Equation~\eqref{eq:exact_source_retention} or, equivalently,
Eq.~\eqref{eq:source_ratio_condition}, gives the exact condition.

\begin{proposition}[Exact VLM-decision retention condition]
\label{prop:exact_vlm_retention}
Suppose \(y_s\neq y_v\). For every \(k\neq y_v\), define
\begin{equation}
\begin{split}
    m_v(k)
    &=
    \log\frac{p_{v,y_v}}{p_{v,k}}>0,\\
    b_{s\rightarrow v}(k)
    &=
    \left[
        \log\frac{p_{s,k}}{p_{s,y_v}}
    \right]_+ .
    \label{eq:vlm_margin_and_source_opposition}
\end{split}
\end{equation}
The initial consensus uniquely retains the VLM decision if and only if
\begin{equation}
    w_vm_v(k)
    >
    w_sb_{s\rightarrow v}(k)
    \qquad\text{for every }k\neq y_v .
    \label{eq:exact_vlm_retention}
\end{equation}
Equivalently, with
\begin{equation}
    \rho_{s\rightarrow v}
    =
    \max_{k\neq y_v}
    \frac{b_{s\rightarrow v}(k)}{m_v(k)},
    \label{eq:vlm_critical_ratio}
\end{equation}
the condition is \(w_v/w_s=r_v/r_s>\rho_{s\rightarrow v}\).
\end{proposition}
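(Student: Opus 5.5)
The plan mirrors the proof of Proposition~\ref{prop:exact_source_retention} with the roles of $s$ and $v$ exchanged. The starting point is Lemma~\ref{lem:pairwise_consensus_margin} with $j=y_v$ and an arbitrary $k\neq y_v$:
\[
    \log\frac{q_{y_v}^{(0)}}{q_k^{(0)}}
    =
    w_v m_v(k)
    -
    w_s\log\frac{p_{s,k}}{p_{s,y_v}} .
\]
Here $m_v(k)>0$ for every $k\neq y_v$, because $y_v$ is the unique top-1 class of $\mathbf p_v$ in the no-tie case. All probabilities are strictly positive, so every logarithm is finite.

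I then split into two cases according to the sign of $\log(p_{s,k}/p_{s,y_v})$.

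\begin{itemize}
\item If $p_{s,k}\le p_{s,y_v}$, the subtracted term is nonpositive, so the consensus margin is at least $w_vm_v(k)>0$. In this case $b_{s\rightarrow v}(k)=0$, and the inequality $w_vm_v(k)>w_sb_{s\rightarrow v}(k)$ holds automatically. Both sides of the claimed equivalence are therefore true for this $k$.
\item If $p_{s,k}>p_{s,y_v}$, then $b_{s\rightarrow v}(k)$ equals the log ratio itself. Positivity of the consensus margin is then literally the inequality $w_vm_v(k)>w_sb_{s\rightarrow v}(k)$.
\end{itemize}

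Hence, for each $k\neq y_v$, the consensus strictly prefers $y_v$ over $k$ if and only if Eq.~\eqref{eq:exact_vlm_retention} holds for that $k$. Quantifying over all $k\neq y_v$ gives the statement, since $y_v$ is the unique maximizer of $\mathbf q^{(0)}$ exactly when $q^{(0)}_{y_v}>q^{(0)}_k$ for every $k\neq y_v$.

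For the ratio form, I divide by $w_s\,m_v(k)>0$; both $w_s$ and $m_v(k)$ are strictly positive by Eq.~\eqref{eq:weight_simplex}. This rewrites the condition as $w_v/w_s>b_{s\rightarrow v}(k)/m_v(k)$ for all $k\neq y_v$. Since $K$ is finite, this is equivalent to $w_v/w_s>\rho_{s\rightarrow v}$. Finally, $w_v/w_s=r_v/r_s$ because both weights share the normalizer $\sum_n r_n$.

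No step is a real obstacle. The only point needing care is the case split: the $[\cdot]_+$ truncation must not lose the equivalence when the source opposition is negative. As noted above, in that case both sides hold trivially, so the equivalence is preserved.
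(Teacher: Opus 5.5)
Your proposal is correct and follows the paper's route: the paper proves this by exchanging the roles of $s$ and $v$ in the source-retention proof. That argument is exactly your pairwise-margin case split on the sign of $\log(p_{s,k}/p_{s,y_v})$, starting from the pairwise consensus margin lemma. Your passage to the ratio form, dividing by $w_s m_v(k)>0$ and using the shared normalizer to get $w_v/w_s=r_v/r_s$, spells out an equivalence the paper only asserts.
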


\begin{proof}
Exchange the roles of $s$ and $v$ in the proof of
Proposition~\ref{prop:exact_source_retention}.
\end{proof}

A compact sufficient VLM condition follows symmetrically. Let
\begin{equation}
    M_v=\min_{k\neq y_v}m_v(k),
    \qquad
    B_{s\rightarrow v}
    =
    \max_{k\neq y_v}b_{s\rightarrow v}(k).
    \label{eq:compact_vlm_quantities}
\end{equation}
Then
\begin{equation}
    w_vM_v>w_sB_{s\rightarrow v}
    \quad\Longrightarrow\quad
    \argmax_kq_k^{(0)}=y_v .
    \label{eq:compact_vlm_sufficient_condition}
\end{equation}

\paragraph{Tie convention and scope of the result.}
Strict inequalities above guarantee a unique retained decision. Replacing
``$>$'' by ``$\geq$'' guarantees only that the corresponding expert
decision belongs to the consensus argmax set; the reported top-1 label
then depends on the deterministic tie-breaking rule. These conditions
characterize \emph{decision retention}, not correctness. A retained
source decision may be wrong, and a rejected source decision may be
correct. This is why the conflict-conditioned analysis in the main paper
uses labels only post hoc to separately measure source-only retention and
VLM-only absorption.

\subsection{Formal Properties of Consensus Shift Modulation}
\label{app:csm_properties}

We retain the notation of Eqs.~(5)--(6) in the main paper. At the
beginning of epoch \(e\in\{0,\ldots,E-1\}\), with \(E>1\), the
unmodulated dynamic consensus is evaluated over all \(N>1\) target
samples, producing \(\{\bar{\mathbf q}_i^{(e)}\}_{i=1}^N\). Define
\begin{equation}
\begin{split}
    h_i^{(e)}
    &=
    \frac{H(\bar{\mathbf q}_i^{(e)})}{\log K},\\
    u_i^{(e)}
    &=
    2\frac{\operatorname{rank}^{(e)}(h_i^{(e)})}{N-1}-1,\\
    d(e)
    &=
    1-\frac{e}{E-1},\\
    \gamma_i^{(e)}
    &=
    1+\lambda d(e)u_i^{(e)},
    \qquad 0\leq\lambda<1 .
    \label{eq:csm_factors}
\end{split}
\end{equation}
Here \(\operatorname{rank}^{(e)}(\cdot)\in[0,N-1]\) is the zero-based
average rank in ascending consensus entropy. For a step $\ell$ in epoch
$e$, CSM acts in centered-logit space as
\begin{equation}
\begin{split}
    \Delta\mathbf c_i^{(\ell)}
    &=
    \mathbf c_i^{(\ell)}-\mathbf c_i^{(0)},\\
    \widehat{\mathbf c}_i^{(\ell)}
    &=
    \mathbf c_i^{(0)}
    +
    \gamma_i^{(e)}\Delta\mathbf c_i^{(\ell)},\\
    \widehat{\mathbf q}_i^{(\ell)}
    &=
    \operatorname{softmax}
    \!\left(\widehat{\mathbf c}_i^{(\ell)}\right).
    \label{eq:csm_map}
\end{split}
\end{equation}

\begin{proposition}[Rank range, ordering, and monotone invariance]
\label{prop:csm_rank_properties}
For every target sample and epoch,
\begin{equation}
    -1\leq u_i^{(e)}\leq 1 .
    \label{eq:rank_factor_range}
\end{equation}
If \(h_i^{(e)}<h_j^{(e)}\), then
\(u_i^{(e)}\leq u_j^{(e)}\). Moreover, replacing all entropy values
within an epoch by any strictly increasing transformation leaves their
ranks, and hence all \(u_i^{(e)}\), unchanged.
\end{proposition}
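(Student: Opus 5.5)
The plan is to make the zero-based average rank explicit and read all three claims off that formula. For fixed epoch $e$ and sample $i$, let $L_i=\#\{j:h_j^{(e)}<h_i^{(e)}\}$ and $T_i=\#\{j:h_j^{(e)}=h_i^{(e)}\}\geq 1$, where $T_i$ counts sample $i$ itself. The tied block containing $i$ occupies the zero-based positions $L_i,\ldots,L_i+T_i-1$ in the ascending order. Averaging these positions gives
\[
\operatorname{rank}^{(e)}(h_i^{(e)})=L_i+\tfrac{T_i-1}{2}.
\]

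\emph{Range.} Since $L_i\geq 0$ and $T_i\geq 1$, the rank is at least $0$. Since $L_i+T_i\leq N$, the rank is at most $L_i+T_i-1\leq N-1$. This confirms $\operatorname{rank}^{(e)}\in[0,N-1]$, as asserted in the definition. Because $N>1$, the affine map $r\mapsto 2r/(N-1)-1$ is well defined, strictly increasing, and sends $[0,N-1]$ onto $[-1,1]$. This gives Eq.~\eqref{eq:rank_factor_range}.

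\emph{Ordering.} Suppose $h_i^{(e)}<h_j^{(e)}$. Every sample with entropy below $h_i^{(e)}$ or equal to $h_i^{(e)}$ has entropy strictly below $h_j^{(e)}$, so $L_j\geq L_i+T_i$. Then
\[
\operatorname{rank}(h_j)\geq L_j\geq L_i+T_i>L_i+\tfrac{T_i-1}{2}=\operatorname{rank}(h_i).
\]
So the ranks, and hence the $u$ values, are in fact strictly ordered, which is stronger than the $\leq$ claimed. The strictly increasing affine map preserves this ordering.

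\emph{Invariance.} Let $\phi$ be strictly increasing. Then $\phi(h_j)<\phi(h_i)$ if and only if $h_j<h_i$, and $\phi(h_j)=\phi(h_i)$ if and only if $h_j=h_i$; the latter uses injectivity of $\phi$, so ties are neither created nor broken. Hence every $L_i$ and $T_i$ is unchanged, and so are the ranks and all $u_i^{(e)}$.

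The only real obstacle is handling ties correctly. Invariance needs injectivity of $\phi$ so that tie blocks are preserved. The ordering argument needs the block inequality $L_j\geq L_i+T_i$ rather than a naive positional comparison.
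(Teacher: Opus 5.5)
Your proof is correct and follows the paper's route: the range comes from the increasing affine map $r\mapsto 2r/(N-1)-1$ sending $[0,N-1]$ onto $[-1,1]$, and invariance comes from a strictly increasing transformation preserving both strict order and ties, hence average ranks. The paper takes $\operatorname{rank}^{(e)}\in[0,N-1]$ and the monotonicity of average ranks for granted, whereas your explicit formula $L_i+(T_i-1)/2$ verifies both and shows that the ordering is in fact strict.
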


\begin{proof}
The affine map \(r\mapsto 2r/(N-1)-1\) sends
\([0,N-1]\) to \([-1,1]\) and is increasing. A strictly increasing
transformation preserves all order relations and ties, including their
average ranks.
\end{proof}

The rank operation is therefore insensitive to the numerical spacing of
the consensus entropies across samples. This property applies only to the
cross-sample CSM rank in Eq.~\eqref{eq:csm_factors}; it does not alter the
raw within-sample expert allocation in
Eq.~\eqref{eq:entropy_conditioned_weights}.

\begin{proposition}[Bounded positive modulation]
\label{prop:csm_bounds}
For every \(i\) and \(e\),
\begin{equation}
    0
    <
    1-\lambda d(e)
    \leq
    \gamma_i^{(e)}
    \leq
    1+\lambda d(e)
    <
    2 .
    \label{eq:gamma_bounds}
\end{equation}
Consequently, CSM can neither cancel a nonzero dynamic shift nor reverse
its direction.
\end{proposition}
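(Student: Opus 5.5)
The bounds follow directly from the ranges of the three factors in $\gamma_i^{(e)}=1+\lambda d(e)u_i^{(e)}$. First, I will record that $d(e)=1-e/(E-1)\in[0,1]$ for $e\in\{0,\ldots,E-1\}$ and $E>1$. Next, Proposition~\ref{prop:csm_rank_properties} gives $u_i^{(e)}\in[-1,1]$. Finally, $0\leq\lambda<1$ holds by assumption. Since $\lambda d(e)\geq0$, the map $u\mapsto 1+\lambda d(e)u$ is nondecreasing on $[-1,1]$. Evaluating it at the endpoints yields the two middle inequalities $1-\lambda d(e)\leq\gamma_i^{(e)}\leq1+\lambda d(e)$.

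For the outer strict inequalities I will use $0\leq\lambda d(e)\leq\lambda<1$. This gives $1-\lambda d(e)\geq1-\lambda>0$ and $1+\lambda d(e)\leq1+\lambda<2$. The only point requiring care is that strictness comes entirely from $\lambda<1$, not from $d(e)$: at $e=0$ we have $d(e)=1$. This is precisely where the paper's strict bound on $\lambda$ is needed.

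For the consequence, I will read it through Eq.~\eqref{eq:csm_map}. The modulated displacement is $\widehat{\mathbf c}_i^{(\ell)}-\mathbf c_i^{(0)}=\gamma_i^{(e)}\Delta\mathbf c_i^{(\ell)}$, a scalar multiple of the dynamic shift. Since $\gamma_i^{(e)}>0$, whenever $\Delta\mathbf c_i^{(\ell)}\neq\mathbf 0$ the modulated shift is nonzero, so it is not cancelled. It also has the same direction, because $\gamma_i^{(e)}\Delta\mathbf c_i^{(\ell)}/\|\gamma_i^{(e)}\Delta\mathbf c_i^{(\ell)}\|=\Delta\mathbf c_i^{(\ell)}/\|\Delta\mathbf c_i^{(\ell)}\|$, so it is not reversed. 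Its norm is scaled by a factor in $(0,2)$.

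I may also note that both $\mathbf c_i^{(0)}$ and $\mathbf c_i^{(\ell)}$ lie in $\ones^\perp$, being weighted sums of centered logits as in Corollary~\ref{cor:centered_logit_poe}. Hence the shift lives in $\ones^\perp$, where nonzero vectors correspond to genuine changes of the softmax output by Proposition~\ref{prop:centered_coordinates}. This makes "nonzero shift" meaningful at the distribution level. None of these steps presents a real obstacle; the argument is elementary interval arithmetic.
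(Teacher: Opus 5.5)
Your proposal is correct and matches the paper's proof: bound $d(e)\in[0,1]$ and $u_i^{(e)}\in[-1,1]$, use $0\le\lambda<1$ to get the strict outer bounds, and conclude that the positive factor $\gamma_i^{(e)}$ cannot cancel or reverse a nonzero $\Delta\mathbf c_i^{(\ell)}$. Your remark about $\ones^\perp$ and injectivity of softmax there is a valid extra that the statement does not need.
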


\par\smallskip
\begin{proof}
Equation~\eqref{eq:csm_factors} gives
\(d(e)\in[0,1]\), and
Proposition~\ref{prop:csm_rank_properties} gives
\(u_i^{(e)}\in[-1,1]\). Therefore,
\[
    1-\lambda d(e)
    \leq
    1+\lambda d(e)u_i^{(e)}
    \leq
    1+\lambda d(e).
\]
Because \(0\leq\lambda<1\), the lower endpoint is at least
\(1-\lambda>0\), and the upper endpoint is at most
\(1+\lambda<2\). A positive scalar cannot cancel or reverse a nonzero
vector.
\end{proof}

\par\smallskip
\begin{proposition}[Anchor-relative geometry]
\label{prop:csm_geometry}
For any norm \(\|\cdot\|\),
\begin{equation}
\begin{split}
    \widehat{\mathbf c}_i^{(\ell)}-\mathbf c_i^{(0)}
    &=
    \gamma_i^{(e)}
    \left(\mathbf c_i^{(\ell)}-\mathbf c_i^{(0)}\right),\\
    \left\|
        \widehat{\mathbf c}_i^{(\ell)}-\mathbf c_i^{(0)}
    \right\|
    &=
    \gamma_i^{(e)}
    \left\|
        \mathbf c_i^{(\ell)}-\mathbf c_i^{(0)}
    \right\|.
    \label{eq:csm_distance_scaling}
\end{split}
\end{equation}
If \(\Delta\mathbf c_i^{(\ell)}\neq\mathbf0\), then:
\begin{enumerate}
    \item \(0<\gamma_i^{(e)}<1\) places the modulated state strictly
    between the anchor and the dynamic state (\emph{contraction});
    \item \(\gamma_i^{(e)}=1\) leaves the dynamic state unchanged
    (\emph{identity});
    \item \(1<\gamma_i^{(e)}<2\) places it beyond the dynamic state on the
    same ray from the anchor (\emph{bounded extension}).
\end{enumerate}
In all three cases, the unmodulated and modulated shifts are positively
collinear. If \(\Delta\mathbf c_i^{(\ell)}=\mathbf0\), then
\(\widehat{\mathbf c}_i^{(\ell)}
=\mathbf c_i^{(\ell)}=\mathbf c_i^{(0)}\) for every admissible
\(\gamma_i^{(e)}\).
\end{proposition}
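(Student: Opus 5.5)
The proof is a direct consequence of the CSM map in Eq.~\eqref{eq:csm_map} together with the bounds of Proposition~\ref{prop:csm_bounds}. First, I will subtract $\mathbf c_i^{(0)}$ from both sides of the definition $\widehat{\mathbf c}_i^{(\ell)}=\mathbf c_i^{(0)}+\gamma_i^{(e)}\Delta\mathbf c_i^{(\ell)}$. This gives the vector identity in Eq.~\eqref{eq:csm_distance_scaling} immediately. Taking an arbitrary norm and using absolute homogeneity, $\|\gamma\mathbf x\|=|\gamma|\,\|\mathbf x\|$, yields the distance identity. Here Proposition~\ref{prop:csm_bounds} supplies $\gamma_i^{(e)}>0$, so $|\gamma_i^{(e)}|=\gamma_i^{(e)}$.

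For the three cases with $\Delta\mathbf c_i^{(\ell)}\neq\mathbf 0$, I will parametrize the ray from the anchor as $\mathbf c_i^{(0)}+t\,\Delta\mathbf c_i^{(\ell)}$ with $t\ge 0$. On this ray, $t=0$ is the anchor, $t=1$ is the dynamic state $\mathbf c_i^{(\ell)}$, and $t=\gamma_i^{(e)}$ is the modulated state.
\begin{enumerate}
    \item If $0<\gamma_i^{(e)}<1$, the modulated state is the convex combination $(1-\gamma_i^{(e)})\mathbf c_i^{(0)}+\gamma_i^{(e)}\mathbf c_i^{(\ell)}$ with both coefficients strictly positive. It is therefore strictly between the anchor and the dynamic state, and it differs from both endpoints because $\Delta\mathbf c_i^{(\ell)}\neq\mathbf 0$.
    \item If $\gamma_i^{(e)}=1$, the modulated state is $\mathbf c_i^{(\ell)}$ by substitution.
    \item If $1<\gamma_i^{(e)}<2$, I will write $\widehat{\mathbf c}_i^{(\ell)}-\mathbf c_i^{(\ell)}=(\gamma_i^{(e)}-1)\Delta\mathbf c_i^{(\ell)}$ with $\gamma_i^{(e)}-1>0$. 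The modulated state then lies beyond $\mathbf c_i^{(\ell)}$ on the same ray. The upper bound $\gamma_i^{(e)}<2$ comes from Proposition~\ref{prop:csm_bounds} and justifies the word ``bounded.''
\end{enumerate}
Positive collinearity in all three cases follows because the modulated shift equals a strictly positive scalar times the unmodulated shift.

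For the degenerate case $\Delta\mathbf c_i^{(\ell)}=\mathbf 0$, we have $\mathbf c_i^{(\ell)}=\mathbf c_i^{(0)}$ and $\widehat{\mathbf c}_i^{(\ell)}=\mathbf c_i^{(0)}+\gamma\cdot\mathbf 0$. This holds for any $\gamma$, so the claim is immediate.

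There is no real obstacle here. The only point needing care is to invoke the positivity $\gamma_i^{(e)}>0$ from Proposition~\ref{prop:csm_bounds} before removing the absolute value in the norm identity and before asserting positive collinearity. I would also note that the case split for a given $\gamma_i^{(e)}$ is exhaustive, since $\gamma_i^{(e)}$ ranges only over $(0,2)$.
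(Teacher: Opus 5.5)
Your proposal is correct and matches the paper's argument: subtract the anchor from Eq.~\eqref{eq:csm_map} to get the vector identity, then use absolute homogeneity of the norm together with $\gamma_i^{(e)}>0$ from Proposition~\ref{prop:csm_bounds}, and read off the three cases and the degenerate case from the value of the scalar. Your ray parametrization and convex-combination argument spell out details that the paper leaves implicit.
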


\par\smallskip
\begin{proof}
The first equality follows directly from Eq.~\eqref{eq:csm_map}; absolute
homogeneity of a norm and the positivity of $\gamma_i^{(e)}$ give the
second. The three cases follow from the value of the scalar.
\end{proof}

In particular, below-median entropy ranks
\(u_i^{(e)}<0\) contract the shift whenever \(\lambda d(e)>0\),
above-median ranks \(u_i^{(e)}>0\) extend it, and
\(u_i^{(e)}=0\) leaves it unchanged. This statement concerns relative
rank, rather than an absolute entropy threshold.

\par\smallskip
\begin{corollary}[Preservation of the centered-logit subspace]
\label{cor:csm_centered_subspace}
If \(\ones^\top\mathbf c_i^{(0)}=0\) and
\(\ones^\top\mathbf c_i^{(\ell)}=0\), then
\begin{equation}
    \ones^\top\widehat{\mathbf c}_i^{(\ell)}=0 .
    \label{eq:csm_centered_preservation}
\end{equation}
\end{corollary}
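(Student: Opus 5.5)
The proof is a direct linearity argument on the CSM map in Eq.~\eqref{eq:csm_map}. First I will rewrite the modulated state as an affine combination of the anchor and the dynamic state:
\[
    \widehat{\mathbf c}_i^{(\ell)}
    =
    \bigl(1-\gamma_i^{(e)}\bigr)\mathbf c_i^{(0)}
    +\gamma_i^{(e)}\mathbf c_i^{(\ell)} .
\]
The coefficients sum to one, but the argument never needs this. The only property used is that the map is linear in the pair $(\mathbf c_i^{(0)},\mathbf c_i^{(\ell)})$ for a fixed scalar $\gamma_i^{(e)}$.

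Next I apply the linear functional $\ones^\top$ to both sides. This gives
\[
    \ones^\top\widehat{\mathbf c}_i^{(\ell)}
    =
    \bigl(1-\gamma_i^{(e)}\bigr)\ones^\top\mathbf c_i^{(0)}
    +\gamma_i^{(e)}\ones^\top\mathbf c_i^{(\ell)}
    =0 ,
\]
by the two hypotheses. Equivalently, $\ones^\perp$ is a linear subspace, and $\widehat{\mathbf c}_i^{(\ell)}$ equals $\mathbf c_i^{(0)}+\gamma_i^{(e)}\Delta\mathbf c_i^{(\ell)}$, where $\Delta\mathbf c_i^{(\ell)}$ is a difference of two elements of $\ones^\perp$. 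Alternatively, one can check $\mathbf\Pi\widehat{\mathbf c}_i^{(\ell)}=\widehat{\mathbf c}_i^{(\ell)}$ using Eq.~\eqref{eq:projection_properties}.

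To show that the hypotheses are not vacuous, I will recall Corollary~\ref{cor:centered_logit_poe}. Both $\mathbf c_i^{(0)}$ and $\mathbf c_i^{(\ell)}$ are convex combinations of centered vectors $\mathbf a_m=\mathbf\Pi\mathbf z_m$, and each of those lies in $\ones^\perp$ because $\mathbf\Pi\ones=\mathbf 0$ and $\mathbf\Pi$ is symmetric. So the hypotheses always hold in the method.

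As a closing remark, I will combine this with Proposition~\ref{prop:centered_coordinates}. This identifies $\widehat{\mathbf c}_i^{(\ell)}$ as the unique zero-mean logit vector for $\widehat{\mathbf q}_i^{(\ell)}$.

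There is no real obstacle; the argument is entirely routine. The one point to state explicitly is that the conclusion holds for every real $\gamma_i^{(e)}$, so the bounds of Proposition~\ref{prop:csm_bounds} are not needed here.
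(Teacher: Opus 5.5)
Your proof is correct and takes the same approach as the paper. The paper's proof is the single observation that Eq.~\eqref{eq:csm_map} is an affine combination of two vectors in the linear subspace $\ones^\perp$, which is exactly your application of the linear functional $\ones^\top$. Your side remarks are accurate but not needed for the corollary: the hypotheses hold automatically via Corollary~\ref{cor:centered_logit_poe}, and the conclusion holds for any real $\gamma_i^{(e)}$.
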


\par\smallskip
\begin{proof}
Equation~\eqref{eq:csm_map} is an affine combination of two vectors in
the linear subspace $\ones^\perp$.
\end{proof}

\par\smallskip
\begin{proposition}[Temporal decay to the identity]
\label{prop:csm_temporal_decay}
The deviation of the modulation factor from one satisfies
\begin{equation}
    \left|\gamma_i^{(e)}-1\right|
    =
    \lambda d(e)\left|u_i^{(e)}\right|
    \leq
    \lambda d(e).
    \label{eq:gamma_identity_deviation}
\end{equation}
Consequently,
\begin{equation}
\begin{split}
    \left\|
        \widehat{\mathbf c}_i^{(\ell)}
        -
        \mathbf c_i^{(\ell)}
    \right\|
    &=
    \left|\gamma_i^{(e)}-1\right|
    \left\|\Delta\mathbf c_i^{(\ell)}\right\|\\
    &\leq
    \lambda d(e)
    \left\|\Delta\mathbf c_i^{(\ell)}\right\|.
    \label{eq:csm_dynamic_deviation_bound}
\end{split}
\end{equation}
At the final epoch, \(d(E-1)=0\), so
\(\gamma_i^{(E-1)}=1\) and CSM is exactly the identity map. It is also
the identity for every epoch if \(\lambda=0\).
\end{proposition}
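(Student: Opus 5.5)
The identity for the deviation follows directly from the definition of $\gamma_i^{(e)}$ in Eq.~\eqref{eq:csm_factors}. Subtracting one gives $\gamma_i^{(e)}-1=\lambda d(e)u_i^{(e)}$. Since $\lambda\geq0$ and $d(e)=1-e/(E-1)\in[0,1]$ for $e\in\{0,\ldots,E-1\}$, the product $\lambda d(e)$ is nonnegative. Therefore
\[
\bigl|\gamma_i^{(e)}-1\bigr|=\lambda d(e)\bigl|u_i^{(e)}\bigr|.
\]
The inequality $\lambda d(e)|u_i^{(e)}|\leq\lambda d(e)$ then follows from the rank range $|u_i^{(e)}|\leq1$ established in Proposition~\ref{prop:csm_rank_properties}.

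For the deviation of the modulated state from the dynamic state, I rewrite the CSM map of Eq.~\eqref{eq:csm_map} by adding and subtracting $\Delta\mathbf c_i^{(\ell)}$. Using $\mathbf c_i^{(\ell)}=\mathbf c_i^{(0)}+\Delta\mathbf c_i^{(\ell)}$,
\[
\widehat{\mathbf c}_i^{(\ell)}-\mathbf c_i^{(\ell)}=\bigl(\gamma_i^{(e)}-1\bigr)\Delta\mathbf c_i^{(\ell)}.
\]
Absolute homogeneity of the norm, as already used in Proposition~\ref{prop:csm_geometry}, turns this into the stated equality of norms. The bound follows by inserting the first estimate. Any norm works, since only homogeneity is used.

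For the endpoint claims, evaluating at $e=E-1$ gives $d(E-1)=0$. Here $E>1$ ensures the denominator $E-1$ is nonzero. This yields $\gamma_i^{(E-1)}=1$, so $\widehat{\mathbf c}_i^{(\ell)}=\mathbf c_i^{(\ell)}$ for every sample and every step in that epoch. The map is therefore exactly the identity, and so is the induced softmax prediction $\widehat{\mathbf q}_i^{(\ell)}=\mathbf q_i^{(\ell)}$. Likewise, $\lambda=0$ forces $\gamma_i^{(e)}=1$ in every epoch.

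There is no real obstacle in this argument. The only point requiring care is the sign bookkeeping: the absolute value factors as $\lambda d(e)|u_i^{(e)}|$ only because $\lambda d(e)\geq0$, which is why I verify the range of $d(e)$ explicitly.
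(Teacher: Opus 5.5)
Your proposal is correct and follows essentially the same route as the paper: you read $\gamma_i^{(e)}-1=\lambda d(e)u_i^{(e)}$ off the definition, use $|u_i^{(e)}|\leq 1$, subtract $\mathbf c_i^{(\ell)}$ from the CSM map to get $(\gamma_i^{(e)}-1)\Delta\mathbf c_i^{(\ell)}$, and apply norm homogeneity. Your explicit check that $\lambda d(e)\geq 0$ spells out a sign step the paper leaves implicit.
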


\par\smallskip
\begin{proof}
The first equality follows from Eq.~\eqref{eq:csm_factors}; the bound
uses \(|u_i^{(e)}|\leq1\). Subtracting
\(\mathbf c_i^{(\ell)}\) from both sides of
Eq.~\eqref{eq:csm_map} gives
\[
    \widehat{\mathbf c}_i^{(\ell)}-\mathbf c_i^{(\ell)}
    =
    \left(\gamma_i^{(e)}-1\right)
    \Delta\mathbf c_i^{(\ell)},
\]
and the norm bound follows. The endpoint statements are immediate.
\end{proof}

\paragraph{Effect on class log odds.}
Since both states are centered logits, CSM induces, for every pair of
classes \(j,k\),
\begin{equation}
\begin{split}
    \log
    \frac{\widehat q_{i,j}^{(\ell)}}
         {\widehat q_{i,k}^{(\ell)}}
    &=
    \widehat c_{i,j}^{(\ell)}
    -
    \widehat c_{i,k}^{(\ell)}\\
    &=
    (1-\gamma_i^{(e)})
    \left(c_{i,j}^{(0)}-c_{i,k}^{(0)}\right) \\
    &\quad+
    \gamma_i^{(e)}
    \left(c_{i,j}^{(\ell)}-c_{i,k}^{(\ell)}\right).
    \label{eq:csm_log_odds}
\end{split}
\end{equation}
For contraction, this is an interpolation of anchor and dynamic log
odds; for extension, it is an extrapolation along the same
anchor-relative direction.

\paragraph{Scope of the guarantees.}
The properties above establish boundedness, positive collinearity,
centered-coordinate preservation, and decay to the identity. They do not
guarantee that CSM preserves the top-1 class: the line in
Eq.~\eqref{eq:csm_map} may cross a decision hyperplane. Nor do they imply
that target accuracy improves monotonically. The empirical directional
validation in the main paper tests whether the proposed rank assignment
is beneficial for adaptation.

\section{Full Objective and Optimization Details}
\label{app:objective_optimization}

\subsection{Complete IIC Formulation}
\label{app:iic_formulation}

We first specify the batch-level IIC alignment
objective~\citep{ji2019invariant} used in Eq.~(7) of the main paper.
For a mini-batch
\(\mathcal B\) of size \(B=|\mathcal B|\), let
\(\mathbf p_{m,i}^{(\ell)}\in\Delta^{K-1}\) be the pre-update
prediction of branch \(m\in\{t,v\}\) for sample \(x_i\). The modulated
consensus is detached before either branch is optimized:
\begin{equation}
    \widetilde{\mathbf q}_i^{(\ell)}
    =
    \operatorname{sg}\!\left(
        \widehat{\mathbf q}_i^{(\ell)}
    \right),
    \qquad i\in\mathcal B,
    \label{eq:detached_consensus}
\end{equation}
where \(\operatorname{sg}(\cdot)\) denotes stop-gradient. Thus, the
consensus is fixed within the current update, while gradients are
retained through the prediction of the branch being optimized.

For branch \(m\), define the empirical soft joint distribution between
its predicted class \(a\) and consensus class \(b\) as
\begin{equation}
    J_{ab}^{(m)}
    =
    \frac{1}{B}
    \sum_{i\in\mathcal B}
    p_{m,i,a}^{(\ell)}
    \widetilde q_{i,b}^{(\ell)},
    \qquad a,b\in\{1,\ldots,K\}.
    \label{eq:iic_joint}
\end{equation}
Because both factors are categorical distributions,
\(\sum_{a,b}J_{ab}^{(m)}=1\). Its marginals are
\begin{equation}
    J_{a\cdot}^{(m)}
    =
    \sum_{b=1}^{K}J_{ab}^{(m)},
    \qquad
    J_{\cdot b}^{(m)}
    =
    \sum_{a=1}^{K}J_{ab}^{(m)}.
    \label{eq:iic_marginals}
\end{equation}
The IIC loss is the negative mutual information of this batch-level
joint distribution:
\begin{equation}
\begin{split}
    \mathcal L_{\mathrm{IIC}}
    \!\left(
        \mathbf p_m^{(\ell)},
        \widetilde{\mathbf q}^{(\ell)}
    \right)
    &=
    -\sum_{a=1}^{K}\sum_{b=1}^{K}
    J_{ab}^{(m)}
    \log
    \frac{J_{ab}^{(m)}}
    {J_{a\cdot}^{(m)}J_{\cdot b}^{(m)}}\\
    &=
    -I\!\left(P_m;\widetilde Q\right).
    \label{eq:iic_loss}
\end{split}
\end{equation}
This objective aligns class associations between each branch and the
shared consensus over the mini-batch. Unlike a sample-wise KL or soft
cross-entropy loss, it also contains the prediction marginals and
therefore favors informative, non-collapsed batch assignments.
In implementation, the joint distribution is numerically normalized
and its arguments are clipped by a small positive constant before
evaluating the logarithm.

\subsection{Complete \method{} Training Objective}
\label{app:complete_objective}

We next collect the shared supervision construction and all loss terms.
The initial consensus anchor is formed once from the frozen source
expert and the initially prompted VLM:
\begin{equation}
\begin{split}
    \mathbf c_i^{(0)}
    &=
    w_{s,i}^{(0)}\mathbf a_{s,i}
    +
    w_{v,i}^{(0)}\mathbf a_{v,i}^{(0)},\\
    \mathbf q_i^{(0)}
    &=
    \operatorname{softmax}\!\left(\mathbf c_i^{(0)}\right).
    \label{eq:initial_anchor_objective}
\end{split}
\end{equation}
The centered-logit anchor \(\mathbf c_i^{(0)}\) is cached and held fixed
during adaptation. At optimization step \(\ell\), the current target
and VLM branches form the unmodulated dynamic consensus
\begin{equation}
\begin{split}
    \mathbf c_i^{(\ell)}
    &=
    w_{t,i}^{(\ell)}\mathbf a_{t,i}^{(\ell)}
    +
    w_{v,i}^{(\ell)}\mathbf a_{v,i}^{(\ell)},\\
    \mathbf q_i^{(\ell)}
    &=
    \operatorname{softmax}\!\left(\mathbf c_i^{(\ell)}\right),
    \label{eq:dynamic_consensus_objective}
\end{split}
\end{equation}
where the weights are obtained from Eq.~(2) of the main paper with the
active expert set \(\mathcal M=\{t,v\}\). For a step in epoch \(e\),
CSM produces
\begin{equation}
\begin{split}
    \widehat{\mathbf c}_i^{(\ell)}
    &=
    \mathbf c_i^{(0)}
    +
    \gamma_i^{(e)}
    \left(
        \mathbf c_i^{(\ell)}-\mathbf c_i^{(0)}
    \right),\\
    \widehat{\mathbf q}_i^{(\ell)}
    &=
    \operatorname{softmax}
    \!\left(\widehat{\mathbf c}_i^{(\ell)}\right),\\
    \widetilde{\mathbf q}_i^{(\ell)}
    &=
    \operatorname{sg}
    \!\left(\widehat{\mathbf q}_i^{(\ell)}\right),
    \label{eq:modulated_teacher_objective}
\end{split}
\end{equation}
with \(\gamma_i^{(e)}=1+\lambda d(e)u_i^{(e)}\) as defined in
Eq.~(5) of the main paper. Equations~\eqref{eq:dynamic_consensus_objective}
and \eqref{eq:modulated_teacher_objective} are evaluated once from the
two pre-update predictions. The resulting detached distribution is the
common supervision snapshot for both branch objectives.

For the target branch, the hard consensus label and batch-mean
prediction are
\begin{equation}
    \widehat y_i^{(\ell)}
    =
    \argmax_{k}\widetilde q_{i,k}^{(\ell)},
    \qquad
    \overline{\mathbf p}_t^{(\ell)}
    =
    \frac{1}{B}
    \sum_{i\in\mathcal B}\mathbf p_{t,i}^{(\ell)}.
    \label{eq:hard_label_and_batch_mean}
\end{equation}
The hard-label classification loss is
\begin{equation}
    \mathcal L_{\mathrm{CE}}
    =
    -\frac{1}{B}
    \sum_{i\in\mathcal B}
    \log p_{t,i,\widehat y_i^{(\ell)}}^{(\ell)}.
    \label{eq:hard_consensus_ce}
\end{equation}
Using \(H(\mathbf p)=-\sum_kp_k\log p_k\), the complete branch-specific
objectives are
\begin{equation}
\begin{split}
    \mathcal L_v
    &=
    \mathcal L_{\mathrm{IIC}}
    \!\left(
        \mathbf p_v^{(\ell)},
        \widetilde{\mathbf q}^{(\ell)}
    \right),\\
    \mathcal L_t
    &=
    \alpha\,
    \mathcal L_{\mathrm{IIC}}
    \!\left(
        \mathbf p_t^{(\ell)},
        \widetilde{\mathbf q}^{(\ell)}
    \right)
    +
    \beta\,\mathcal L_{\mathrm{CE}}
    -
    \delta\,H\!\left(\overline{\mathbf p}_t^{(\ell)}\right),\\
    \mathcal L_{\mathrm{COSMO}}
    &=
    \mathcal L_t+\mathcal L_v .
    \label{eq:complete_cosmo_objective}
\end{split}
\end{equation}
Here, \(\alpha,\beta,\delta\geq0\) weight soft consensus alignment,
hard-label consolidation, and target-side prediction diversity,
respectively. Since the objective is minimized, the last term maximizes
the entropy of the target branch's batch marginal and discourages
category collapse. The VLM branch uses only soft consensus alignment:
its prompt context is optimized, while its image encoder, text encoder,
and logit scale remain frozen. The target branch is the deployed model
and therefore additionally receives hard-label and diversity terms.
Dataset-specific coefficient values are reported in Appendix~D.4.

\subsection{Training Algorithm}
\label{app:training_algorithm}

Let \(\theta_t\) denote the parameters of the target branch and
\(\phi\) the learnable VLM prompt context. Before adaptation,
\(f_t(\cdot;\theta_t)\) is initialized from the source model \(f_s\).
The frozen source expert and initially prompted VLM are evaluated over
the target set to construct and cache
\(\{\mathbf c_i^{(0)}\}_{i=1}^{N}\).

At the beginning of every epoch \(e\), the current target and VLM
branches are evaluated over the complete target set without parameter
updates. Their unmodulated dynamic consensuses
\(\{\bar{\mathbf q}_i^{(e)}\}_{i=1}^{N}\) determine the normalized
entropies, tie-aware ranks \(u_i^{(e)}\), and modulation factors
\(\gamma_i^{(e)}\) in Eq.~(5) of the main paper. These sample-indexed
factors are then fixed within that epoch.

\begin{algorithm}[h]
\caption{Optimization of \method}
\label{alg:cosmo_optimization}
\begin{algorithmic}[1]
\REQUIRE Unlabeled target set \(\mathcal D_t\), source expert \(f_s\),
pretrained VLM \(f_v\), and total epochs \(E\).
\STATE Initialize target branch \(f_t\leftarrow f_s\) and prompt
context \(\phi\).
\STATE Freeze \(f_s\), the VLM encoders, and the VLM logit scale.
\STATE Construct and cache anchors
\(\{\mathbf c_i^{(0)}\}_{i=1}^{N}\) using \(\mathcal M=\{s,v\}\).
\FOR{\(e=0,\ldots,E-1\)}
    \STATE Evaluate the unmodulated dynamic consensus over
    \(\mathcal D_t\).
    \STATE Compute \(\{u_i^{(e)},\gamma_i^{(e)}\}_{i=1}^{N}\).
    \FOR{mini-batch \(\mathcal B\subset\mathcal D_t\)}
        \STATE Compute pre-update predictions
        \(\mathbf p_t^{(\ell)}\) and \(\mathbf p_v^{(\ell)}\).
        \STATE Construct \(\mathbf c_i^{(\ell)}\) using
        \(\mathcal M=\{t,v\}\).
        \STATE Apply CSM relative to \(\mathbf c_i^{(0)}\) and detach
        \(\widetilde{\mathbf q}_i^{(\ell)}\).
        \STATE Form \(\mathcal L_t\) and \(\mathcal L_v\) from this
        same snapshot.
        \STATE Update \(\theta_t\) and \(\phi\) with their respective
        objectives.
    \ENDFOR
\ENDFOR
\RETURN Adapted target branch \(f_t\).
\end{algorithmic}
\end{algorithm}

Crucially, neither branch is updated before the shared supervision
snapshot for the current mini-batch has been formed. Although the two parameter groups
may be handled by separate optimizers, both losses use the same
pre-update detached consensus; no branch is supervised by the other
branch's post-update prediction. During adaptation, the trainable
components are the target branch and VLM prompt context. The source
expert, pretrained VLM encoders, VLM logit scale, cached anchors, and
the consensus snapshot within each update remain fixed. After
adaptation, only \(f_t\) is retained, so \method{} introduces no
additional branch or consensus computation at inference.

\section{Datasets and Evaluation Protocols}
\label{app:datasets_protocols}

\subsection{Dataset Statistics and Domain Splits}
\label{app:dataset_statistics}

We evaluate on the same four closed-set benchmarks as in the main
paper. Their statistics under the evaluated splits are summarized in
Table~\ref{tab:dataset_statistics}.

\begin{table}[h]
\centering
\small
\setlength{\tabcolsep}{2.6pt}
\renewcommand{\arraystretch}{1.08}
\begin{adjustbox}{max width=\linewidth}
\begin{tabular}{@{}lrrrr@{}}
\toprule
Dataset & Domains & Classes & Images & Tasks \\
\midrule
Office-31     & 3 & 31  & 4,652             & 6  \\
Office-Home   & 4 & 65  & 15,588            & 12 \\
VisDA-C       & 2 & 12  & 152,397/55,388    & 1  \\
DomainNet-126 & 4 & 126 & \(\sim\)145,000   & 12 \\
\bottomrule
\end{tabular}
\end{adjustbox}
\caption{Dataset statistics. VisDA-C image counts are reported as
Synthetic/Real.}
\label{tab:dataset_statistics}
\end{table}

\paragraph{Office-31.}
Office-31~\citep{saenko2010adapting} contains 4,652 images from
31 shared categories in three domains: Amazon (A), DSLR (D), and
Webcam (W).

\paragraph{Office-Home.}
Office-Home~\citep{venkateswara2017deep} contains 15,588 images from
65 categories in four domains: Art (A), Clipart (C), Product (P), and
Real World (R).

\paragraph{VisDA-C.}
VisDA-C~\citep{peng2018visda} is a synthetic-to-real benchmark with
12 categories. We use its training split of 152,397 rendered images as
the source domain and its validation split of 55,388 real images as the
target domain, following the standard unsupervised adaptation protocol.

\paragraph{DomainNet-126.}
DomainNet-126~\citep{saito2019semi} is the standard 126-category subset
used by prior VLM-guided SFDA work. It contains approximately 145,000
images from four DomainNet styles: Clipart (C), Painting (P), Real (R),
and Sketch (S).

\subsection{Adaptation Tasks and Evaluation Metrics}
\label{app:tasks_metrics}

All experiments follow closed-set adaptation: the source and target
domains share the same \(K\) categories. For a directed transfer
\(S\!\rightarrow\!T\), the source model is trained with labeled data
from \(S\); during adaptation, only this source-trained model, the
pretrained VLM, and unlabeled samples from \(T\) are available. Source
images and target labels are not accessed.

For Office-31, we evaluate all six ordered transfers among
\(\{\mathrm{A},\mathrm{D},\mathrm{W}\}\). For Office-Home, we evaluate
all 12 ordered transfers among
\(\{\mathrm{A},\mathrm{C},\mathrm{P},\mathrm{R}\}\). VisDA-C contains
the single Synthetic-to-Real transfer
\(\mathrm{S}\!\rightarrow\!\mathrm{R}\). For DomainNet-126, we evaluate
all 12 ordered transfers among
\(\{\mathrm{C},\mathrm{P},\mathrm{R},\mathrm{S}\}\).

For Office-31, Office-Home, and DomainNet-126, performance on a transfer
task \(\tau\) is target-domain top-1 accuracy:
\begin{equation}
    \operatorname{Acc}_{\tau}
    =
    \frac{1}{N_{\tau}}
    \sum_{i=1}^{N_{\tau}}
    \mathbb I
    \!\left[
        \argmax_k p_{t,i,k}=y_i
    \right].
    \label{eq:top1_accuracy}
\end{equation}
For VisDA-C, we report mean per-class accuracy,
\begin{equation}
    \small
    \operatorname{mAcc}_{\mathrm{VisDA}}
    =
    \frac{1}{K}
    \sum_{k=1}^{K}
    \frac{
        \sum_i
        \mathbb I[y_i=k]\,
        \mathbb I[\argmax_j p_{t,i,j}=k]
    }{
        \sum_i\mathbb I[y_i=k]
    },
    \label{eq:visda_mean_class_accuracy}
\end{equation}
which assigns equal weight to each of its 12 categories. Dataset-level
scores for the multi-transfer benchmarks are macro-averages over
directed tasks:
\begin{equation}
    \operatorname{Score}_{\mathcal D}
    =
    \frac{1}{|\mathcal T_{\mathcal D}|}
    \sum_{\tau\in\mathcal T_{\mathcal D}}
    \operatorname{Acc}_{\tau}.
    \label{eq:dataset_macro_average}
\end{equation}
Thus, each transfer task contributes equally, independently of its
number of target samples.

\subsection{Run, Aggregation, and Reporting Protocols}
\label{app:reporting_protocol}

Unless explicitly stated otherwise, every \method{} transfer experiment
is repeated with \(R=3\) independent adaptation runs. Let
\(A_{\tau,r}\) denote the evaluation score for task \(\tau\) in run
\(r\). We first average the runs within each task and then macro-average
the task means:
\begin{equation}
\begin{split}
    \overline A_{\tau}
    &=
    \frac{1}{R}\sum_{r=1}^{R}A_{\tau,r},\\
    \overline A_{\mathcal D}
    &=
    \frac{1}{|\mathcal T_{\mathcal D}|}
    \sum_{\tau\in\mathcal T_{\mathcal D}}
    \overline A_{\tau}.
    \label{eq:run_and_task_aggregation}
\end{split}
\end{equation}
The main tables report \(\overline A_{\tau}\) and
\(\overline A_{\mathcal D}\) as percentages, rounded only after
aggregation. Run-wise values and standard deviations are provided in
Appendix~F.3.

Target labels are used only after adaptation for computing the reported
metrics. They are not used to construct the consensus, optimize either
branch, select checkpoints, or tune hyperparameters. Labels used in
the mechanism analyses are likewise restricted to post-hoc
stratification and scoring.

For the Office-Home mechanism analyses in the main paper, statistics
are computed separately for each of the 12 transfer tasks and then
macro-averaged. Entropy bins and quintiles are formed within each task
before aggregation, preventing large target domains from dominating
the curves. Reported macro AUROC values average task-level AUROCs, and
the 95\% confidence intervals in Fig.~3 are computed across the 12
task-level statistics.

\begin{table*}[t]
\centering
\small
\setlength{\tabcolsep}{3.4pt}
\renewcommand{\arraystretch}{1.08}
\begin{adjustbox}{max width=\linewidth}
\begin{tabular}{l c c c c c c c c c c c}
\toprule
Dataset
& Target
& CLIP
& Epochs
& Batch
& $\eta_0$
& $\eta_\rho$
& $\alpha$
& $\beta$
& $\delta$
& $\epsilon$
& $\lambda$ \\
\midrule
Office-31
& R-50 & B/16 or B/32 & 30 & 64
& $1{\times}10^{-2}$ & $1{\times}10^{-3}$
& 1.3 & 0.10 & 1.0 & $1{\times}10^{-5}$ & 0.5 \\
Office-Home
& R-50 & B/16 or B/32 & 30 & 64
& $5{\times}10^{-3}$ & $5{\times}10^{-4}$
& 1.3 & 0.40 & 1.0 & $1{\times}10^{-5}$ & 0.5 \\
VisDA-C
& R-101 & B/16 or B/32 & 15 & 64
& $5{\times}10^{-3}$ & $5{\times}10^{-4}$
& 1.0 & 0.05 & 0.1 & $1{\times}10^{-5}$ & 0.5 \\
DomainNet-126
& R-50 & B/16 or B/32 & 30 & 64
& $5{\times}10^{-3}$ & $5{\times}10^{-4}$
& 1.3 & 0.40 & 0.5 & $1{\times}10^{-3}$ & 0.5 \\
\bottomrule
\end{tabular}
\end{adjustbox}
\caption{Target-adaptation configuration. CSM decays linearly over the
full training budget shown in the Epochs column.}
\label{tab:adaptation_hyperparameters}
\end{table*}

\section{Implementation Details}
\label{app:implementation_details}

\subsection{Source-Model Training}

Following the source-training protocol of
ProDe~\citep{tang2025prode}, we independently train a source model on
each labeled source domain. ProDe in turn bases its source-model
training on the official AdaContrast implementation
\citep{chen2022contrastive}; we retain this protocol and state the
benchmark-specific settings below. For each transfer task, the target
branch is initialized from the corresponding source model, while a
second frozen copy is retained as the source expert. After source
training, source images and labels are discarded and are never
accessed during target adaptation.

For Office-31 and Office-Home, the source model uses an
ImageNet-1K-pretrained ResNet-50 feature extractor; VisDA-C uses
ResNet-101. In all three cases, the feature extractor is followed by a
512-dimensional linear bottleneck with BatchNorm and a
weight-normalized linear classifier. We do not apply ReLU or dropout
after the bottleneck. DomainNet-126 follows the official AdaContrast
configuration with ResNet-50, a 256-dimensional bottleneck followed by
BatchNorm, and a weight-normalized linear classifier
\citep{chen2022contrastive}.

Source supervision is cross-entropy with label smoothing:
\begin{equation}
 \widetilde{\mathbf y}
 =(1-\sigma)\mathbf y+\frac{\sigma}{K}\mathbf 1,
 \qquad \sigma=0.1,
\end{equation}
where $K$ is the number of classes. For every benchmark, we randomly
split each labeled source domain into 90\% for training and 10\% for
validation, and retain the checkpoint with the highest
source-validation accuracy. The corresponding source-training settings
are summarized in Table~\ref{tab:source_training}.

\begin{table}[t]
\centering
\small
\setlength{\tabcolsep}{3.8pt}
\renewcommand{\arraystretch}{1.08}
\begin{adjustbox}{max width=\linewidth}
\begin{tabular}{l c c c c}
\toprule
Dataset & Head LR & Epochs & Batch & Backbone \\
\midrule
Office-31   & $1{\times}10^{-2}$ & 100 & 64 & R-50  \\
Office-Home & $1{\times}10^{-2}$ &  50 & 64 & R-50  \\
VisDA-C     & $1{\times}10^{-3}$ &  10 & 64 & R-101 \\
DomainNet-126 & $2{\times}10^{-3}$ & 60 & 128 & R-50 \\
\bottomrule
\end{tabular}
\end{adjustbox}
\caption{Source-model training settings. ``Head LR'' denotes the
initial learning rate of the bottleneck and classifier.}
\label{tab:source_training}
\end{table}

For Office-31, Office-Home, and VisDA-C, we use SGD with Nesterov
momentum $0.9$, weight decay $10^{-3}$, and the polynomial
learning-rate schedule
\begin{equation}
 \eta_j=\eta_0(1+10j/J)^{-0.75},
\end{equation}
where $j$ and $J$ are the current and total optimization steps. The
feature extractor uses $0.1\eta_0$, whereas the bottleneck and
classifier use $\eta_0$. For DomainNet-126, the official AdaContrast
configuration uses SGD with Nesterov momentum $0.9$, weight decay
$10^{-4}$, and cosine learning-rate decay. Its initial learning rates
are $2\times10^{-4}$ for the ResNet-50 backbone and
$2\times10^{-3}$ for the bottleneck and classifier
\citep{chen2022contrastive}. Source training applies resizing to
$256\times256$, random $224\times224$ cropping, horizontal flipping,
and ImageNet normalization. Validation uses a $224\times224$ center
crop after resizing.

\subsection{VLM and Prompt Configuration}

We use the OpenAI CLIP implementation~\citep{radford2021learning} with
ViT-B/32 or ViT-B/16.
Results obtained with the two VLM backbones are reported in separate
comparison groups. The CLIP image encoder, text encoder, token
embeddings, projection layers, and learned logit scale remain frozen
throughout adaptation. Only a learnable textual context shared across
classes is optimized.

The prompt contains four context tokens initialized from the CLIP token
embeddings of ``a photo of a'', followed by the fixed class name. Thus,
the initial prompt is ``a photo of a [CLASS].'' We use one prompt per
class without prompt ensembling or learnable class tokens. Class names
are read from the fixed benchmark lists, with underscores replaced by
spaces; no synonym expansion or plural rewriting is applied. Within our
implementation, the initial VLM prediction and subsequent prompt
adaptation use the same class names and initial template.

The VLM branch uses its deterministic preprocessing: bicubic resizing
of the shorter side to 224, a $224\times224$ center crop, RGB
conversion, and CLIP normalization. The target branch instead uses the
ResNet preprocessing described below. Accordingly, the two branches
observe different backbone-specific views of the same target sample;
we do not use weak--strong view consistency. CLIP and prompt forward
passes use mixed precision, while the main target-model optimization is
performed in FP32. We keep CLIP's learned logit scale fixed and do not
introduce an additional prediction temperature. Expert reliability is
computed directly from the current predictive distributions without
temperature calibration or multiplicative logit normalization.

\subsection{Target-Adaptation Optimization}

\paragraph{Trainable parameters.}
Target adaptation updates the target feature extractor, bottleneck,
classifier, and textual prompt context. The source expert and all
pretrained CLIP parameters remain frozen. We use two independent SGD
optimizers for the target and prompt branches. Both use Nesterov
momentum $0.9$, weight decay $10^{-3}$, and the polynomial schedule
above. Let $\eta_0$ denote the benchmark-specific base learning rate.
For Office-31, Office-Home, and VisDA-C, the bottleneck uses
$\eta_0$, while the feature extractor, classifier, and prompt context
use $0.1\eta_0$. For DomainNet-126, all target-model components use
$\eta_0$ and the prompt context uses $0.1\eta_0$.

\paragraph{Data processing and BatchNorm.}
All adaptation experiments use a target batch size of 64. Each epoch
contains one optimization pass over the unlabeled target set, preceded by
a no-update full-set scan that refreshes the target-set entropy ranks.
Target training resizes an image
to $256\times256$, applies a random $224\times224$ crop and horizontal
flip, and normalizes it with ImageNet statistics. Evaluation replaces
the random crop with a center crop. The target model uses ordinary
training-mode BatchNorm; no per-sample BatchNorm, strong augmentation,
or dual-view objective is used.

\paragraph{Consensus update.}
Before adaptation, we evaluate the frozen source expert and initially
prompted VLM to cache the initial-consensus anchor for each target
sample. At the beginning of every epoch, we evaluate the current
unmodulated dynamic consensus over the complete target set using
deterministic views and refresh the target-set entropy ranks. The
initial scan supplies the ranks for the first epoch.

Within each mini-batch, the entropy-conditioned PoE consensus is
constructed once from the two branches' pre-update predictions and then
modulated by CSM. The resulting distribution is detached and shared by
the target and prompt objectives. Although the two branches have
separate optimizers, both are updated against this same pre-update
snapshot; an update to one branch therefore cannot alter the
supervision used by the other branch in the same iteration. The source
expert participates only in the initial anchor and is not inserted into
the dynamic expert pair.

For CSM, the default modulation strength is $\lambda=0.5$. Its
differential effect decays linearly from one at the start of adaptation
to zero at the end of the fixed training budget. The entropy ranks and
the modulation are computed without target labels. No learning-rate
warm-up or gradient clipping is used. We report the target model from
the final epoch without target-label-based early stopping or model
selection; the VLM branch is discarded at inference.

\subsection{Hyperparameters and Reproducibility Settings}

Table~\ref{tab:adaptation_hyperparameters} lists the complete
benchmark-level configuration. Here, $\eta_0$ is the target-model base
learning rate and $\eta_\rho$ is the prompt learning rate. The
coefficients $\alpha$, $\beta$, and $\delta$ weight target-side
consensus alignment, hard-label classification, and prediction
diversity, respectively; $\epsilon$ is the numerical stabilizer used in
the entropy-based weighting and probability computations. The CSM
coefficient $\lambda$ is fixed across all experiments.

The loss coefficients follow the benchmark-specific protocols on which
our implementation is based. The only principal CSM hyperparameter,
$\lambda$, is set to 0.5 for every benchmark. Once selected, each
benchmark-level configuration is fixed across all of its directed
transfer tasks, without task-specific tuning. The sensitivity analysis
in the main paper varies the signed modulation coefficient and decay
onset on Office-Home A$\rightarrow$C; it does not change the default
settings used for the main results. Target labels are used only for
final evaluation and post-hoc mechanism analysis, never for adaptation,
hyperparameter selection, early stopping, or checkpoint selection.

\method{} is implemented in Python 3.10 with PyTorch 2.3 and CUDA 12.1.
Each transfer task is trained on a single NVIDIA A100 GPU. Unless
otherwise stated, \method{} results are averaged over three independent
runs with seeds 2020, 2021, and 2022. Reported baseline results retain
the evaluation and repetition protocols of their source papers.

\section{Comparison Protocol}
\label{app:comparison_protocol}

\subsection{Method Categorization and Backbone Matching}

Table~\ref{tab:method_categories} summarizes the data available to each
comparison group. Source and zero-shot CLIP~\citep{radford2021learning}
are non-adaptation references. SHOT~\citep{liang2020shot},
NRC~\citep{yang2021nrc}, AaD~\citep{yang2022aad},
AdaCon~\citep{chen2022contrastive}, and TPDS~\citep{tang2024tpds} are
conventional non-VLM SFDA methods. DAPL~\citep{ge2023domain},
PADCLIP~\citep{lai2023padclip}, ADCLIP~\citep{singha2023adclip},
PDA~\citep{bai2024pda}, and DAMP~\citep{du2024damp} are VLM-guided UDA
methods that retain labeled source data during adaptation.
DIFO~\citep{tang2024difo}, ProDe~\citep{tang2025prode}, and
VSFOT~\citep{han2026vsfot} perform VLM-guided adaptation without source
data, as does \method{}. No compared method uses target labels during
adaptation.

\begin{table}[t]
\centering
\small
\renewcommand{\arraystretch}{1.08}
\setlength{\tabcolsep}{3.0pt}
\begin{adjustbox}{max width=\linewidth}
\begin{tabular}{@{}l c l@{}}
\toprule
Setting & Source & Methods \\
\midrule
Reference
& -- 
& Source, zero-shot CLIP \\

\shortstack[l]{Non-VLM\\SFDA}
& No
& \shortstack[l]{SHOT, NRC, AaD, AdaCon,\\TPDS} \\

\shortstack[l]{VLM-guided\\UDA}
& Yes
& \shortstack[l]{DAPL, PADCLIP, ADCLIP,\\PDA, DAMP} \\

\shortstack[l]{VLM-guided\\SFDA}
& No
& \shortstack[l]{DIFO, ProDe, VSFOT,\\\method{}} \\
\bottomrule
\end{tabular}
\end{adjustbox}
\caption{
Comparison groups and access to labeled source data during
adaptation. A dash denotes a non-adaptation reference.
}
\label{tab:method_categories}
\end{table}

The deployed target architecture of \method{} is ResNet-50 on
Office-31, Office-Home, and DomainNet-126, and ResNet-101 on VisDA-C.
VLM-guided results are grouped by the CLIP image-encoder backbone,
ViT-B/16 or ViT-B/32, and best/second-best markings are assigned only
within the corresponding VLM-backbone group. Thus, ``matched
backbone'' in the main result tables refers specifically to the VLM
backbone. Conventional SFDA methods do not use a VLM and are therefore
not part of this matching. Target-model architectures follow the
reported protocol of each baseline and are stated separately when
reproduction is required; we do not assume that every reported
baseline shares our exact source checkpoint. Source-available
VLM-guided UDA methods are included as informative references but do
not have the same data-access setting as \method{}.

\begin{table*}[t]
\centering
\small
\setlength{\tabcolsep}{4.2pt}
\renewcommand{\arraystretch}{1.10}

\begin{adjustbox}{max width=\linewidth}
\begin{tabular}{@{}L{4.8cm} c L{2.6cm} L{7.2cm}@{}}
\toprule
Result(s)
& VLM
& Provenance
& Evaluation protocol \\
\midrule

Source
& --
& Our evaluation
& Direct target-domain evaluation of our trained source
  checkpoints; no target adaptation. \\

Zero-shot CLIP
& B/32
& Our evaluation
& CLIP ViT-B/32 with the fixed initial prompt; no target
  adaptation. \\

\method{}
& B/16 or B/32
& Our evaluation
& Three independent adaptation runs with seeds 2020, 2021,
  and 2022. \\

\midrule

Conventional SFDA
(SHOT, NRC, AaD, AdaCon, TPDS)
& --
& Original papers
& Values and evaluation protocols follow the corresponding
  cited papers. \\

Source-available VLM-guided UDA
(DAPL, PADCLIP, ADCLIP, PDA, DAMP)
& B/16
& Original papers
& Values and evaluation protocols follow the corresponding
  cited papers. \\

Source-free VLM-guided SFDA
(DIFO, ProDe, VSFOT)
& B/32
& Original papers
& Values follow the corresponding cited papers, except for
  ProDe on DomainNet-126 below. \\

\midrule

ProDe on DomainNet-126
& B/32
& Our reproduction
& One run with seed 2020 using a ResNet-50 target backbone
  for target-backbone matching. \\

\bottomrule
\end{tabular}
\end{adjustbox}

\caption{
Provenance of the results used in the main-text and appendix
comparison tables. Source, zero-shot CLIP, \method{}, and the indicated
ProDe result are evaluated in our pipeline; the remaining values are
taken from the corresponding cited papers. A dash indicates that no
VLM is used.
}
\label{tab:result_provenance}
\end{table*}

\subsection{Sources and Reproduction of Baseline Results}

Table~\ref{tab:result_provenance} summarizes the provenance of the
results used in the main-text and appendix comparisons. The Source and
zero-shot CLIP~\citep{radford2021learning} reference rows are obtained
from our own evaluations rather than copied from prior work. The Source
row directly evaluates our trained source checkpoints on the
corresponding target domains, while zero-shot CLIP uses CLIP ViT-B/32
with the fixed initial prompt and no target adaptation. The reported
\method{} results are likewise obtained from our implementation and,
unless otherwise stated, are averaged over three independent runs.

The remaining baseline results are taken from the corresponding cited
papers and therefore retain their original model-selection, repetition,
and rounding conventions, except for the ResNet-50
ProDe~\citep{tang2025prode} result on DomainNet-126 reproduced below.

For DomainNet-126, the default ProDe~\citep{tang2025prode} result does
not use the same ResNet-50 target backbone as \method{}. We therefore
reproduce ProDe
with ResNet-50, a 256-dimensional bottleneck, and CLIP ViT-B/32. The
reproduction covers all twelve directed transfers, uses the same
independently trained source models as \method{}, and follows a
30-epoch adaptation schedule with batch size 64 and base learning rate
$5\times10^{-3}$. We fix the configuration across all transfers and do
not use target labels for tuning, early stopping, or checkpoint
selection. A single run with seed 2020 yields the 12-task macro average
of 83.9\% reported in the main table.

\section{Complete Benchmark Results and Extensions}
\label{app:complete_results}

This section provides the complete Office-31 results, class-wise
VisDA-C results, run-level variability on Office-Home, additional
source-free comparisons with CLIP ViT-B/16, and the extension to
multiple foundation models. Unless stated otherwise, the result
provenance and comparison conventions follow
Sec.~\ref{app:comparison_protocol}.

\begin{table}[t!]
\centering
\begingroup
\small
\renewcommand{\arraystretch}{1.08}
\setlength{\tabcolsep}{1.0mm}
\begin{adjustbox}{max width=\linewidth}
\begin{tabular}{@{}l | c | c | c | rrrr@{}}
\toprule
\multirow{2}{*}{Method}
& \multirow{2}{*}{Venue}
& \multirow{2}{*}{VLM}
& \multirow{2}{*}{SF}
& \multicolumn{4}{c}{Office-31} \\
\cmidrule(lr){5-8}
& & &
& $\to$A & $\to$D & $\to$W & Avg. \\
\midrule
Source
& -- & -- & --
& 60.6 & 89.0 & 85.9 & 78.5 \\
CLIP
& ICML'21 & B/32 & --
& 77.3 & 81.3 & 79.6 & 79.4 \\
\midrule
SHOT
& ICML'20 & -- & \cmark
& 74.4 & 96.9 & 94.7 & 88.6 \\
NRC
& NeurIPS'21 & -- & \cmark
& 75.2 & 98.0 & 94.9 & 89.4 \\
AaD
& NeurIPS'22 & -- & \cmark
& 75.8 & 98.2 & 95.6 & 89.9 \\
AdaCon
& CVPR'22 & -- & \cmark
& 75.7 & 80.3 & 87.2 & 81.0 \\
TPDS
& IJCV'24 & -- & \cmark
& 75.6 & 98.5 & 96.6 & 90.2 \\
\midrule
DAPL
& TNNLS'23 & \multirow{3}{*}{B/16} & \xmark
& 81.1 & 81.5 & 81.1 & 81.2 \\
PDA
& AAAI'24 & & \xmark
& \underline{83.0} & \underline{95.5}
& \underline{95.1} & \underline{91.2} \\
\rowcolor[gray]{0.92}
\textbf{\method{}}
& -- & & \cmark
& \textbf{83.8} & \textbf{98.4}
& \textbf{96.9} & \textbf{93.0} \\
\midrule
DIFO
& CVPR'24 & \multirow{4}{*}{B/32} & \cmark
& 83.1 & 98.0 & \underline{96.4} & 92.5 \\
ProDe
& ICLR'25 & & \cmark
& 82.8 & \textbf{98.3} & \textbf{96.7}
& \underline{92.6} \\
VSFOT
& CVPR'26 & & \cmark
& \underline{83.4} & \textbf{98.3} & 96.3
& \textbf{92.7} \\
\rowcolor[gray]{0.92}
\textbf{\method{}}
& -- & & \cmark
& \textbf{83.7} & \underline{98.2} & 96.2
& \textbf{92.7} \\
\bottomrule
\end{tabular}
\end{adjustbox}
\endgroup
\caption{Closed-set adaptation results (\%) on Office-31,
aggregated by target domain. Each target-domain score averages the two
directed transfers ending in that domain. ``VLM'' denotes the
image-encoder backbone of the vision-language model, and ``SF''
indicates source-free adaptation. Best and second-best results are
shown in \textbf{bold} and \underline{underlined}, respectively,
within each matched VLM-backbone group.}
\label{tab:supp_office31}
\end{table}

\subsection{Complete Office-31 Results}
\label{app:office31_results}

Table~\ref{tab:supp_office31} reports Office-31 results aggregated by
target domain, where each target-domain score averages the two directed
transfers ending in that domain. This compact presentation highlights
the more challenging transfers to Amazon while retaining the complete
six-transfer average. With CLIP ViT-B/32, \method{} obtains the best
$\to$A result and ties the strongest matched-backbone source-free
baseline in average accuracy at 92.7\%. With CLIP ViT-B/16, \method{}
further reaches 93.0\% and ranks first on all three target-domain
aggregates within the matched-backbone group.

\subsection{Class-Wise Results on VisDA-C}
\label{app:visda_classwise}

Table~\ref{tab:supp_visda_classwise} complements the mean per-class
accuracy in the main paper with results for all 12 VisDA-C categories.
The source-available VLM-guided UDA methods in the ViT-B/16 block report
class-wise results in their original evaluations
\citep{lai2023padclip,singha2023adclip,bai2024pda,du2024damp} and are
included to align the comparison set with the corresponding block of
Table~1 in the main paper. The DIFO and ProDe class-wise values are
reported in their original papers
\citep{tang2024difo,tang2025prode}. VSFOT is omitted because it reports
only the aggregate VisDA-C score \citep{han2026vsfot}. The remaining
conventional SFDA rows follow the provenance summarized in
Sec.~\ref{app:comparison_protocol}.

Within the ViT-B/32 source-free group, \method{} achieves the highest
mean per-class accuracy at 91.4\%. With ViT-B/16, it further reaches
92.0\%, the best matched-backbone mean, and obtains the leading result
on bicycle, car, knife, person, plant, and truck. The class-wise changes
between the two \method{} backbones are not uniform, indicating that the
aggregate gain does not arise from every category improving equally.

\begin{table*}[t]
\centering
\begingroup
\small
\renewcommand{\arraystretch}{1.08}
\setlength{\tabcolsep}{2.5pt}
\begin{adjustbox}{max width=\linewidth}
\begin{tabular}{@{}l | c | c | c | *{13}{c}@{}}
\toprule
\multirow{2}{*}{Method}
& \multirow{2}{*}{Venue}
& \multirow{2}{*}{VLM}
& \multirow{2}{*}{SF}
& \multicolumn{13}{c}{VisDA-C} \\
\cmidrule(lr){5-17}
& & &
& plane & bcycl & bus & car & horse & knife
& mcycl & person & plant & sktbrd & train & truck
& Mean \\
\midrule
SHOT
& ICML'20 & -- & \cmark
& 95.0 & 87.4 & 80.9 & 57.6 & 93.9 & 94.1
& 79.4 & 80.4 & 90.9 & 89.8 & 85.8 & 57.5
& 82.7 \\
NRC
& NeurIPS'21 & -- & \cmark
& 96.8 & 91.3 & 82.4 & 62.4 & 96.2 & 95.9
& 86.1 & 80.6 & 94.8 & 94.1 & 90.4 & 59.7
& 85.9 \\
AaD
& NeurIPS'22 & -- & \cmark
& 97.4 & 90.5 & 80.8 & 76.2 & 97.3 & 96.1
& 89.8 & 82.9 & 95.5 & 93.0 & 92.0 & 64.7
& 88.0 \\
AdaCon
& CVPR'22 & -- & \cmark
& 97.0 & 84.7 & 84.0 & 77.3 & 96.7 & 93.8
& 91.9 & 84.8 & 94.3 & 93.1 & 94.1 & 49.7
& 86.8 \\
TPDS
& IJCV'24 & -- & \cmark
& 97.6 & 91.5 & 89.7 & 83.4 & 97.5 & 96.3
& 92.2 & 82.4 & 96.0 & 94.1 & 90.9 & 40.4
& 87.6 \\
\midrule
PADCLIP
& ICCV'23 & \multirow{5}{*}{B/16} & \xmark
& 98.1 & \underline{93.8} & 87.1 & \underline{85.5}
& 98.0 & 96.0 & 94.4 & \underline{86.0}
& \underline{94.9} & 93.3 & 93.5 & \underline{70.2}
& \underline{90.9} \\
ADCLIP
& ICCVW'23 & & \xmark
& \textbf{99.6} & 92.8 & \textbf{94.0} & 78.6
& \underline{98.8} & 95.4 & \textbf{96.8} & 83.9
& 91.5 & 95.8 & \textbf{95.5} & 65.7
& 90.7 \\
PDA
& AAAI'24 & & \xmark
& \underline{99.2} & 91.1 & \underline{91.9} & 77.1
& 98.4 & 93.6 & \underline{95.1} & 84.9
& 87.2 & \textbf{97.3} & \underline{95.3} & 65.3
& 89.7 \\
DAMP
& CVPR'24 & & \xmark
& 98.7 & 92.8 & 91.7 & 80.1
& \textbf{98.9} & \underline{96.9} & 94.9 & 83.2
& 93.9 & 94.9 & 94.8 & \underline{70.2}
& \underline{90.9} \\
\rowcolor[gray]{0.92}
\textbf{\method{}}
& -- & & \cmark
& 98.6 & \textbf{94.3} & 89.2 & \textbf{85.7}
& 98.1 & \textbf{98.9} & 93.9 & \textbf{86.4}
& \textbf{95.8} & \underline{97.1} & 94.9 & \textbf{71.2}
& \textbf{92.0} \\
\midrule
DIFO
& CVPR'24 & \multirow{3}{*}{B/32} & \cmark
& 97.5 & 89.0 & \textbf{90.8} & \textbf{83.5}
& \underline{97.8} & 97.3
& \underline{93.2} & 83.5 & \textbf{95.2}
& \underline{96.8} & 93.7 & 65.9
& 90.3 \\
ProDe
& ICLR'25 & & \cmark
& \textbf{98.3} & \textbf{92.4} & 86.6 & 80.5
& \textbf{98.1} & \underline{98.0}
& 92.3 & \underline{84.3} & 94.7
& \textbf{97.0} & \underline{94.1} & \textbf{75.6}
& \underline{91.0} \\
\rowcolor[gray]{0.92}
\textbf{\method{}}
& -- & & \cmark
& \textbf{98.3} & \underline{91.7} & \underline{88.6}
& \underline{81.7} & 97.4 & \textbf{98.7}
& \textbf{94.3} & \textbf{85.6} & \underline{95.0}
& 96.2 & \textbf{94.3} & \underline{74.8}
& \textbf{91.4} \\
\bottomrule
\end{tabular}
\end{adjustbox}
\endgroup
\caption{Class-wise closed-set adaptation results (\%) on VisDA-C.
Category abbreviations follow prior work: bcycl, mcycl, and sktbrd
denote bicycle, motorcycle, and skateboard. ``VLM'' denotes the
image-encoder backbone, and ``SF'' indicates source-free adaptation.
Best and second-best values are shown in \textbf{bold} and
\underline{underlined}, respectively, within each matched VLM-backbone
group. Tied second-best values are all underlined.}
\label{tab:supp_visda_classwise}
\end{table*}

\subsection{Run-Level Results and Variability}
\label{app:run_level_results}

Table~\ref{tab:officehome_run_variability} reports the individual runs
of \method{} on all 12 Office-Home transfers using three independent
seeds. All runs use the same CLIP ViT-B/32 configuration,
hyperparameters, and evaluation protocol. The reported standard
deviations are sample standard deviations across the three runs.

\method{} obtains an Office-Home average of
$86.410\pm0.084\%$, with run-level averages ranging from 86.345\% to
86.505\%. Across individual transfers, the sample standard deviation
ranges from 0.040 to 0.370 percentage points, with an average of 0.199
points. Thus, no evaluated transfer has a standard deviation above 0.4
points under this configuration.

The observed seed variation is small relative to the 1.1-point
difference between \method{} and the reported VSFOT average in the
matched ViT-B/32 comparison. This comparison characterizes empirical
stability rather than formal statistical significance, because only
three \method{} runs are available and the competing method was not
evaluated through paired repeated trials in the same implementation.

\begin{table}[t]
\centering
\small
\renewcommand{\arraystretch}{1.06}
\setlength{\tabcolsep}{3.2pt}
\begin{adjustbox}{max width=\linewidth}
\begin{tabular}{@{}lrrrrr@{}}
\toprule
Transfer & Seed 2020 & Seed 2021 & Seed 2022 & Mean & Std. \\
\midrule
A$\to$C & 76.480 & 76.230 & 76.550 & 76.420 & 0.168 \\
A$\to$P & 92.630 & 92.660 & 92.540 & 92.610 & 0.062 \\
A$\to$R & 92.026 & 91.757 & 92.307 & 92.030 & 0.275 \\
C$\to$A & 85.283 & 85.943 & 85.904 & 85.710 & 0.370 \\
C$\to$P & 92.220 & 92.400 & 92.520 & 92.380 & 0.151 \\
C$\to$R & 91.977 & 91.676 & 91.957 & 91.870 & 0.168 \\
P$\to$A & 83.993 & 84.324 & 84.643 & 84.320 & 0.325 \\
P$\to$C & 76.034 & 76.103 & 76.103 & 76.080 & 0.040 \\
P$\to$R & 91.887 & 91.376 & 91.927 & 91.730 & 0.307 \\
R$\to$A & 85.543 & 85.493 & 85.704 & 85.580 & 0.110 \\
R$\to$C & 75.490 & 75.720 & 75.200 & 75.470 & 0.261 \\
R$\to$P & 92.580 & 92.880 & 92.700 & 92.720 & 0.151 \\
\midrule
\textbf{Avg.}
& \textbf{86.345}
& \textbf{86.380}
& \textbf{86.505}
& \textbf{86.410}
& \textbf{0.084} \\
\bottomrule
\end{tabular}
\end{adjustbox}
\caption{Run-level target accuracies (\%) of \method{} with CLIP
ViT-B/32 on Office-Home. Mean and sample standard deviation are
computed over three independent runs using the unrounded values.
``Avg.'' macro-averages the 12 transfers separately within each run.}
\label{tab:officehome_run_variability}
\end{table}

\subsection{Additional Source-Free Comparisons with ViT-B/16}
\label{app:v16_source_free}

The main paper compares VLM-guided methods within matched CLIP-backbone
groups. Table~\ref{tab:supp_v16_source_free} further isolates
source-free methods evaluated with CLIP ViT-B/16. The DIFO and ProDe
values are taken from the ViT-B/16 evaluations reported by
ProDe~\citep{tang2025prode}; this DIFO configuration was not reported in
the original DIFO paper. All rows use a ResNet-50 target model, except
ResNet-101 on VisDA-C. \method{} improves over the strongest prior
source-free result by 0.8, 1.5, and 0.3 points on Office-31,
Office-Home, and VisDA-C, respectively, while matching ProDe on
DomainNet-126.

\begin{table}[t]
\centering
\begingroup
\small
\renewcommand{\arraystretch}{1.08}
\setlength{\tabcolsep}{3.4pt}
\begin{adjustbox}{max width=\linewidth}
\begin{tabular}{@{}lccccc@{}}
\toprule
Method & Venue & O-31 & O-Home & VisDA-C & DN-126 \\
\midrule
DIFO
& CVPR'24 & \underline{92.2} & 85.5 & 91.0 & -- \\
ProDe
& ICLR'25 & \underline{92.2} & \underline{86.9}
& \underline{91.7} & \textbf{88.1} \\
\rowcolor[gray]{0.92}
\textbf{\method{}}
& -- & \textbf{93.0} & \textbf{88.4}
& \textbf{92.0} & \textbf{88.1} \\
\bottomrule
\end{tabular}
\end{adjustbox}
\endgroup
\caption{Additional source-free VLM-guided comparisons using CLIP
ViT-B/16. O-31, O-Home, and DN-126 denote Office-31, Office-Home, and
DomainNet-126, respectively. Best and second-best values are shown in
\textbf{bold} and \underline{underlined}; a dash indicates that the
corresponding result was not reported.}
\label{tab:supp_v16_source_free}
\end{table}

\begin{table*}[t]
\centering
\begingroup
\small
\renewcommand{\arraystretch}{1.08}
\setlength{\tabcolsep}{4.0pt}
\begin{adjustbox}{max width=\linewidth}
\begin{tabular}{@{}l | *{13}{c}@{}}
\toprule
Method
& A$\to$C & A$\to$P & A$\to$R
& C$\to$A & C$\to$P & C$\to$R
& P$\to$A & P$\to$C & P$\to$R
& R$\to$A & R$\to$C & R$\to$P
& Avg. \\
\midrule
CoMA
& 83.8 & 95.6 & 94.2
& 89.1 & \underline{95.9} & 93.7
& 89.2 & 83.9 & 93.6
& 89.4 & 84.6 & \underline{95.8}
& 90.7 \\
\rowcolor[gray]{0.92}
\textbf{\method{} (frozen)}
& \underline{86.6} & \underline{95.8} & \underline{94.8}
& \textbf{90.2} & 95.7 & \underline{94.5}
& \underline{90.1} & \underline{86.6} & \underline{94.6}
& \underline{90.2} & \underline{86.3} & 95.7
& \underline{91.8} \\
\rowcolor[gray]{0.92}
\textbf{\method{} (text adapter)}
& \textbf{87.0} & \textbf{96.1} & \textbf{94.9}
& \underline{89.8} & \textbf{96.0} & \textbf{94.7}
& \textbf{90.2} & \textbf{87.0} & \textbf{94.8}
& \textbf{90.6} & \textbf{87.0} & \textbf{96.0}
& \textbf{92.0} \\
\bottomrule
\end{tabular}
\end{adjustbox}
\endgroup
\caption{Extension to multiple foundation models on Office-Home
(accuracy, \%). All methods use a ResNet-50 target model and combine
CLIP ViT-B/32 with InstructBLIP. ``Frozen'' fixes the caption-derived
classification proxy, while ``text adapter'' updates only the residual
adapter on the class-name embeddings. Best and second-best values are
shown in \textbf{bold} and \underline{underlined}, respectively.}
\label{tab:supp_multiple_fm}
\end{table*}

\subsection{Extension to Multiple Foundation Models}
\label{app:multiple_foundation_models}

We extend \method{} from two heterogeneous experts to a configuration
with a source-initialized ResNet-50 target branch, CLIP ViT-B/32, and
\texttt{Salesforce/\allowbreak instructblip-flan-t5-xl}~\citep{dai2023instructblip}. The
reliability scores are normalized jointly over the three active
predictions, after which consensus construction and CSM retain the same
forms as in the two-expert setting. This experiment evaluates whether
the shared-consensus formulation can incorporate an additional
foundation model whose output is not natively expressed as
classification logits.

InstructBLIP first generates a caption for each target image. We encode
the caption and class names with
\texttt{thenlper/\allowbreak gte-base}~\citep{li2023generaltextembeddings}; their cosine similarities,
scaled by 100, form the InstructBLIP classification proxy. In the
\emph{frozen} variant, the captions, GTE embeddings, and resulting
proxy logits remain fixed. In the \emph{text-adapter} variant, the
caption embeddings remain fixed, while a 256-dimensional residual
adapter is applied to the class-name embeddings and optimized with the
same consensus-alignment objective.

Table~\ref{tab:supp_multiple_fm} compares the two extensions with
CoMA~\citep{lee2025collaborativelearningmultiplefoundation}, which also
combines CLIP ViT-B/32 and InstructBLIP on Office-Home. The frozen
extension reaches 91.8\%, indicating that the additional foundation
model can be incorporated without updating its proxy. The lightweight
text adapter provides a further 0.2-point gain, reaching 92.0\% and the
best result on 11 of the 12 transfers; the frozen variant remains best
on C$\to$A.

\section{Additional Ablations}
\label{sec:additional_ablations}

This section complements the ablations in the main paper from three
perspectives. Section~\ref{sec:additional_topologies} evaluates CSM under
partial branch re-aggregation, Sec.~\ref{sec:additional_consensus_construction}
extends the initial-consensus comparison to all four benchmarks, and
Sec.~\ref{sec:objective_component_ablations} decomposes the complete training
objective. Unless otherwise specified, all experiments use CLIP ViT-B/32 and
follow the benchmark configurations and reporting protocols described in
Secs.~\ref{app:datasets_protocols} and~\ref{app:implementation_details}.

\subsection{CSM across Re-aggregation Topologies}
\label{sec:additional_topologies}

Table~\ref{tab:additional_topologies} extends the component ablation in the
main paper by applying CSM to the two partial re-aggregation variants. In
these variants, either the current target prediction or the current VLM
prediction is combined with the other branch's initial prediction; the joint
variant uses the current predictions of both branches.

CSM improves every re-aggregation topology. It raises target-only
re-aggregation by $0.5$/$0.6$ points on Office-Home/DomainNet-126 and
VLM-only re-aggregation by $0.2$/$0.4$ points. Applied to joint
re-aggregation, CSM further improves $85.8/84.6\%$ to $86.4/85.1\%$.
Thus, its benefit is not specific to the simultaneous evolution of both
branches.

Joint re-aggregation nevertheless remains beneficial. Full \method{}
outperforms the target-only variant with CSM by $0.5/1.1$ points and the
VLM-only variant with CSM by $1.1/1.8$ points. The results separate the two
effects: CSM improves the anchor-relative evolution induced by each topology,
while jointly re-aggregating both current branches provides the strongest
consensus supervision.

\begin{table}[t]
\centering
\small
\renewcommand{\arraystretch}{1.08}
\setlength{\tabcolsep}{2.0pt}
\begin{adjustbox}{max width=\linewidth}
\begin{tabular}{@{}lccc cc@{}}
\toprule
& \multicolumn{2}{c}{Re-aggregation}
& & \multicolumn{2}{c}{Target Acc.} \\
\cmidrule(lr){2-3}\cmidrule(l){5-6}
Setting
& Target & VLM & CSM
& \shortstack{Office-\\Home}
& \shortstack{DomainNet-\\126} \\
\midrule
VLM $\rightarrow$ Target
& -- & -- & -- & 81.2 & 80.3 \\
VLM $\leftrightarrow$ Target
& -- & -- & -- & 84.8 & 82.0 \\
\midrule
Fixed consensus
& \xmark & \xmark & \xmark & 82.6 & 80.8 \\
\multirow{2}{*}{Target re-agg.}
& \cmark & \xmark & \xmark & 85.4 & 83.4 \\
& \cmark & \xmark & \cmark & 85.9 & 84.0 \\
\multirow{2}{*}{VLM re-agg.}
& \xmark & \cmark & \xmark & 85.1 & 82.9 \\
& \xmark & \cmark & \cmark & 85.3 & 83.3 \\
Joint re-agg.
& \cmark & \cmark & \xmark & 85.8 & 84.6 \\
\midrule
\rowcolor[gray]{0.92}
\textbf{\method{}}
& \cmark & \cmark & \cmark
& \textbf{86.4} & \textbf{85.1} \\
\bottomrule
\end{tabular}
\end{adjustbox}
\caption{
Extended ablation of supervision topology, branch re-aggregation, and CSM
using CLIP ViT-B/32 (12-shift average target accuracies, \%). For the
shared-consensus variants, a checkmark indicates that the current prediction
of the corresponding branch enters re-aggregation; a cross indicates that
its initial prediction is used. Dashes denote directed-transfer baselines for
which re-aggregation and CSM are not applicable.
}
\label{tab:additional_topologies}
\end{table}

\begin{table*}[t]
\centering
\small
\renewcommand{\arraystretch}{1.08}
\setlength{\tabcolsep}{8.0pt}
\begin{adjustbox}{max width=\linewidth}
\begin{tabular}{@{}llcccc@{}}
\toprule
Predictor / fusion & Weighting
& Office-31 & Office-Home & VisDA-C & DomainNet-126 \\
\midrule
Source expert & -- & 78.50 & 59.30 & 49.00 & 54.70 \\
VLM expert    & -- & 79.40 & 79.10 & 87.30 & 80.20 \\
\midrule
\multirow{5}{*}{PoE}
& Uniform               & 87.24 & 81.16 & 83.87 & 80.77 \\
& Entropy (scale-norm.) & 86.47 & 77.53 & 80.39 & 76.75 \\
& Entropy (rank-norm.)  & 86.64 & 79.73 & 81.93 & 78.97 \\
& Entropy (T-aligned)   & 86.64 & 78.08 & 79.66 & 77.28 \\
\rowcolor[gray]{0.92}
& \textbf{Entropy (raw)}
& \textbf{87.31} & \textbf{81.58}
& \textbf{84.54} & \textbf{81.61} \\
\bottomrule
\end{tabular}
\end{adjustbox}
\caption{
Initial predictor and consensus accuracies (\%) using CLIP ViT-B/32. All
entries are evaluated before target adaptation. Office-31, Office-Home, and
DomainNet-126 are averaged over 6, 12, and 12 directed transfers,
respectively; VisDA-C reports mean per-class accuracy. ``Scale-norm.'',
``rank-norm.'', and ``T-aligned'' denote global logit-scale normalization,
within-expert entropy-rank normalization, and label-free mean-entropy
alignment, respectively. Bold marks the best PoE result, and the gray row is
the allocation used by \method{}.
}
\label{tab:initial_consensus_benchmarks}
\end{table*}

\subsection{Initial Consensus Construction across Benchmarks}
\label{sec:additional_consensus_construction}

Table~\ref{tab:initial_consensus_benchmarks} extends the initial-consensus
analysis to all four benchmarks. Because all entries are evaluated before
target adaptation, this experiment isolates consensus construction from the
subsequent optimization trajectory. The PoE variants share the same
centered-logit pooling and differ only in the sample-wise allocation cue.

Besides uniform weighting and the raw expert entropy used by \method{}, we
evaluate three label-free controls for heterogeneous expert scales.
Scale normalization applies global logit-scale normalization before entropy
computation; rank normalization replaces each expert's entropy by its
within-expert target-set rank; and temperature alignment selects temperatures
that match the experts' target-set mean entropy. Target labels are used only
to evaluate the resulting initial predictions.

Raw expert entropy yields the best PoE consensus on every benchmark. Relative
to uniform PoE, it improves initial accuracy by $0.07$, $0.42$, $0.67$, and
$0.84$ points on Office-31, Office-Home, VisDA-C, and DomainNet-126,
respectively. It also exceeds the stronger individual expert on three
benchmarks. On VisDA-C, the raw-entropy PoE remains $2.76$ points below the
VLM expert, but it is still the strongest among the evaluated PoE variants.

Among the scale-controlled alternatives, rank normalization performs best,
but remains $0.67$--$2.64$ points below raw entropy. Global scale
normalization and mean-entropy alignment lead to larger reductions on most
benchmarks. These results do not treat raw entropy as calibrated correctness;
they show that, among the evaluated label-free cues, the experts' native
predictive concentration provides the most effective relative allocation for
the initial consensus.

\subsection{Training-Objective Ablations}
\label{sec:objective_component_ablations}

Table~\ref{tab:objective_component_ablations} decomposes the objective in
Eq.~(7) of the main paper into VLM-side IIC alignment (V-IIC), target-side IIC
alignment (T-IIC), hard-consensus cross-entropy, and target-side prediction
diversity. Consensus construction, branch re-aggregation, and CSM are kept
unchanged; only the indicated loss terms are removed.

All four terms contribute to the final result. Removing V-IIC and T-IIC
reduces the Office-Home average by $1.37$ and $2.09$ points, respectively,
indicating that alignment of both evolving branches with the shared consensus
is beneficial. Removing prediction diversity causes a smaller
$0.87$-point reduction.

Hard-consensus classification provides the largest individual contribution:
without it, accuracy drops by $6.75$ points. The soft-alignment-only variant
reaches $77.91\%$, whereas hard CE alone obtains $84.92\%$. Nevertheless, the
full objective exceeds the hard-CE-only variant by $1.49$ points. Hard
consensus decisions therefore provide the primary instance-level signal,
while branch-wise soft alignment and prediction diversity offer complementary
collaborative and distribution-level regularization.

\begin{table}[t]
\centering
\small
\renewcommand{\arraystretch}{1.08}
\setlength{\tabcolsep}{2.7pt}
\begin{adjustbox}{max width=\linewidth}
\begin{tabular}{@{}lccccr@{}}
\toprule
Setting & V-IIC & T-IIC & Hard CE & Div. & Acc. \\
\midrule
\rowcolor[gray]{0.92}
\textbf{Full objective}
& \cmark & \cmark & \cmark & \cmark
& \textbf{86.41} \\
w/o V-IIC
& \xmark & \cmark & \cmark & \cmark
& 85.04 \\
w/o T-IIC
& \cmark & \xmark & \cmark & \cmark
& 84.32 \\
w/o Hard CE
& \cmark & \cmark & \xmark & \cmark
& 79.66 \\
w/o Diversity
& \cmark & \cmark & \cmark & \xmark
& 85.54 \\
\midrule
Soft alignment only
& \cmark & \cmark & \xmark & \xmark
& 77.91 \\
Hard CE only
& \xmark & \xmark & \cmark & \xmark
& 84.92 \\
\bottomrule
\end{tabular}
\end{adjustbox}
\caption{
Ablation of the \method{} training objective on Office-Home using CLIP
ViT-B/32 (12-shift average target accuracy, \%). V-IIC and T-IIC denote
VLM-side and target-side IIC alignment with the shared consensus; Hard CE
uses the hard consensus decision, and Div. denotes target-side prediction
diversity.
}
\label{tab:objective_component_ablations}
\end{table}

\section{Extended Mechanism Analysis}
\label{sec:extended_mechanism}

This section provides two complementary diagnostics on Office-Home
A$\to$C. We first examine the discriminability of the adapted target
features and then evaluate sensitivity to the objective coefficients.
These experiments are intended for post-hoc analysis and do not use
target labels during adaptation or model selection.

\subsection{Target Feature Discriminability}
\label{sec:feature_discriminability}

We compare Source, SHOT, DIFO, ProDe, \method{} w/o CSM, and the full
\method{}. SHOT is adapted for 15 epochs, whereas DIFO, ProDe, and the
two \method{} variants use 30 epochs. The full \method{} panel uses the
canonical target-set entropy ranking adopted in the main experiments.
For each method, we extract the 512-dimensional target bottleneck features
immediately before the classifier, apply
$\ell_2$ normalization, and use the same deterministic evaluation
pipeline. The t-SNE visualization~\citep{vandermaaten2008visualizing} uses the same samples from ten fixed
classes for every method, whereas the accuracy and feature-space metric
reported in each panel are computed over all 65 classes and all 4,365
target samples. Each t-SNE projection is fitted independently and is
therefore used only to inspect within-panel class mixing and local
separation; its coordinates, scale, and global shape are not comparable
across methods. The panel values are single-run diagnostics with seed 2020 rather than
the three-run averages in the main comparison tables. The feature-analysis
checkpoints and the efficiency-profiling runs in
Table~\ref{tab:computational_efficiency} were produced in separate
experimental executions; their small single-run accuracy differences are
therefore not used for cross-table comparisons.

\begin{figure*}[t]
    \centering
    \includegraphics[width=\textwidth]
    {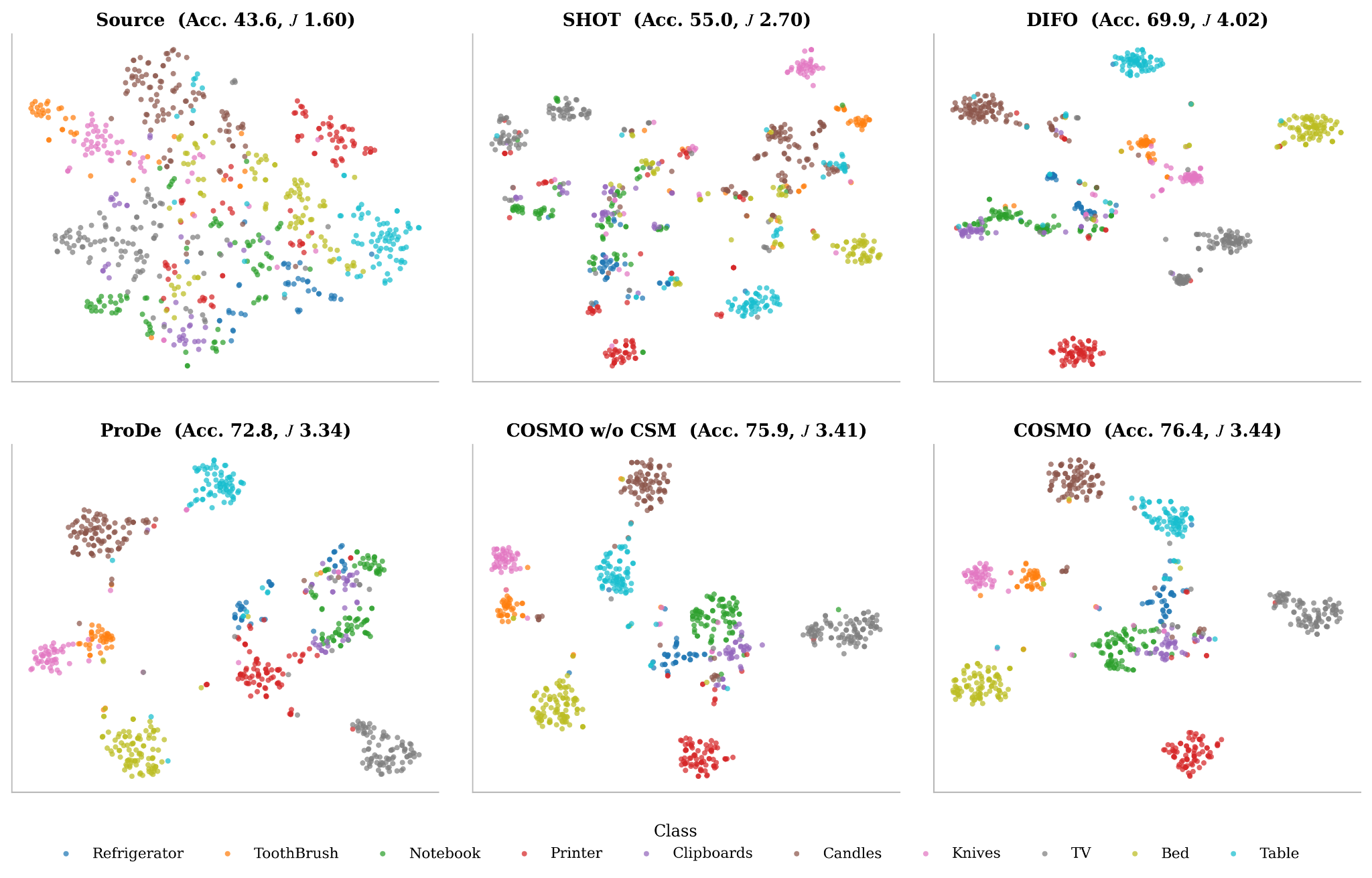}
    \caption{
    Target-feature visualization on Office-Home A$\to$C. Each panel
    shows an independently fitted t-SNE projection of the same target
    samples from ten fixed classes. Titles report target accuracy and
    the discriminability ratio $J$, both evaluated using all 65 classes.
    The ratio $J$ is computed in the original normalized
    512-dimensional feature space rather than in the t-SNE projection.
    Results use seed 2020; the full \method{} panel uses canonical
    target-set ranking. These diagnostic runs were conducted separately
    from the efficiency profiling in
    Table~\ref{tab:computational_efficiency}.
    }
    \label{fig:feature_discriminability}
\end{figure*}

Let $\boldsymbol{\mu}_k$ denote the normalized feature centroid of class
$k$. We measure within-class dispersion and between-class separation by
\begin{equation}
\begin{aligned}
D_{\mathrm{intra}}
&=
\frac{1}{K}\sum_{k=1}^{K}\frac{1}{n_k}
\sum_{i:y_i=k}
\left[1-\cos(\mathbf f_i,\boldsymbol{\mu}_k)\right],\\
D_{\mathrm{inter}}
&=
\frac{2}{K(K-1)}
\sum_{k<l}
\left[1-\cos(\boldsymbol{\mu}_k,\boldsymbol{\mu}_l)\right],
\\
J&=\frac{D_{\mathrm{inter}}}{D_{\mathrm{intra}}}.
\end{aligned}
\label{eq:feature_discriminability}
\end{equation}
A larger $J$ indicates greater inter-class separation relative to
within-class dispersion. This metric is used only to describe the
learned feature geometry and is not optimized during adaptation.

Figure~\ref{fig:feature_discriminability} shows that the unadapted
Source representation has substantial class overlap, with 43.6\%
accuracy and $J=1.60$. SHOT improves both quantities to 55.0\% and
2.70, but several selected classes remain locally mixed. DIFO produces
the largest $J$ of 4.02 and visibly compact clusters, yet reaches only
69.9\% accuracy; ProDe attains 72.8\% with $J=3.34$. The mismatch
between DIFO's high $J$ and its lower classification accuracy indicates
that compact class geometry alone does not determine the quality of the
deployed classifier. Well-separated feature clusters may still be
imperfectly aligned with the classifier decision regions.

The consensus-based variants achieve the highest target accuracies.
\method{} w/o CSM reaches 75.9\% with $J=3.41$, while the full method
further improves these values to 76.4\% and 3.44. Their projections
exhibit comparatively compact class-specific regions with less local
mixing than Source and SHOT. Compared with ProDe, \method{} does not
achieve the largest $J$, but it delivers higher target accuracy
together with slightly better class-discriminability statistics. The
smaller improvement from adding CSM suggests that it refines the
discriminative organization established by dynamic consensus learning
rather than fundamentally reorganizing the feature space. Overall, the
results support improved target-feature discriminability under
\method{}, while also showing that neither a two-dimensional projection
nor a single clustering statistic should be treated as a substitute for
target classification accuracy.

\subsection{Sensitivity to Objective Weights}
\label{sec:objective_sensitivity}

We further evaluate the coefficients in the training objective on
Office-Home A$\to$C. For this diagnostic, we write the total loss as
$\mathcal{L}_{t}+\lambda_v\mathcal{L}_{v}$, where the objective in
Eq.~(7) of the main paper corresponds to $\lambda_v=1$. Each of
$\lambda_v$, $\alpha$, $\beta$, and $\delta$ is varied independently
while all remaining coefficients are fixed at their defaults. We use
relative scales
$\{0.50,0.75,1.00,1.25,1.50\}$; the corresponding default coefficients
are $\lambda_v=1.00$, $\alpha=1.30$, $\beta=0.40$, and $\delta=1.00$.
Thus, for example, the sweep for $\alpha$ is
$\{0.65,0.975,1.30,1.625,1.95\}$. This analysis is not used for
task-specific hyperparameter selection.

\begin{figure}[t]
    \centering
    \includegraphics[width=\columnwidth]
    {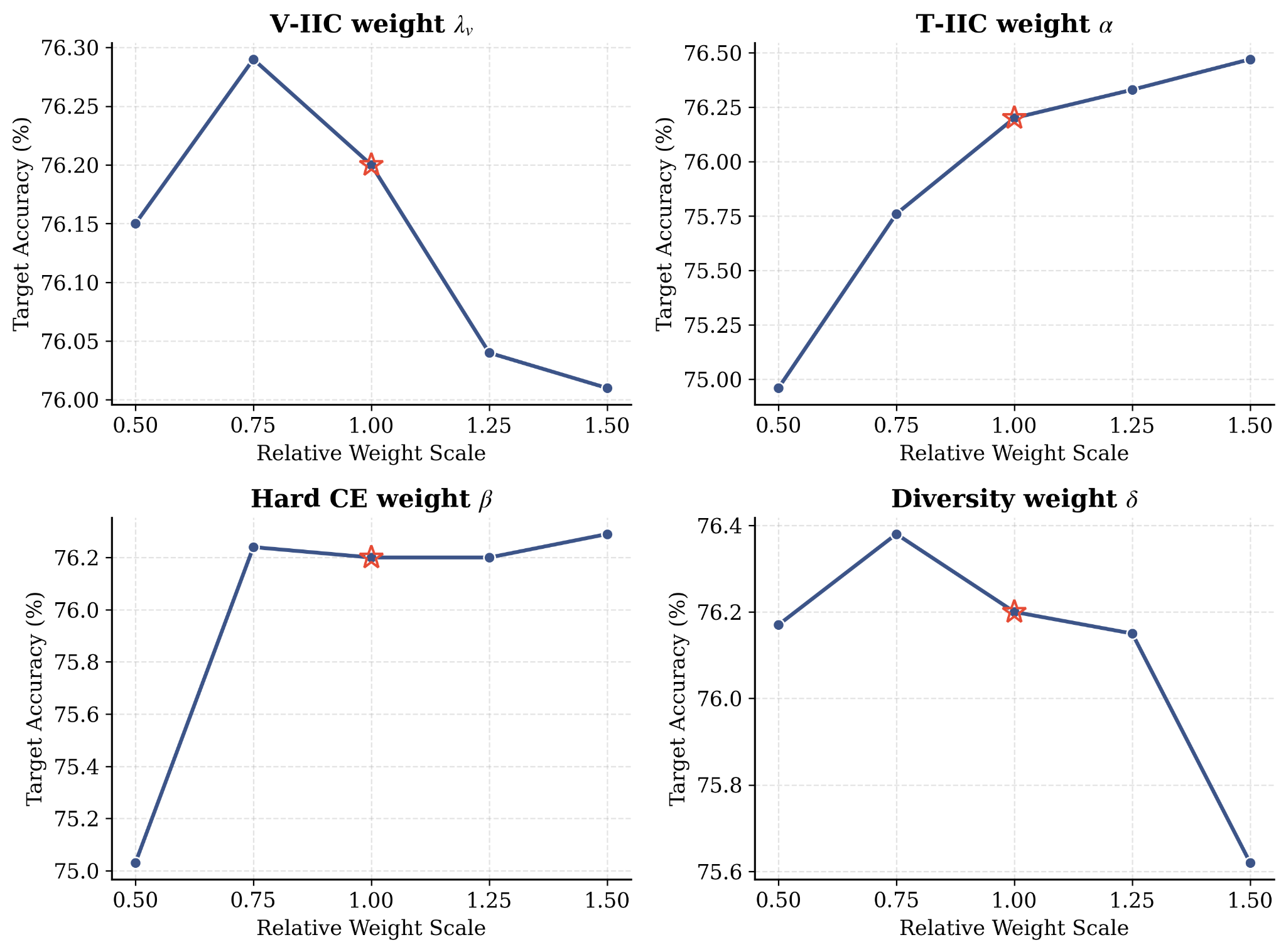}
    \caption{
    One-at-a-time sensitivity of target accuracy to the objective
    weights on Office-Home A$\to$C. The horizontal axis gives the scale
    applied to each coefficient's default value, and stars mark the
    default configuration. All other coefficients remain fixed.
    }
    \label{fig:objective_sensitivity}
\end{figure}

As shown in Fig.~\ref{fig:objective_sensitivity}, the VLM-side IIC
weight $\lambda_v$ is stable around its default: all settings remain
between 76.01\% and 76.29\%, and the default is only 0.09 points below
the best value. The hard-consensus CE weight $\beta$ also exhibits a
broad plateau from $0.75\times$ to $1.50\times$ its default, where the
accuracy varies only from 76.20\% to 76.29\%; a substantially smaller
weight of $0.50\times$ causes a clearer drop to 75.03\%. The diversity
weight $\delta$ remains near 76.2\% over scales from $0.50\times$ to
$1.25\times$, but excessive weighting at $1.50\times$ reduces accuracy
to 75.62\%.

The target-side IIC coefficient $\alpha$ shows the clearest directional
trend. Down-weighting it reduces accuracy to 74.96\% at $0.50\times$,
whereas larger values gradually improve performance to 76.47\% at
$1.50\times$. The default remains within 0.27 points of the best
A$\to$C setting and is kept fixed across all tasks and benchmarks rather
than tuned for this transfer. Taken together, the sweeps show that
\method{} is broadly insensitive to moderate changes in
$\lambda_v$, $\beta$, and $\delta$ around their defaults, while
sufficient target-side soft alignment is more important for this
particular transfer.

\begin{table*}[t]
\centering
\small
\renewcommand{\arraystretch}{1.10}
\setlength{\tabcolsep}{5.2pt}
\begin{adjustbox}{max width=\linewidth}
\begin{tabular}{@{}l c c c c c@{}}
\toprule
Method
& Target Acc. (\%)
& Total Time
& Time/Epoch (s)
& \shortstack{Effective Throughput\\(sample/s)}
& Peak Memory (GiB) \\
\midrule
SHOT
& 54.91 & 11.37 min & 22.74 & 191.9 & 23.60 \\
ProDe
& 72.85 & 23.96 min & 47.92 & 91.1 & 24.20 \\
\method{} w/o CSM
& 75.76 & 24.50 min & 48.99 & 89.1 & 24.21 \\
\rowcolor[gray]{0.92}
\method{} (target-set rank)
& 76.29 & 28.11 min & 56.21 & 77.7 & 24.21 \\
\method{} (batch-local rank)
& 76.43 & 24.07 min & 48.14 & 90.7 & 24.21 \\
\bottomrule
\end{tabular}
\end{adjustbox}
\caption{
Training efficiency on Office-Home A$\to$C. All entries use 30 epochs, target batch size 64, seed 2020,
and the same A100-PCIE-40GB GPU. ProDe and the \method{}
variants use CLIP ViT-B/32. The target-set-rank row is the canonical
\method{} configuration used in the main experiments; batch-local rank
is a practical approximation that computes entropy ranks within each
mini-batch. Effective throughput is the target-set size divided by the
wall-clock time per optimization epoch, including the rank-refresh scan when
applicable. Total time excludes one-time CLIP feature-cache construction.
}
\label{tab:computational_efficiency}
\end{table*}

\section{Computational Efficiency}
\label{sec:computational_efficiency}

We analyze the training cost of \method{} on Office-Home A$\to$C and
compare it with SHOT and ProDe. All methods are run for 30 epochs on a single NVIDIA
A100-PCIE-40GB GPU with target batch size 64 and seed 2020.
The VLM-based methods use CLIP ViT-B/32, and ProDe uses
a source-bank batch size of 64.
The reported time excludes the one-time construction of the CLIP
feature cache. Because these are single-run measurements, small
accuracy differences should not be interpreted as statistically
meaningful.

\paragraph{Computational complexity.}
For a mini-batch of size $B$ and $K$ classes, entropy-conditioned PoE
construction, consensus modulation, and the associated sample-wise
post-processing require $\mathcal{O}(BK)$ time and memory and introduce no
additional trainable module. The canonical target-set ranking additionally
performs one no-update forward pass of the current target and VLM branches
over all $N$ target samples per epoch. Once these predictions are available,
consensus construction, entropy computation, and sorting require
$\mathcal{O}(NK+N\log N)$ post-processing. The cached sample-wise logit
states require $\mathcal{O}(NK)$ storage; on A$\to$C, one float32 bank of
size $4{,}365\times65$ occupies only about 1.08 MiB.

Table~\ref{tab:computational_efficiency} shows that the core
consensus operations add negligible memory and little per-iteration
cost. Relative to ProDe, \method{} with batch-local ranking increases
time per epoch from 47.92 to 48.14 seconds, corresponding to only
0.46\% additional time, while improving target accuracy by 3.58
points. The peak-memory measurements are effectively identical after
rounding; the unrounded logs differ by approximately 7 MiB. The
w/o-CSM variant is similarly close to ProDe in both training time and
memory, indicating that PoE construction and sample-wise modulation are
not the main computational bottlenecks.

The additional cost of canonical \method{} mainly comes from the
epoch-level target-set scan used to refresh the global entropy ranks.
Replacing it with batch-local ranks reduces time per epoch from 56.21
to 48.14 seconds, a 14.36\% reduction, and increases effective throughput from
77.7 to 90.7 samples/s, a 16.73\% improvement. Its single-run accuracy
is 0.14 points higher than the canonical variant; this small difference
is not treated as evidence that batch-local ranking is more accurate.
The result instead shows that a local approximation can recover nearly
the same adaptation behavior when lower training cost is preferred.

CSM itself provides a positive accuracy contribution under both ranking
schemes: target-set ranking improves the w/o-CSM result from 75.76\% to
76.29\%, while batch-local ranking reaches 76.43\%. SHOT is faster
because it optimizes only the target branch, whereas ProDe and
\method{} jointly update a target model and a prompt branch. At
inference, \method{} discards the VLM branch, cached consensus states,
and CSM machinery, retaining only the adapted target model; it
therefore introduces no additional inference-time parameters or
computation.

\clearpage
\bibliographystyle{plainnat}
\bibliography{references}
\end{document}